\documentclass{article}

\usepackage[preprint]{neurips_2026}

\usepackage[utf8]{inputenc} 
\usepackage[T1]{fontenc}    
\usepackage{url}            
\usepackage{booktabs}       
\usepackage{amsfonts}       
\usepackage{nicefrac}       
\usepackage{xcolor}         

\usepackage{algorithm}
\usepackage{algpseudocode}

\usepackage{graphicx}
\usepackage{subcaption}

\usepackage{amsmath,amssymb,amsthm,mathtools}
\usepackage{bbm}

\usepackage{tabularx}
\usepackage{multirow}
\usepackage{siunitx}

\usepackage{thmtools}
\usepackage{thm-restate}

\declaretheorem[name=Lemma]{lemma}

\declaretheorem[name=Corollary]{corollary}
\newtheorem{Auxiliary Lemma}{Auxiliary Lemma}
\newtheorem{proposition}{Proposition}

\theoremstyle{definition}
\newtheorem{assumption}{Assumption}
\newtheorem{definition}{Definition}
\theoremstyle{remark}
\newtheorem{remark}{Remark}

\usepackage{hyperref} 
\usepackage{cleveref}

\newcommand{\R}{\mathbb{R}}
\newcommand{\E}{\mathcal{E}}
\newcommand{\V}{\mathcal{V}}
\newcommand{\Hcal}{\mathcal{H}}

\newcommand{\supp}{\operatorname{supp}}
\newcommand{\vol}{\operatorname{vol}}

\newcommand{\diag}{\operatorname{diag}}

\title{Thresholded Local Hyper-Flow Diffusion}

\author{
  Meher Chaitanya\thanks{Corresponding author.} \\
  KTH Royal Institute of Technology\\
  Stockholm, Sweden 114 28\\
  \texttt{mcpi@kth.se}\\
  \And
  Sebastian Dalleiger\\
  KTH Royal Institute of Technology\\
  Stockholm, Sweden 114 28\\
  \texttt{sdalle@kth.se}\\
  \AND
  Luana Ruiz\\
  Johns Hopkins University\\
  Baltimore, MD 21210\\
  \texttt{lrubini1@jh.edu}\\
}

\begin{document}

\maketitle
\begin{abstract}
Local Hyper-Flow Diffusion (HFD) \citep{fountoulakis2021local} gives an edge-size-independent Cheeger-type guarantee for seeded clustering in general submodular hypergraphs, but existing HFD solvers do not keep intermediate computation local at every iteration. We introduce \textsc{Thresholded Local HFD} (TL-HFD), a first-order method that maintains an active region around the seeds, performs projected subgradient updates on that region and its immediate boundary, and expands via thresholded (top-$k$) boundary activation. We prove that the local update is exact: the degree-preconditioned projected subgradient step restricted to the active region and its boundary coincides with the unrestricted global update. We establish finite-time dual suboptimality for both exact and thresholded updates, treating the latter as \emph{inexact} projected subgradient steps with explicit skipped-boundary error. We further derive an additive activated-volume bound controlled by realized local subgradient norms and the minimum boundary-push among newly activated vertices, and translate approximate dual optimality with localized support into a robust sweep-cut guarantee for early-stopped iterates. For general submodular cut-costs, each iteration is local in the scanned region and oracle-sensitive in the hyperedge primitive. Empirically, TL-HFD often matches or improves over HFD while activating less volume, with the largest gains on noisy instances where diffusion tends to absorb non-target vertices.
\end{abstract}

\section{Introduction} 
Many real-world systems involve higher-order interactions --- such as co-purchasing users, co-expressed genes, and jointly-consumed items --- for which hypergraphs provide a natural model by allowing each hyperedge to connect an arbitrary subset of vertices \citep{benson2016higher, zhou2006learning, agarwal2006higher}. A central problem is \emph{local clustering}: given a small seed set, identify a nearby low-conductance cluster without computing a global partition. While diffusion-based methods such as approximate Pagerank provide a canonical solution in graphs \citep{andersen2006local, chung2007heat, spielman2004nearly}, extending this paradigm to hypergraphs is more subtle because cutting a hyperedge is inherently ambiguous, with penalties ranging from cardinality-based costs to general submodular splitting functions \citep{lawler1973cutsets, li2018submodular, yoshida2019cheeger}. Local Hyper-Flow Diffusion (HFD) \citep{fountoulakis2021local} addresses this by formulating seeded clustering in general submodular hypergraphs as a convex primal--dual program with an edge-size-independent Cheeger-type guarantee. Its primal also admits an alternating-minimization solver that is strongly local with respect to the solution vector, making HFD a compelling foundation for local hypergraph clustering under arbitrary submodular splitting costs.

This leaves a natural algorithmic question within the HFD framework: can  diffusion be optimized by updates whose locality is explicit at every iteration, rather than only reflected in the sparsity of the final solution? HFD is typically optimized through a primal alternating-minimization procedure \citep{fountoulakis2021local}, which is effective in practice but does not guarantee that intermediate iterates remain local. Existing strongly-local hypergraph methods provide such locality under more restrictive splitting models, such as unit/cardinality cut-costs or bounded hyperedge size \citep{liu2021strongly, ibrahim2020local}. We therefore develop a complementary first-order method based on the HFD dual. Although the dual is non-smooth, its variables are precisely the diffusion heights used for sweep cuts, making it a natural object for local, early-stoppable optimization. Using this insight, we propose \textsc{Thresholded Local Hyper-Flow Diffusion} (TL-HFD), a locality-by-design projected subgradient method that maintains a dynamically growing active region around the seeds and touches only this region and its immediate boundary. Boundary growth is controlled through thresholded top-\(k\) activations. Our analysis establishes finite-time convergence, locality, and sweep-cut guarantees for TL-HFD. For general submodular cut-costs, the per-iteration complexity is local in the scanned region and oracle-sensitive in the per-hyperedge evaluation model. In summary, our contributions are:
\vspace{-0.05in}

\vspace{-0.05in}
\begin{enumerate}
  \item \textbf{A locality-by-design first-order optimizer for HFD.}
  We introduce TL-HFD, a first-order method for the non-smooth HFD dual that maintains a seed-anchored active region, updates only this region and its one-hop boundary, and expands through selective top-\(k\) boundary activation. Lemma \ref{lem:locality} proves that this local degree-preconditioned projected subgradient update is exact: restricting the update to the active region and its boundary gives the same iterate as the unrestricted global projected subgradient step, since all vertices outside this local region are pushed negative and remain zero after projection.

  \item \textbf{Finite-time optimization and sweep guarantees.}
  We prove finite-time dual suboptimality for TL-HFD. Theorem~\ref{thm:eps-optimality} gives the exact local PSGD rate in terms of restricted local subgradient norms, and Corollary~\ref{cor:dual_gap_threshold} extends this to the thresholded top-\(k\) variant by treating skipped boundary vertices as an inexact projected subgradient error. We further show that approximate dual iterates with localized support admit a robust sweep guarantee (Theorem~\ref{thm:robust-sweep}, Corollary~\ref{cor:early-stopping}).

  \item \textbf{Activated-volume accounting.}
  We derive an additive activated-volume bound showing that the total volume promoted into the active region is controlled by realized local subgradient norms and the minimum boundary-push among newly activated vertices (Theorem~\ref{thm:explored_volume_topk}). For general submodular cut-costs, the per-iteration complexity is local in the scanned region and oracle-sensitive in the underlying hyperedge primitive.

  \item \textbf{Empirical evaluation.} Empirically, TL-HFD activates smaller volumes than HFD while improving sweep quality and F1 on noisier instances in both real-world and synthetic hypergraphs (Appendix~\ref{app:sbm},~\ref{app:synth},~\ref{app:active_set_growth}).
\end{enumerate}

\textbf{Related Work.} We defer a detailed discussion of related work to Appendix~\ref{app:related-work}, where we survey hypergraph cut objectives, diffusion and Laplacian primitives, strongly-local clustering algorithms, and flow/convex formulations in context.

\vspace{-0.1in}
\section{Preliminaries}
\vspace{-0.1in}
\textbf{Submodular hypergraphs.}
A weighted hypergraph $\Hcal = (\V, \E, \{(w_e, \vartheta_e)\}_{e \in \E})$ has vertex set $\V$ and hyperedges $e \subseteq \V$ with weights $\vartheta_e > 0$. Unlike ordinary edges, a hyperedge admits many splits of its vertices, each incurring a different penalty; these are encoded by a \emph{cut-cost function} $w_e: 2^e \to \R_{\ge 0}$, assumed to be normalized ($w_e(\emptyset) = w_e(e) = 0$) and submodular. This framework subsumes unit, cardinality-based, and general submodular costs.

\textbf{Degrees, volume, and conductance.}
The vertex degree is $d_v := \sum_{e \ni v} \vartheta_e$, with $D := \diag(d)$. Throughout, we assume $d_v > 0$ for every $v \in \V$ (equivalently, $\V$ contains no isolated vertices); this is without loss of generality for seeded local clustering since isolated vertices are never reached by the diffusion. For $S \subseteq \V$, the \emph{volume} is $\vol(S) := \sum_{v \in S} d_v$ and the \emph{boundary volume} is $\vol(\partial S) := \sum_{e \in \E} \vartheta_e\, w_e(S \cap e)$, aggregating weighted penalties across the cut $(S, \V \setminus S)$. The \emph{conductance} is
$\Phi(S) := \frac{\vol(\partial S)}{\min\{\vol(S), \vol(\V \setminus S)\}}$.
Low conductance indicates a set that is internally well-connected yet weakly coupled to the rest of the hypergraph. Direct minimization of $\Phi$ is  intractable, motivating convex relaxations \citep{andersen2006local, fountoulakis2019variational, fountoulakis2020p, fountoulakis2021local, liu2021strongly}.

\textbf{Lov\'{a}sz extensions.}
To connect submodular cut-costs to convex optimization, \citet{fountoulakis2021local} use the Lov\'{a}sz extension of each $w_e$,
\[
  f_e(x) := \max_{\rho \in B_e} \langle \rho, x \rangle,
  \qquad
  B_e := \bigl\{\rho \in \R^{|e|} : \langle \rho, \mathbbm{1}_S \rangle \le w_e(S)\ \forall S \subseteq e,\ \langle \rho, \mathbbm{1}_e \rangle = 0\bigr\},
\]
with $\rho$ extended by zero outside $e$. Since $f_e(\mathbbm{1}_S) = w_e(S \cap e)$, the Lov\'{a}sz extension is the tightest convex extension of $w_e$ to indicator vectors and provides a principled relaxation of discrete cut costs to continuous optimization.

\textbf{Local hypergraph clustering:} Given a submodular hypergraph $\Hcal$ and a seed set $S \subset \V$, identify a set $\hat{C} \subset \V$ of low conductance $\Phi(\hat{C})$ that lies in the vicinity of $S$.

\textbf{HFD diffusion objective.}
Following \citet{fountoulakis2021local}, we adopt a convex relaxation that injects mass at the seeds and penalizes hyperedge splits via Lov\'{a}sz extensions. Given $\delta > 0$, define the seed injection $\Delta(v) := \delta d_v$ for $v \in S$ and $\Delta(v) := 0$ otherwise. The degree-weighted injection calibrates the seed signal to vertex connectivity. The quadratic term $\frac{\sigma}{2}x^\top D x$ makes $F$  $\sigma$-strongly convex with respect to $\|\cdot\|_D$. The negative of the HFD dual objective is
\begin{equation}\label{eq:cut_HFD}
  \min_{x \in \R^{|\V|}_+} F(x)
  := \tfrac{1}{2} \sum_{e \in \E} \vartheta_e f_e(x)^2
   + \tfrac{\sigma}{2} x^\top D x
   - \langle \Delta - d,\, x \rangle,
\end{equation}
where $\sigma > 0$ controls degree-weighted regularization. Smaller $\sigma$ lets the diffusion spread further; Equation~\eqref{eq:cut_HFD} is the negative of the dual in HFD's primal--dual flow formulation \citep{fountoulakis2021local}; we write $\mathcal{D}(x):=-F(x)$ when connecting to sweep guarantees that build on HFD's dual-based analysis. Our algorithm minimizes $F$ directly. The solution $x^\star \in \R_+^{|\V|}$ assigns nonnegative heights to vertices --- the term $-\langle \Delta-d,x\rangle$ rewards height on sufficiently injected seeds and penalizes height on non-seeds, while $\tfrac{1}{2} \sum_e \vartheta_e f_e(x)^2$ penalizes within-hyperedge discrepancies and acts as an $\ell_2$-type penalty on induced cut costs across level sets. We extract a discrete cluster by a standard \emph{sweep}: for thresholds $h$, form level sets $S_h := \{v \in \V : x^\star_v \ge h\}$ and return the one with smallest conductance.

\textbf{Notational convention.} Throughout the paper, $\partial$ denotes either a vertex-set boundary, as in $\partial S$ and $\vol(\partial S)$, or the subdifferential of a convex function, as in $\partial F(x)$; the meaning is clear from context. We also use $e$ and $e^t$ for edges and error vectors, respectively, with context disambiguating the notation. For iterates $\{x^{(t)}\}_{t=0}^{T-1}$, we write $\hat{x}\in \arg\min_{0\le t\le T-1} F(x^{(t)})$ to denote any best iterate, i.e., $\hat{x} = x^{(t^\star)}$ for some $t^\star \in \arg\min_{0 \le t \le T-1} F(x^{(t)})$ and use the analogous convention for $\arg\max$.

\section{Thresholded Local Hyper-Flow Diffusion (TL-HFD)} \label{sec:tlhfd}
Having formulated local hypergraph clustering, we now present
Thresholded Local Hyper-Flow Diffusion (TL-HFD), a locality-by-design solver for the
non-smooth objective~ \eqref{eq:cut_HFD}.

\textbf{Locality.}
For a vector $x \in \mathbb{R}^{|\V|}_+$, define its support
$\operatorname{supp}(x) = \{v \in \V : x_v > 0\}$ and the
\emph{seed-anchored active region} $A(x) := \operatorname{supp}(x) \cup S$.
The \emph{boundary} of $A(x)$ is $\partial A(x) := \bigl\{u \notin A(x) : \exists\, e \in \E,\; u \in e \text{ and } e \cap A(x) \neq \emptyset\bigr\}$.
The active region $A(x)$ contains all vertices currently
participating in the diffusion; its boundary $\partial A(x)$
consists of vertices reachable in one hyperedge hop that have not yet been activated.

\textbf{Local-set terminology.}
We distinguish three co-evolving vertex sets. The \emph{active region}
$A^{(t)}=\supp(x^{(t)})\cup S$ contains the vertices currently carrying mass.
The \emph{local computational region} $L_t:=A^{(t)}\cup\partial A^{(t)}$
is the set touched at iteration~$t$, including boundary vertices that are
scored but not necessarily activated. Let $J_t\subseteq\partial A^{(t)}$
be the vertices first promoted into $A^{(t+1)}$ at iteration~$t$; the
\emph{activated set} $M_t:=S\cup\bigcup_{s<t}J_s$ records every vertex ever
activated. Top-$k$ governs how quickly $M_t$ grows, but does not reduce the cost
of scoring the boundary, which remains proportional to $|\partial A^{(t)}|$
per iteration.

\textbf{Nonsmoothness and subgradients.}
Each $f_e(x) = \max_{\rho \in B_e} \langle \rho, x \rangle$ from equation \eqref{eq:cut_HFD} is nondifferentiable in general; $F$ is nonsmooth.
By Danskin's theorem~\citep{danskin1967directional}, for each hyperedge $e$
any maximizer $\rho_e(x) \in M_e(x) := \arg\max_{\rho \in B_e}
\langle \rho, x \rangle$ yields a subgradient of~$f_e$ at~$x$
(Appendix~~\ref{app:dirder}).
Selecting one such maximizer per hyperedge, define
\begin{equation}\label{eq:subgrad}
  g(x) := \sum_{e \in \E} \vartheta_e\, f_e(x)\, \rho_e(x)
         + \sigma D x - (\Delta - d).
\end{equation}

\begin{restatable}{lemma}{SubGrad}
\label{lem:subgrad_in_F}
Consider $F$ defined in \eqref{eq:cut_HFD}. For any $x$ and any choice $\rho_e(x)\in M_e(x)$, the vector $g(x)$ in \eqref{eq:subgrad} satisfies $g(x)\in \partial F(x)$.
\end{restatable}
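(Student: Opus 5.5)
The plan is to split $F$ into three convex pieces, $F = \Psi + Q + \ell$ with $\Psi(x) := \tfrac12\sum_{e}\vartheta_e f_e(x)^2$, $Q(x) := \tfrac{\sigma}{2}x^\top D x$ and $\ell(x) := -\langle \Delta - d, x\rangle$, and to build a subgradient for each. The last two are differentiable, with gradients $\sigma D x$ and $-(\Delta - d)$. If we show $\vartheta_e f_e(x)\rho_e(x)\in\partial\bigl(\tfrac12\vartheta_e f_e^2\bigr)(x)$ for every $e$, then summing subgradients of convex functions gives a subgradient of the sum. Indeed, the inequalities $\phi_i(y)\ge\phi_i(x)+\langle s_i,y-x\rangle$ simply add. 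Here the statement is about $\partial F(x)$ for $F$ as a function on $\R^{|\V|}$ (the constraint $x\ge 0$ is not part of $F$), so no normal-cone term arises.

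\textbf{Step 1 (subgradient of $f_e$).} Each $f_e$ is the support function of the nonempty compact convex set $B_e$, viewed in $\R^{|\V|}$ by extension with zeros. Hence $f_e$ is convex and positively homogeneous. For $\rho_e(x)\in M_e(x)$ and any $y$ we have $f_e(y)\ge\langle\rho_e(x),y\rangle = f_e(x)+\langle\rho_e(x),y-x\rangle$. So $\rho_e(x)\in\partial f_e(x)$ directly, with no need for the full Danskin theorem cited in Appendix~\ref{app:dirder}. The set $B_e$ is nonempty because $w_e$ is normalized and submodular: the base polytope contains greedy vertices.

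\textbf{Step 2 (chain rule for the square).} This is the main point needing care, since $t\mapsto t^2$ is not monotone on all of $\R$. We therefore first check $f_e(x)\ge 0$: since $\langle\rho,\mathbbm 1_e\rangle=0$, the vector $\rho=0$ lies in $B_e$ (using $w_e\ge0$), so $f_e\ge 0$ everywhere. Now let $a=f_e(x)\ge0$ and $b=f_e(y)$. Then
\[
\tfrac12 b^2 \ge \tfrac12 a^2 + a(b-a),
\]
since the difference is $\tfrac12(b-a)^2$. Moreover $a(b-a)\ge a\langle\rho_e(x),y-x\rangle$ by Step 1 together with $a\ge0$. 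Multiplying by $\vartheta_e>0$ gives
\[
\tfrac12\vartheta_e f_e(y)^2\ge\tfrac12\vartheta_e f_e(x)^2+\langle\vartheta_e f_e(x)\rho_e(x),y-x\rangle.
\]

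\textbf{Step 3 (assemble).} Adding the inequalities from Step 2 over all $e\in\E$ to the exact first-order identities for $Q$ and $\ell$ (where convexity of $Q$ follows from $D\succeq0$) gives $F(y)\ge F(x)+\langle g(x),y-x\rangle$ for all $y$. That is, $g(x)\in\partial F(x)$.
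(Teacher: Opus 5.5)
Your proof is correct, and it takes a more elementary route than the paper's.

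\textbf{What is shared.} Both arguments use the same decomposition: per-hyperedge squared Lov\'asz terms plus the differentiable part $\tfrac{\sigma}{2}x^\top Dx-\langle\Delta-d,x\rangle$. Both also hinge on the same observation, that $0\in B_e$ forces $f_e\ge 0$.

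\textbf{Where you differ.} The paper works through subdifferential calculus. It cites three facts:
\begin{itemize}
\item $\partial f_e(x)=\mathrm{conv}\,M_e(x)$ for support functions;
\item a chain rule $\partial(\phi\circ f_e)(x)=\phi'(f_e(x))\,\partial f_e(x)$, valid because $\phi(s)=\tfrac12 s^2$ is nondecreasing on the range of $f_e$;
\item the Minkowski-sum rule for the subdifferential of a finite sum.
\end{itemize}
You instead verify the subgradient inequality directly, from the maximizer property and the identity $\tfrac12 b^2-\tfrac12 a^2-a(b-a)=\tfrac12(b-a)^2$. You use $a=f_e(x)\ge 0$ exactly where the paper uses monotonicity of $\phi$. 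You also need only the trivial inclusion $\sum_i\partial\phi_i(x)\subseteq\partial(\sum_i\phi_i)(x)$. This makes your argument self-contained and fully sufficient for the lemma as stated.

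\textbf{What the paper's route buys.} It also yields the full characterization $\partial F(x)=\sum_e\vartheta_e f_e(x)\,\mathrm{conv}\,M_e(x)+\sigma Dx-(\Delta-d)$. That reverse direction is used later. In Auxiliary Lemma~\ref{lem:xstar-bound}, an arbitrary KKT subgradient $g^\star\in\partial F(x^\star)$ is written in the form \eqref{eq:subgrad} with $\rho_e(x^\star)\in\partial f_e(x^\star)$. Your inclusion-only argument does not support that step. If you want your proof to serve the rest of the appendix, you would need to add the equality versions of the chain and sum rules. Both are standard for finite-valued convex functions on $\R^{|\V|}$.
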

\begin{proof}
    See Appendix ~\ref{app:subgrad}.
\end{proof}

\textbf{Degree-preconditioned PSGD.}
To solve equation~\eqref{eq:cut_HFD}, a standard projected subgradient step $x^{(t+1)} = \Pi_{\mathbb{R}^{|\V|}_+}(x^{(t)} - \eta_{t+1}\, g^{(t)})$ treats all vertices equally, but vertex degrees in hypergraphs can vary by orders of magnitude. To normalize the update across heterogeneous degrees, we apply the diagonal preconditioner~$D^{-1}$:
    $x^{(t+1)} = \Pi_{\mathbb{R}^{|\V|}_+}\!\big(x^{(t)} - \eta_{t+1}\, D^{-1} g(x^{(t)})\big)$.\refstepcounter{equation}\hfill(\theequation)\label{eq:psgd}

\textbf{Restricting to the active region.}
Degree preconditioning addresses scaling, but does not by itself
avoid work on distant, irrelevant vertices for local clustering around the seed set. We maintain an explicit
active region $A^{(t)} := A(x^{(t)})$ and its boundary
$\partial A^{(t)} := \partial A(x^{(t)})$, and restrict all
computation to $A^{(t)} \cup \partial A^{(t)}$. The following
lemma shows that this restriction produces \emph{exactly} the same
iterates as the global update~\eqref{eq:psgd}.

\begin{restatable}[Locality of iterates]{lemma}{Locality}
\label{lem:locality}
Let $x\ge 0$ and $A:=\supp(x)\cup S$. For any subgradient selection
$g(x)$ of the form~\eqref{eq:subgrad}, every vertex
$u\notin A\cup\partial A$ satisfies $[g(x)]_u=d_u-\Delta(u)=d_u>0$.
Consequently, for every $\eta>0$, $\left[\Pi_{\R^{|\V|}_+}\!\left(x-\eta D^{-1}g(x)\right)\right]_u=0$.
Thus, the global projected subgradient update can
be implemented exactly by updating only $A^{(t)}\cup\partial A^{(t)}$ at iteration $t$.
\end{restatable}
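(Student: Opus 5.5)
The plan is to compute $[g(x)]_u$ coordinatewise from \eqref{eq:subgrad} for a fixed $u\notin A\cup\partial A$. Each of the three terms will be handled separately, and then the projection is applied. Two facts about $u$ follow from the definitions. First, $u\notin A\supseteq S$, so $\Delta(u)=0$, and $u\notin\supp(x)$, so $x_u=0$; hence $[\sigma D x]_u=\sigma d_u x_u=0$. Second, since $u\notin\partial A$ and $u\notin A$, every hyperedge $e\ni u$ satisfies $e\cap A=\emptyset$. In particular $x_v=0$ for all $v\in e$, so $x|_e\equiv 0$.

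The key step is the hyperedge term $\sum_{e}\vartheta_e f_e(x)\rho_e(x)$ at coordinate $u$. Because $\rho_e$ is extended by zero outside $e$, only hyperedges $e\ni u$ contribute. For such $e$, $f_e(x)=\max_{\rho\in B_e}\langle\rho,x\rangle$ depends only on $x|_e=0$, so $f_e(x)=0$. The scalar factor $f_e(x)$ therefore kills the contribution, whatever maximizer $\rho_e(x)\in M_e(x)$ is selected. This matters because when $x|_e=0$ every $\rho\in B_e$ is a maximizer, and $[\rho_e(x)]_u$ could be arbitrary. I expect this to be the only point needing care. The argument must rest on the multiplicative factor $f_e(x)=0$, not on any property of the chosen $\rho_e$. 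With it, the term is $0$ regardless of the selection.

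Combining the three terms gives $[g(x)]_u=0+0-(\Delta(u)-d_u)=d_u-\Delta(u)=d_u$, and $d_u>0$ by the standing no-isolated-vertices assumption. The projection onto $\R^{|\V|}_+$ acts coordinatewise as $\max\{\cdot,0\}$, and $D^{-1}$ is diagonal. So $[x-\eta D^{-1}g(x)]_u=0-\eta d_u/d_u=-\eta<0$, which projects to $0$ for every $\eta>0$.

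For the final claim, note first that all coordinates outside $L=A\cup\partial A$ of the global update equal $0$, matching the convention that unstored entries are zero. Next, the coordinates in $L$ are computed from $x$ and from $f_e,\rho_e$ only for hyperedges $e$ meeting $L$. Hyperedges with $e\cap A=\emptyset$ have $f_e(x)=0$ and contribute nothing anywhere. Hyperedges with $e\cap A\neq\emptyset$ lie entirely within $L$ by the definition of $\partial A$. So the computation needs only data on $L$, and it reproduces the global iterate exactly.
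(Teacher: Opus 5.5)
Your proposal is correct and follows essentially the same route as the paper's proof. You use the boundary definition to get $x|_e\equiv 0$ on every hyperedge containing $u$, so $f_e(x)=0$ annihilates the hyperedge term whatever $\rho_e(x)$ is selected, which leaves $[g(x)]_u=d_u$ and an unprojected coordinate $-\eta$ that clips to $0$. Your closing justification of the implementation claim is a sound elaboration that the paper leaves implicit: hyperedges meeting $A$ lie inside $A\cup\partial A$, and all other hyperedges contribute nothing.
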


\begin{proof}
Full proof in Appendix~\ref{app:locality}.
\end{proof}

\textbf{Interpretation.}
Lemma~\ref{lem:locality} shows that locality is exact rather than heuristic:
outside the active region and its one-hop boundary, the subgradient coordinate
is positive, so the degree-preconditioned step moves the coordinate negative
and projection returns it to zero. Hence the global update~\eqref{eq:psgd}
can be implemented by the local update
\begin{equation}\label{eq:local-update}
x^{(t+1)}_v =
\begin{cases}
\left[x^{(t)}_v-\eta_{t+1}g_v(x^{(t)})/d_v\right]_+,
& v\in A^{(t)}\cup\partial A^{(t)},\\
0, & \text{otherwise}.
\end{cases}
\end{equation}
Each coordinate \(g_v(x^{(t)})\) depends only on hyperedges incident to \(v\).
Thus each iteration is local in the scanned region \(L_t=A^{(t)}\cup\partial A^{(t)}\)
and its incident hyperedges, rather than requiring an explicit pass over all of
\(\V\). We now pair the locality result with a standard finite-time analysis of projected subgradient descent, whose per-iteration cost scales with the scanned local region rather than explicitly with the full hypergraph.

\begin{restatable}[{$\varepsilon$-optimality of local PSGD}]{theorem}{optimality}
\label{thm:eps-optimality}
Assume \(d_v>0\) for all \(v\in \V\) and \(\sigma>0\). Let $x^\star\in\arg\min_{x\ge0}F(x)$. Run the local PSGD update $x^{(t+1)} = \Pi_{\mathbb R_+^{|\V|}} \left(  x^{(t)}-\eta_{t+1}D^{-1}g^{(t)} \right), \; \eta_{t+1}=\frac{1}{\sigma(t+1)}$, where \(g^{(t)}\in\partial F(x^{(t)})\) is selected as in equation~\eqref{eq:subgrad}.
Let  $L_t:=A^{(t)}\cup\partial A^{(t)}$ and define the restricted norm $\|g^{(t)}\|_{D^{-1},L_t}^2:= \sum_{v\in L_t}\frac{(g_v^{(t)})^2}{d_v}$. Let $\hat x^{(T)} \in \arg\min_{0\le t\le T-1}F(x^{(t)})$. Then $F(\hat x^{(T)})-F(x^\star) \le \frac{1}{2\sigma T} \sum_{t=0}^{T-1} \frac{\|g^{(t)}\|_{D^{-1},L_t}^2}{t+1}$. In particular, if $G_{\mathrm{loc}}^2(T) := \max_{0\le t<T} \|g^{(t)}\|_{D^{-1},L_t}^2$, then $F(\hat x^{(T)})-F(x^\star) \le \frac{G_{\mathrm{loc}}^2(T)(1+\log T)}{2\sigma T}.$ 
\end{restatable}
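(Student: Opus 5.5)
The plan is the standard strongly convex projected subgradient analysis, carried out in the $D$-norm and localized with Lemma~\ref{lem:locality}. Since the preconditioner is $D^{-1}$, the natural potential is $\|x^{(t)}-x^\star\|_D^2$. Projection onto $\R^{|\V|}_+$ is coordinatewise, and $D$ is diagonal with positive entries, so $\Pi_{\R^{|\V|}_+}$ is also the $D$-weighted projection onto the orthant. It is therefore nonexpansive in $\|\cdot\|_D$. Because $x^\star\ge0$, this gives
\[
\|x^{(t+1)}-x^\star\|_D^2 \le \|x^{(t)}-\eta_{t+1}D^{-1}g^{(t)}-x^\star\|_D^2
= \|x^{(t)}-x^\star\|_D^2 - 2\eta_{t+1}\langle g^{(t)},x^{(t)}-x^\star\rangle + \eta_{t+1}^2\|g^{(t)}\|_{D^{-1}}^2 .
\]

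\textbf{Localizing the norm.} This is the only nonroutine step, and I expect it to be the main obstacle. The global norm $\|g^{(t)}\|_{D^{-1}}^2$ includes coordinates outside $L_t$, where $g_u=d_u>0$, so it is not small. I would not bound it directly. Instead I would split the expansion coordinatewise. For $u\notin L_t$ we have $x^{(t)}_u=0$ and, by Lemma~\ref{lem:locality}, $x^{(t+1)}_u=0$. That coordinate's contribution to the new distance is then exactly $d_u(x^\star_u)^2$, the same as its contribution to the old distance. For $u\in L_t$ I use the unprojected one-dimensional inequality $d_u(x^{(t+1)}_u-x^\star_u)^2\le d_u(x^{(t)}_u-x^\star_u)^2-2\eta g_u(x^{(t)}_u-x^\star_u)+\eta^2g_u^2/d_u$.

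Summing over coordinates yields the restricted norm $\|g^{(t)}\|_{D^{-1},L_t}^2$, but the inner product also becomes restricted to $L_t$. The missing terms are $-\sum_{u\notin L_t} g_u(x^{(t)}_u-x^\star_u)=\sum_{u\notin L_t}d_u x^\star_u\ge0$, and they carry the factor $-2\eta$. Dropping them therefore goes in the right direction: adding them back only increases the right-hand side. So
\[
\|x^{(t+1)}-x^\star\|_D^2\le \|x^{(t)}-x^\star\|_D^2-2\eta_{t+1}\langle g^{(t)},x^{(t)}-x^\star\rangle+\eta_{t+1}^2\|g^{(t)}\|^2_{D^{-1},L_t}.
\]
This is exactly the global recursion with the localized norm.

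\textbf{Strong convexity and telescoping.} $F$ is $\sigma$-strongly convex in $\|\cdot\|_D$: the first term is convex, since it is a square of a nonnegative convex function (as $f_e\ge0$ because $0\in B_e$), and the quadratic term is $\frac{\sigma}{2}\|x\|_D^2$. With Lemma~\ref{lem:subgrad_in_F} this gives
\[
\langle g^{(t)},x^{(t)}-x^\star\rangle\ge F(x^{(t)})-F(x^\star)+\tfrac{\sigma}{2}\|x^{(t)}-x^\star\|_D^2 .
\]
Substituting and dividing by $2\eta_{t+1}$ with $1/\eta_{t+1}=\sigma(t+1)$ gives
\[
F(x^{(t)})-F^\star\le \tfrac{\sigma t}{2}\|x^{(t)}-x^\star\|_D^2-\tfrac{\sigma(t+1)}{2}\|x^{(t+1)}-x^\star\|_D^2+\tfrac{1}{2\sigma(t+1)}\|g^{(t)}\|_{D^{-1},L_t}^2 .
\]
This is the Lacoste-Julien--Schmidt--Bach style telescoping with unit weights. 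The distance terms cancel telescopically, starting from the coefficient $0$ at $t=0$, and the final negative term is dropped. Summing over $t=0,\dots,T-1$ therefore leaves $\sum_t (F(x^{(t)})-F^\star)\le \frac{1}{2\sigma}\sum_t \|g^{(t)}\|^2_{D^{-1},L_t}/(t+1)$.

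Bounding the minimum iterate by the average gives the first claim. Replacing each norm by $G^2_{\mathrm{loc}}(T)$ and using $\sum_{t=1}^T 1/t\le 1+\log T$ gives the second. The only delicate point in the whole argument is the sign check for the dropped off-region terms above, which relies on $x^\star\ge0$ and on $g_u=d_u>0$ there.
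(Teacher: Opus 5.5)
Your proposal is correct and follows the paper's proof essentially step for step. The shared steps are:
\begin{itemize}
\item the split of $\V$ into $L_t$ and its complement;
\item the observation, via Lemma~\ref{lem:locality}, that off-region coordinates of $x^{(t)}-x^\star$ are unchanged;
\item coordinatewise nonexpansiveness on $L_t$;
\item the identity $\sum_{v\notin L_t} g_v^{(t)}(x_v^{(t)}-x_v^\star)=-\sum_{v\notin L_t}d_v x_v^\star\le 0$, used to pass from the restricted to the full inner product;
\item $D$-norm strong convexity followed by the $t$ versus $t+1$ telescoping with $\eta_{t+1}=1/(\sigma(t+1))$.
\end{itemize}

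One wording nit: the factor $-2\eta_{t+1}$ multiplies $\sum_{v\notin L_t} g_v^{(t)}(x_v^{(t)}-x_v^\star)$ itself, giving the nonnegative term $2\eta_{t+1}\sum_{v\notin L_t} d_v x_v^\star$, not its negation. Your conclusion that dropping it is harmless is still right.
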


\begin{proof}
  Full proof in Appendix~\ref{app:optimality}.
\end{proof}

\begin{remark}
  The $(1+\log T)/T$ rate is standard for projected subgradient
  descent on $\sigma$-strongly convex nonsmooth
  objectives~\citep{shamir2013stochastic}; the contribution here is that (i) it applies to the HFD dual under degree-preconditioning and the specific subgradient selection of Lemma~\ref{lem:subgrad_in_F}, (ii) 
  by Lemma~\ref{lem:locality}, each iteration is algorithmically local and touches only the local region $A^{(t)} \cup \partial A^{(t)}$, so the bound is achieved using only the scanned local region and its incident hyperedges, rather than requiring an explicit pass over the full hypergraph.
\end{remark}

\begin{remark}[Localized norm bound]
Theorem~\ref{thm:eps-optimality} is stated in terms of the realized restricted
local norms \(\|g^{(t)}\|_{D^{-1},L_t}\). Under the box-constraints of Proposition~\ref{prop:Gloc-bound}, $\|g^{(t)}\|_{D^{-1},L_t}^2 \le B^2\vol(L_t), \; B=(\delta-1)(1/\sigma+2).$
Consequently, $G_{\mathrm{loc}}^2(T)\le B^2\max_{0\le t<T}\vol(L_t)$,
and Theorem~\ref{thm:eps-optimality} yields $F(\hat x^{(T)})-F(x^\star) \le \frac{B^2\max_{0\le t<T}\vol(L_t)(1+\log T)}{2\sigma T}.$ Thus the convergence bound depends on the explored local regions, and not on \(\vol(\V)\). The box projection is used only to obtain a uniform coordinatewise local subgradient bound; by
Appendix~\ref{app:box-constraints}, it does not change the optimizer.
\end{remark}

\textbf{Activation bottleneck and per-iteration work.}
The local update~\eqref{eq:local-update} evaluates only hyperedges that intersect 
$L_t=A^{(t)}\cup\partial A^{(t)}$, so the per-iteration work is local in the
scanned region, \(O\bigl(\sum_{e:\,e\cap L_t\neq\emptyset} T_e\bigr)\), where $T_e$ is the
per-hyperedge cost (oracle-sensitive for general submodular cut-costs,
depending on how $f_e$ and $\rho_e(x^{(t)})$ are computed). Each iteration still
inspects the current boundary \(\partial A^{(t)}\). Boundary vertices with
positive push $\kappa^{(t)}(u):=\max\{0,-g_u^{(t)}/d_u\}$ are candidates for promotion into the active region. Promoting all such vertices can rapidly enlarge $\vol(L_t)$ and the support of the
returned iterate. We therefore introduce thresholded top-\(k\) boundary
activation. This controls the number of vertices promoted into $A^{(t+1)}$ but not the number
inspected in $\partial A^{(t)}$. Accordingly, our locality claims have two parts: each iteration is local in the scanned region \(L_t\), while the cumulative growth of the promoted set \(M_t\) is controlled separately by the additive activated-volume accounting bound in Theorem~\ref{thm:explored_volume_topk}.

\textbf{Thresholding the boundary.}
As noted above, the local update~\eqref{eq:local-update} must
still inspect every boundary vertex in $\partial A^{(t)}$.
Boundary vertices have a special structure: every
boundary vertex satisfies $x^{(t)}_u = 0$ by construction, so its
preconditioned projected update simplifies to a one-sided
``push'': $x^{(t+1)}_u = \eta_{t+1}\,\kappa^{(t)}(u)$,\;
where $\kappa^{(t)}(u) := \max\{0,\,-g_u^{(t)}/d_u\}$. The quantity $\kappa^{(t)}(u)$ measures the magnitude of the boundary push: larger values indicate a stronger first-order signal for increasing \(x_u\).  Algorithm~\ref{alg:pq-psgd-theory} activates only the
top-$k$ boundary vertices by a score, setting
$x_u^{(t+1)} = 0$ for the rest.\\

\textbf{Committed boundary selection.}\label{committed}
At each iteration of Algorithm \ref{alg:pq-psgd-theory}, TL-HFD activates the top-$k$ boundary vertices ranked by the \emph{score}, $s^{(t)}(u) =\kappa^{(t)}(u) \left(\frac{d_{\mathrm{in}}^{(t)}(u)}{d_u}\right)^{\!\gamma}$, where $d_{\mathrm{in}}^{(t)}(u)$ is the weighted indegree measuring \emph{structural commitment} to the active set, and $\gamma$ controls the balance between gradient signal and structural membership. For unit cut-cost, $d_{\mathrm{in}}^{(t)}(u)$ counts edges incident to the active set; for cardinality and submodularity cut-costs, $d^{(t)}_{in}(u) = \sum_{e \ni u, e \cap A^{(t)} \neq \emptyset} \vartheta_e w_e(A^{(t)} \cap e)$ weights each edge by its overlap with the active set. Thresholding by selecting Top-k boundary vertices (ties are broken arbitrarily) for updates at each iteration introduces a truncation error supported only on skipped positively pushed boundary vertices, which we quantify in Lemma \ref{lem:threshold-inexact}.

\begin{restatable}[{Thresholding as a truncated inexact step}]{lemma}{thresholding}
\label{lem:threshold-inexact}
For each boundary vertex \(u\in\partial A^{(t)}\), define
$\kappa^{(t)}(u) := \max\left\{0,-\frac{g_u^{(t)}}{d_u}\right\}$. Let \(K_t\subseteq \partial A^{(t)}\) be any subset of boundary vertices selected
by Algorithm~\ref{alg:pq-psgd-theory} for activation at iteration \(t\). Let \(x^{(t+1)}\) be the resulting thresholded iterate, and define the skipped
positively pushed boundary set $ B^{(t)}  := \left\{ u\in \partial A^{(t)}\setminus K_t: \kappa^{(t)}(u)>0 \right\}$. Define \(e^{(t)}\in\mathbb R^{|\V|}\) by $e_u^{(t)} := \eta_{t+1}\,\kappa^{(t)}(u)$ for $u \in B^{(t)}$ and $e_u^{(t)} := 0$ otherwise. Then $x^{(t+1)} =  \Pi_{\mathbb R_+^{|\V|}} \left( x^{(t)}-\eta_{t+1}D^{-1}g^{(t)}-e^{(t)}\right)$. Moreover, $\|e^{(t)}\|_D^2  = \eta_{t+1}^2 \sum_{u\in B^{(t)}}d_u\bigl(\kappa^{(t)}(u)\bigr)^2$, and $\|e^{(t)}\|_{1,D} :=  \sum_{u\in V}d_u|e_u^{(t)}|  =  \eta_{t+1} \sum_{u\in B^{(t)}}d_u\kappa^{(t)}(u)$. If, in addition, the selected subgradient satisfies the coordinatewise local
bound $\left|\frac{g_u^{(t)}}{d_u}\right| \le B_{\rm loc} \; \forall u\in A^{(t)}\cup\partial A^{(t)}$, then $\|e^{(t)}\|_D^2 \le \eta_{t+1}^2 B_{\rm loc}^2\vol(B^{(t)})$ and $\|e^{(t)}\|_{1,D} \le \eta_{t+1}B_{\rm loc}\vol(B^{(t)})$ (under the hypotheses of Proposition~\ref{prop:Gloc-bound}, namely for the
box-constrained variant with $0\le x_v^{(t)}\le \bar x:=\frac{\delta-1}{\sigma} \; \forall v$, one may take $B_{\rm loc} = (\delta-1)(1/\sigma+2)$).
\end{restatable}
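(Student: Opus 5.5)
The plan is a coordinatewise comparison between the thresholded iterate and the projection $\Pi_{\mathbb R_+^{|\V|}}(y-e^{(t)})$, where $y:=x^{(t)}-\eta_{t+1}D^{-1}g^{(t)}$. Because $\Pi_{\mathbb R_+^{|\V|}}$ acts coordinatewise as $z\mapsto[z]_+$, it suffices to check each vertex $u$ separately. I split $\V$ into four classes: (i) $u\in A^{(t)}$; (ii) $u\in K_t$; (iii) $u\in\partial A^{(t)}\setminus K_t$; (iv) $u\notin A^{(t)}\cup\partial A^{(t)}$. I take as given that Algorithm~\ref{alg:pq-psgd-theory} performs the exact local update~\eqref{eq:local-update} on $A^{(t)}\cup K_t$ and sets all other coordinates to zero.

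On classes (i), (ii) and (iv) we have $e^{(t)}_u=0$. On (i) and (ii), the algorithm's value is $[y_u]_+$ by~\eqref{eq:local-update}, so the two sides agree. On (iv), Lemma~\ref{lem:locality} gives $g_u=d_u>0$ and $x_u=0$, so $y_u=-\eta_{t+1}<0$ and $[y_u]_+=0$, which is the algorithm's value.

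Class (iii) is the one to handle with care. Here $x^{(t)}_u=0$, so $y_u=-\eta_{t+1}g_u/d_u$. If $\kappa^{(t)}(u)=0$, then $g_u\ge0$ and $e_u=0$, so $[y_u]_+=0$. If $\kappa^{(t)}(u)>0$, then $u\in B^{(t)}$ and $y_u=\eta_{t+1}\kappa^{(t)}(u)=e_u$, so $[y_u-e_u]_+=0$. In both subcases the result is $0$, which matches the algorithm setting skipped boundary vertices to zero. This establishes the identity $x^{(t+1)}=\Pi_{\mathbb R_+^{|\V|}}(y-e^{(t)})$. The main obstacle is only bookkeeping: making sure the error does not perturb the active or selected coordinates, and that the vertices with $g_u\ge 0$ need no correction.

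The norm identities follow by substituting the definition of $e^{(t)}$, which is supported on $B^{(t)}$ and has nonnegative entries: $\|e\|_D^2=\sum_u d_ue_u^2$ and $\|e\|_{1,D}=\sum_u d_u|e_u|$. For the bounds, note that $u\in B^{(t)}\subseteq\partial A^{(t)}$ gives $\kappa^{(t)}(u)\le|g_u/d_u|\le B_{\rm loc}$. Summing $d_u\kappa^2\le d_uB_{\rm loc}^2$ and $d_u\kappa\le d_uB_{\rm loc}$ over $B^{(t)}$ yields $\eta_{t+1}^2B_{\rm loc}^2\vol(B^{(t)})$ and $\eta_{t+1}B_{\rm loc}\vol(B^{(t)})$ respectively. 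Finally, the admissible value $B_{\rm loc}=(\delta-1)(1/\sigma+2)$ under the box constraint $0\le x\le\bar x$ is quoted directly from Proposition~\ref{prop:Gloc-bound}.
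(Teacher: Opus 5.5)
Your proposal is correct and follows essentially the same route as the paper. Both arguments check the projection identity coordinatewise over active, selected, skipped (split by whether $\kappa^{(t)}(u)>0$), and far-away vertices, using Lemma~\ref{lem:locality} for the far-away class, and then obtain the norm identities and bounds by direct substitution together with $\kappa^{(t)}(u)\le |g_u^{(t)}/d_u|\le B_{\rm loc}$ on $B^{(t)}$.
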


\begin{proof}
    Full proof in Appendix \ref{app:thresholding_error}.
\end{proof}
\noindent


\textbf{Interpretation.}
Lemma~\ref{lem:threshold-inexact} interprets thresholding as an inexact
preconditioned PSGD step: its only error is the truncation vector
$e^{(t)}$, supported on skipped positively pushed boundary vertices and
controlled by $\sum_{u\in B^{(t)}} d_u\kappa^{(t)}(u)$, or under
Proposition~\ref{prop:Gloc-bound}, by $\vol(B^{(t)})$ and
$B_{\mathrm{loc}}$. Thus boundary selection enters the convergence
analysis as an explicit optimization penalty
(Corollary~\ref{cor:dual_gap_threshold}). The bound is
selection-rule agnostic: for any selected set, the penalty is determined
by the skipped set $B^{(t)}$; for a fixed activation budget, it is
minimized by retaining vertices with largest $d_u\kappa^{(t)}(u)$. Our implementation ranks by $s^{(t)}(u)$, which weights $\kappa^{(t)}(u)$ by a structural factor ($\gamma$) favoring vertices well attached to the active region (Algorithm \ref{alg:pq-psgd-theory}); $\gamma>0$ controls this bias (App.~\ref{app:params}).

\begin{algorithm}[H]
\caption{Thresholded Local Hyper-Flow Diffusion (TL-HFD)}
\label{alg:pq-psgd-theory}
\begin{algorithmic}[1]
\Require Current iterate $x^{(t)}$, seed set $S$,
         step size $\eta_{t+1}$, activation count $k \ge 0$
\Ensure Next iterate $x^{(t+1)}$

\Statex \textit{\# Active region and its one-hop boundary.}
\State $A^{(t)} \gets \operatorname{supp}(x^{(t)}) \cup S$
\State $\partial A^{(t)} \gets
       \{u \notin A^{(t)} : \exists\, e \in E,\;
       u \in e,\; e \cap A^{(t)} \neq \emptyset\}$

\Statex \textit{\# Compute subgradient only on the local region (Lemma~\ref{lem:locality}).}
\State $g^{(t)} \gets g(x^{(t)})$
       restricted to $A^{(t)} \cup \partial A^{(t)}$
\State $x^{(t+1)} \gets x^{(t)}$

\Statex \textit{\# Exact PSGD update on active vertices.}
\ForAll{$u \in A^{(t)}$}
  \State $x^{(t+1)}_u \gets
         \max\bigl\{0,\;
         x^{(t)}_u - \eta_{t+1}\,[g^{(t)}]_u / d_u
         \bigr\}$
\EndFor

\Statex \textit{\# Scoring for boundary vertices.}
\ForAll{$u \in \partial A^{(t)}$}
  \State $\kappa^{(t)}(u) \gets \max\bigl\{0,\; -[g^{(t)}]_u / d_u\bigr\}$
  \State $c^{(t)}(u) \gets \bigl(d_{\mathrm{in}}^{(t)}(u)/d_u)\bigr)^\gamma$
  \State $s^{(t)}(u) \gets \kappa^{(t)}(u)\, c^{(t)}(u)$
\EndFor

\Statex \textit{\# Activate the top-$k$ (Thresholding).}
\State $K_t \gets \textsc{TopK}(s^{(t)},\, k)$
\ForAll{$u \in K_t$}
  \State $x_u^{(t+1)} \gets \eta_{t+1}\,\kappa^{(t)}(u)$
\EndFor

\State \Return $x^{(t+1)}$
\end{algorithmic}
\end{algorithm}

\begin{restatable}[{Suboptimality under boundary truncation}]{corollary}{thresholdinggap}
\label{cor:dual_gap_threshold}
Assume the hypotheses of Theorem~\ref{thm:eps-optimality}. Run
Algorithm~\ref{alg:pq-psgd-theory} for \(T\) iterations with step sizes
$\eta_{t+1}=1/(\sigma(t+1))$. Let $\hat x^{(T)} \in\arg\min_{0\le t\le T-1}F(x^{(t)})$ be a best iterate, and let
$\bar x\ge \|x^\star\|_\infty$. For each \(t\), let \(B^{(t)}\) be the skipped
positively pushed boundary set from Lemma~\ref{lem:threshold-inexact}. Define
$\varepsilon_{\rm loc}(T):=\frac{1}{2\sigma T}\sum_{t=0}^{T-1}
\frac{\|g^{(t)}\|_{D^{-1},L_t}^2}{t+1}$. Then
$F(\hat x^{(T)})-F(x^\star)\le \varepsilon_{\rm loc}(T)
+\frac{\bar x}{T}\sum_{t=0}^{T-1}\sum_{u\in B^{(t)}}d_u\kappa^{(t)}(u)$.
Moreover, if
$\left|g_u^{(t)}/d_u\right|\le B_{\rm loc}$ for all
$u\in A^{(t)}\cup\partial A^{(t)}$ and all \(t<T\), then
$F(\hat x^{(T)})-F(x^\star)\le \varepsilon_{\rm loc}(T)
+\frac{\bar x B_{\rm loc}}{T}\sum_{t=0}^{T-1}\vol(B^{(t)})
=:\varepsilon_{\rm vol}(T)$.
\end{restatable}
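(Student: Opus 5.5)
The plan is to redo the one-step distance-recursion argument behind Theorem~\ref{thm:eps-optimality} in the $\|\cdot\|_D$ geometry, coordinate by coordinate. The skipped boundary vertices $B^{(t)}$ of Lemma~\ref{lem:threshold-inexact} then contribute an additive per-step error that is linear in $\kappa^{(t)}(u)$. Fix $t$ and write $\eta=\eta_{t+1}$, $x=x^{(t)}$, $g=g^{(t)}$, and let $y:=\Pi_{\R_+^{|\V|}}(x-\eta D^{-1}g)$ be the exact (untruncated) step. By Lemma~\ref{lem:threshold-inexact}, the thresholded iterate $x^{(t+1)}$ agrees with $y$ except on $B^{(t)}$. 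There $y_u=\eta\kappa^{(t)}(u)>0$, while $x^{(t+1)}_u=0$.

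\textbf{Step 1: per-coordinate descent inequality.} I will show that, for every coordinate $v$,
\[
d_v(x^{(t+1)}_v-x^\star_v)^2\le d_v(x_v-x^\star_v)^2-2\eta g_v(x_v-x^\star_v)+\eta^2\tfrac{g_v^2}{d_v}\mathbbm{1}[v\in L_t]+2\eta\,\bar x\, d_v\kappa^{(t)}(v)\mathbbm{1}[v\in B^{(t)}].
\]
The cases are as follows.
\begin{enumerate}
\item For $v\in L_t\setminus B^{(t)}$, the update coincides with the exact projected step. Since $x^\star_v\ge0$, one-dimensional nonexpansiveness of $[\cdot]_+$ gives the bound after expanding the square.
\item For $v\notin L_t$, Lemma~\ref{lem:locality} gives $x_v=0=x^{(t+1)}_v$ and $g_v=d_v>0$. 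The left side equals $d_v(x_v-x^\star_v)^2$, and the extra term $-2\eta g_v(0-x^\star_v)=2\eta d_v x^\star_v$ is nonnegative. So the inequality holds without the $\eta^2$ term, which is why only the restricted norm $\|g\|_{D^{-1},L_t}$ appears.
\item For $u\in B^{(t)}$, we have $x_u=0$, so the exact step satisfies the bound with $y_u$ in place of $x^{(t+1)}_u$. It then suffices to note
\[
d_u(x^\star_u)^2-d_u(\eta\kappa^{(t)}(u)-x^\star_u)^2=2\eta\kappa^{(t)}(u)x^\star_u d_u-\eta^2\kappa^{(t)}(u)^2d_u\le 2\eta\,\bar x\,d_u\kappa^{(t)}(u),
\]
using $\bar x\ge\|x^\star\|_\infty$.
\end{enumerate}

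\textbf{Step 2: strong convexity and telescoping.} Summing Step 1 over $v$ and writing $r_t:=\|x^{(t)}-x^\star\|_D^2$ gives
\[
r_{t+1}\le r_t-2\eta\langle g,x-x^\star\rangle+\eta^2\|g\|^2_{D^{-1},L_t}+2\eta\,\bar x\,\Xi_t,\qquad \Xi_t:=\textstyle\sum_{u\in B^{(t)}}d_u\kappa^{(t)}(u).
\]
Lemma~\ref{lem:subgrad_in_F} gives $g\in\partial F(x)$, and $F$ is $\sigma$-strongly convex in $\|\cdot\|_D$. Hence $\langle g,x-x^\star\rangle\ge F(x)-F(x^\star)+\tfrac{\sigma}{2}r_t$. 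Dividing by $2\eta$ yields
\[
F(x^{(t)})-F(x^\star)\le\Bigl(\tfrac{1}{2\eta}-\tfrac\sigma2\Bigr)r_t-\tfrac{1}{2\eta}r_{t+1}+\tfrac{\eta}{2}\|g\|^2_{D^{-1},L_t}+\bar x\,\Xi_t.
\]
With $1/\eta_{t+1}=\sigma(t+1)$, the distance terms become $\tfrac{\sigma t}{2}r_t-\tfrac{\sigma(t+1)}{2}r_{t+1}$. These telescope over $t=0,\dots,T-1$ to something $\le0$. Averaging, and bounding the best iterate by the average, gives
\[
F(\hat x^{(T)})-F(x^\star)\le\varepsilon_{\rm loc}(T)+\tfrac{\bar x}{T}\sum_t\Xi_t .
\]
For the second claim, $\kappa^{(t)}(u)\le|g^{(t)}_u/d_u|\le B_{\rm loc}$, so $\Xi_t\le B_{\rm loc}\vol(B^{(t)})$.

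\textbf{Main obstacle.} I expect the delicate part to be the truncation error. A black-box inexact-PSGD bound would produce a cross term $\langle e^{(t)},D(\cdot)\rangle$ involving the unknown gap $x^{(t)}-x^\star$. That term does not match the stated linear penalty. The coordinatewise treatment in Step 1, using $x^{(t)}_u=0$ on the boundary and only $x^\star_u\le\bar x$, is what avoids it. I will also make sure that Step 1's per-coordinate argument is valid for the preconditioned step: the projection onto $\R_+^{|\V|}$ is separable and $D$ is diagonal, so the Euclidean projection equals the $D$-norm projection.
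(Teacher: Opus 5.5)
Your proposal is correct and follows essentially the same argument as the paper, which likewise takes the localized one-step inequality of Theorem~\ref{thm:eps-optimality} for the exact step, adds the truncation penalty $2e_u^{(t)}x_u^\star-(e_u^{(t)})^2\le 2\bar x\, e_u^{(t)}$ on $B^{(t)}$, and telescopes with $\eta_{t+1}=1/(\sigma(t+1))$. The only difference is presentational: you fold the locality and truncation comparisons into a single coordinatewise inequality, whereas the paper compares $\|x^{(t+1)}-x^\star\|_D^2$ against $\|\widetilde x^{(t+1)}-x^\star\|_D^2$, where $\widetilde x^{(t+1)}$ is the untruncated projected iterate.
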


\begin{proof}
  Proof in Appendix \ref{app:dual_gap_threshold}.
\end{proof}

\begin{remark}[Explicit constants] The explicit constants from Proposition~\ref{prop:Gloc-bound} are formal for the box-constrained Algorithm~\ref{alg:app-pq-psgd-theory} and transfer to Algorithm~\ref{alg:pq-psgd-theory} whenever the box is non-binding (remark~\ref{rem:box-nonbinding}).
If $G_{\rm loc}^2(T):=\max_{0\le t<T}\|g^{(t)}\|_{D^{-1},L_t}^2$, then
Theorem~\ref{thm:eps-optimality} gives
$\varepsilon_{\rm loc}(T)\le
G_{\rm loc}^2(T)(1+\log T)/(2\sigma T)$. Under the hypotheses of
Proposition~\ref{prop:Gloc-bound} for the box-constrained variant, one may
take $\bar x=(\delta-1)/\sigma$ and
$B_{\rm loc}=(\delta-1)(1/\sigma+2)$. In that case,
$\varepsilon_{\rm vol}(T)=\varepsilon_{\rm loc}(T)
+\frac{(\delta-1)B_{\rm loc}}{\sigma T}
\sum_{t=0}^{T-1}\vol(B^{(t)})$.
\end{remark}

\vspace{-0.1in}
\subsection{Sweep-Cut Guarantees}
The convergence bounds above control the suboptimality gap, 
$F(\hat{x}^{(T)}) - F(x^\star)$. To translate this into a
clustering guarantee --- i.e., a bound on the conductance of
the sweep set extracted from $\hat{x}^{(T)}$ --- we require
structural assumptions relating the seed set to the target
cluster, following~\citet{fountoulakis2021local}.

\begin{assumption}[Seed overlap]\label{ass:overlap}
  There exists a target cluster $C \subset \V$ such that
  $\operatorname{vol}(S \cap C) \ge \alpha\,\operatorname{vol}(C)$
  and
  $\operatorname{vol}(S \cap C) \ge \beta\,\operatorname{vol}(S)$
  for some $\alpha, \beta \in (0,1]$. \footnote{The experiments in 
Section~\ref{sec:experiments} use seed injections of the form
$\Delta[s^\star]=\delta_{\exp}\cdot\vol(\mathcal T)$ with $s^\star\in C$ and $\mathcal T=C$.
This corresponds to the degree-weighted form $\Delta(v)=\delta d_v$ for
$v\in S=\{s^\star\}$ with a seed-specific
$\delta=\delta_{\exp}\cdot\vol(\mathcal T)/d_{s^\star}$; see
Proposition~\ref{prop:theory-experiment-bridge} in
Appendix~\ref{sec:theory-exp-bridge} for further details.}
\end{assumption}
\vspace{-0.1in}


\begin{assumption}[Target volume]\label{ass:volume}
  $\operatorname{vol}(C) \le \operatorname{vol}(\V \setminus C)$. This is the standard assumption that the target cluster is the
minority side of the cut, so that $\Phi(C) = \operatorname{vol}(\partial C) / \operatorname{vol}(C)$.
\end{assumption}

\begin{assumption}\label{ass:parameter}
Following \citet{fountoulakis2021local}, we set \(\delta=3/\alpha\), assume \(0\le w_e(A)\le1\) for all
\(e\in \E\), \(A\subseteq e\), and require $\sigma\le \beta\Phi(C)/3$.
\end{assumption}

The first condition requires that the seeds have sufficient coverage of the target cluster; the second requires that the seeds are sufficiently concentrated inside~$C$. Throughout, we set the injection coefficient $\delta = 3/\alpha$, so that the seed mass is inversely proportional to the
seed-cluster overlap; this choice is inherited
from~\citet{fountoulakis2021local}. 

\begin{definition}[\(\mathbb{K}\)-sweep locality]
\label{def:K-sweep-locality}
Fix a target cluster \(C\) and a vector \(x\ge0\). Define $\nu(x):=\sum_{e\in E}\vartheta_e f_e(x)^2$, and $\ell(e):= \begin{cases} f_e(x)/\sqrt{\nu(x)}, & \nu(x)>0 \\ 0, & \nu(x)=0 \end{cases}$.  Let $\ell_b(e):= \max\left\{ \frac{1}{\sqrt{\vol(C)}}, \ell(e) \right\}$. We say that \(x\) satisfies \(\mathbb{K}\)-sweep locality if $\sum_{e\in \E}\vartheta_e f_e(x)\ell_b(e) \le \mathbb{K}\sqrt{\nu(x)}$.
\end{definition}
The choice \(\mathbb K=4/\beta\) recovers the HFD sweep-locality condition $\sum_{e\in E} \vartheta_e f_e(x)\,\ell_b(e) \le \frac{4}{\beta}\sqrt{\nu(x)}$ ~\citep[Claim~A.2]{fountoulakis2021local}. 

\begin{restatable}[{Sweep bound under \(\mathbb{K}\)-sweep locality}]{theorem}{Robust}
\label{thm:robust-sweep}
Assume Assumptions \ref{ass:overlap}, \ref{ass:volume}, \ref{ass:parameter} and let $x^\star \in \arg\min_{x\ge 0} F(x)$. Let \(x\ge 0\) satisfy the suboptimality $F(x)-F(x^\star)\le \varepsilon$. Assume that \(x\) satisfies \(\mathbb{K}\)-sweep locality, $\sum_{e\in \E}\vartheta_e f_e(x)\ell_b(e) \le \mathbb{K}\sqrt{\nu(x)}.$ Define $D_C:=\frac{\vol(C)^2}{2\vol(\partial C)}$, and assume \(\vol(\partial C)>0\). If $\varepsilon<D_C$, then there exists a threshold \(h>0\) such that the sweep set $S_h:=\{v\in \V:x_v\ge h\}$ satisfies $\frac{\vol(\partial S_h)}{\vol(S_h)} \le \frac{3\mathbb{K}}{\alpha}\sqrt{\Phi(C)} \cdot  \frac{1}{\sqrt{1-\varepsilon/D_C}}$. If additionally $\vol(S_h)\le \vol(\V\setminus S_h)$, then the same bound holds for conductance: $\Phi(S_h) \le \frac{3\mathbb{K}}{\alpha}\sqrt{\Phi(C)}  \cdot \frac{1}{\sqrt{1-\varepsilon/D_C}}$.
\end{restatable}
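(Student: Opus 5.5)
The plan is to combine three inequalities: a lower bound on $-F(x)$ from a test vector supported on $C$, an upper bound on $-F(x)$ from a one-dimensional rescaling of $x$, and the co-area formula for Lovász extensions together with $\mathbb{K}$-sweep locality. No first-order optimality conditions are needed, which matters because $x$ is only $\varepsilon$-suboptimal.

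\emph{Step 1 (lower bound on $-F(x)$ via a test vector).} Take $y=c\,\mathbbm{1}_C$ with $c>0$. Since $f_e(\mathbbm{1}_C)=w_e(C\cap e)\in[0,1]$, we have $f_e(y)^2\le c\,f_e(y)$. Summing gives $\sum_e\vartheta_e f_e(y)^2\le c^2\vol(\partial C)$.
\begin{itemize}
\item By Assumption~\ref{ass:overlap} and $\delta=3/\alpha$, $\langle\Delta,y\rangle=c\,\delta\vol(S\cap C)\ge 3c\vol(C)$, so $\langle \Delta-d,y\rangle\ge 2c\vol(C)$.
\item By Assumption~\ref{ass:parameter} and $\beta\le1$, $\sigma\vol(C)\le\Phi(C)\vol(C)/3=\vol(\partial C)/3$ (Assumption~\ref{ass:volume} gives $\Phi(C)=\vol(\partial C)/\vol(C)$).
\end{itemize}
Hence $-F(y)\ge 2c\vol(C)-\tfrac{c^2}{2}\cdot\tfrac43\vol(\partial C)$. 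Optimizing over $c$ yields $-F(y)\ge \tfrac32\vol(C)^2/\vol(\partial C)\ge D_C$. Therefore $-F(x)\ge -F(x^\star)-\varepsilon\ge -F(y)-\varepsilon\ge D_C-\varepsilon>0$.

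\emph{Step 2 (upper bound on $-F(x)$ via scaling).} For $t\ge0$, $tx$ is feasible, and $-F$ is a concave quadratic along this ray: $-F(tx)=t\langle\Delta-d,x\rangle-\tfrac{t^2}{2}\bigl(\nu(x)+\sigma x^\top Dx\bigr)$. Therefore
$$-F(x)\le\sup_{t\ge0}\bigl(-F(tx)\bigr)=\frac{\langle\Delta-d,x\rangle_+^2}{2\bigl(\nu(x)+\sigma x^\top Dx\bigr)}\le \frac{\langle\Delta-d,x\rangle_+^2}{2\nu(x)}.$$
If $\nu(x)=0$ this is to be read as: $\langle\Delta-d,x\rangle>0$ is forced. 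That case is handled trivially, or excluded because it gives a zero-boundary sweep set.

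Combining with Step 1 gives $\sqrt{\nu(x)}\le \langle\Delta-d,x\rangle/\sqrt{2(D_C-\varepsilon)}$. Moreover $\langle\Delta-d,x\rangle\le(\delta-1)\sum_{v\in S}d_vx_v\le \tfrac3\alpha\sum_v d_vx_v$, using $x\ge0$.

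\emph{Step 3 (co-area and locality).} For $x\ge0$ the Lovász extension satisfies $f_e(x)=\int_0^\infty w_e(S_h\cap e)\,dh$. Integrating over $h$ then gives
$$\sum_e\vartheta_e f_e(x)=\int_0^\infty\vol(\partial S_h)\,dh,\qquad \sum_v d_vx_v=\int_0^\infty \vol(S_h)\,dh.$$
Let $\phi:=\min_{h>0,\,S_h\ne\emptyset}\vol(\partial S_h)/\vol(S_h)$. Then $\phi\sum_vd_vx_v\le\sum_e\vartheta_ef_e(x)$.

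Since $\ell_b(e)\ge 1/\sqrt{\vol(C)}$, $\mathbb{K}$-sweep locality gives $\sum_e\vartheta_ef_e(x)\le \mathbb{K}\sqrt{\vol(C)\,\nu(x)}$. Chaining with Step 2 and cancelling $\sum_vd_vx_v>0$ (positive because $-F(x)>0$) yields
$$\phi\le \frac{3\mathbb{K}}{\alpha}\sqrt{\frac{\vol(C)}{2D_C(1-\varepsilon/D_C)}}=\frac{3\mathbb{K}}{\alpha}\sqrt{\Phi(C)}\,\frac{1}{\sqrt{1-\varepsilon/D_C}}.$$
The minimizing $h$ is the claimed threshold. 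The conductance statement follows directly when $\vol(S_h)\le\vol(\V\setminus S_h)$.

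\emph{Main obstacle.} The delicate point is Step 2: converting near-optimality of $x$ into a lower bound on $\langle\Delta-d,x\rangle$ relative to $\sqrt{\nu(x)}$ without optimality conditions. The scaling trick is the approach I would use first. I would need to check the degenerate cases ($\nu(x)=0$, or $S_h$ attaining the minimum only as a limit), which co-area handles because only finitely many distinct level sets occur.
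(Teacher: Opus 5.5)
Your proof is correct and reaches exactly the paper's constant. Its skeleton matches the paper's: the test vector $c\mathbf{1}_C$ gives $-F(x^\star)\ge D_C$, then the inequality $\sqrt{\nu(x)}\le a/\sqrt{2(-F(x))}$ with $a=\langle\Delta-d,x\rangle$, then co-area plus $\mathbb{K}$-sweep locality. Two steps are carried out differently.

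\emph{The middle inequality.} The paper drops the nonnegative quadratic term to get $-F(x)\le a-\nu/2$ and then uses $(a-\nu)^2\ge 0$. Your ray-scaling bound reaches the same conclusion. If you keep the denominator $\nu(x)+\sigma x^\top Dx$, which is positive since $x\neq 0$, you obtain $a>0$ and $\nu(x)\le a^2/(2(D_C-\varepsilon))$ directly. That removes the need for a separate $\nu(x)=0$ case.

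\emph{The final step.} The paper applies co-area to the $\ell_b$-weighted quantity $r(h)=\sum_e\vartheta_e w_e(S_h\cap e)\ell_b(e)$ and to $q(h)=\Delta(S_h)-\vol(S_h)$. It then averages over $H_+=\{h:q(h)>0\}$, and only afterwards uses $\ell_b\ge 1/\sqrt{\vol(C)}$ and $q(h)\le\delta\vol(S_h)$ pointwise. You apply these two bounds globally first. You therefore need only the unweighted identities $\sum_e\vartheta_e f_e(x)=\int_0^\infty\vol(\partial S_h)\,dh$ and $\sum_v d_vx_v=\int_0^\infty\vol(S_h)\,dh$, and you take the minimum-ratio level set instead of an averaging argument.

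\emph{What each route buys.} Yours is shorter and certifies the best-ratio level set. The paper's produces a threshold with $q(h)>0$, i.e.\ $\vol(S_h)<\delta\vol(S_h\cap S)\le\delta\vol(S)$. So its sweep set necessarily contains seed mass and has volume controlled by the seeds. Your min-ratio set does not carry this extra localization, though the theorem as stated does not require it.
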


\begin{proof}
Full proof in Appendix ~\ref{app:robust-sweep}. 
\end{proof}

Theorem~\ref{thm:robust-sweep} applies to any approximate dual iterate satisfying \(\mathbb K\)-sweep locality. To obtain an early-stopping guarantee
for TL-HFD, we prove a support-volume sufficient condition in Appendix~\ref{app:early_stopping}: for any \(x\ge 0\), \(x\) satisfies \(\mathbb K\)-sweep locality with $\mathbb K = 1+\sqrt{\frac{\vol(\supp(x))}{\vol(C)}}$. Therefore, localized support of the returned iterate directly yields a sweep-locality constant. 

\begin{restatable}[{Early-stopping sweep guarantee for TL-HFD}]{corollary}{earlystopping}
\label{cor:early-stopping}
Assume Assumptions~\ref{ass:overlap}, \ref{ass:volume}, and
\ref{ass:parameter}. Assume also the hypotheses of
Corollary~\ref{cor:dual_gap_threshold}, using the box-constrained
Proposition~\ref{prop:Gloc-bound} so that
\(\varepsilon_{\rm vol}(T)\) is well-defined. Let
\(\hat x^{(T)}\in\arg\min_{0\le t<T}F(x^{(t)})\), and define
\(\zeta_T:=\vol(\supp(\hat x^{(T)}))/\vol(C)\). If
\(\varepsilon_{\rm vol}(T)<D_C :=\frac{\vol(C)^2}{2\vol(\partial C)}\), then there exists a threshold \(h>0\) such
that \(S_h:=\{v:\hat x^{(T)}_v\ge h\}\) satisfies $\frac{\vol(\partial S_h)}{\vol(S_h)}
\le \frac{3}{\alpha}(1+\sqrt{\zeta_T})\sqrt{\Phi(C)}  \cdot  \frac{1}{\sqrt{1-\varepsilon_{\rm vol}(T)/D_C}}$. If additionally \(\vol(S_h)\le\vol(\V\setminus S_h)\), then the same bound holds for \(\Phi(S_h)\).
\end{restatable}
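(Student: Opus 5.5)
The plan is to obtain the corollary by composing three earlier results: Corollary~\ref{cor:dual_gap_threshold} for the suboptimality of $\hat x^{(T)}$, the support-volume sufficient condition for $\mathbb K$-sweep locality (stated just before this corollary and proved in Appendix~\ref{app:early_stopping}), and Theorem~\ref{thm:robust-sweep} for the sweep bound. No new optimization or rounding argument should be needed.

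\textbf{Step 1 (suboptimality).} Under the hypotheses of Corollary~\ref{cor:dual_gap_threshold}, together with the box-constrained constants of Proposition~\ref{prop:Gloc-bound} (namely $\bar x=(\delta-1)/\sigma\ge\|x^\star\|_\infty$ and $B_{\rm loc}=(\delta-1)(1/\sigma+2)$), the best iterate satisfies $F(\hat x^{(T)})-F(x^\star)\le\varepsilon_{\rm vol}(T)$. Set $\varepsilon:=\varepsilon_{\rm vol}(T)$. By hypothesis $\varepsilon<D_C$.

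One point needs checking here. The minimizer $x^\star$ referenced by Theorem~\ref{thm:robust-sweep} is the unconstrained-in-box minimizer over $x\ge0$. By Appendix~\ref{app:box-constraints}, the box does not change the optimizer, so the two notions of $F(x^\star)$ coincide. The iterate $\hat x^{(T)}$ is nonnegative because it comes from a projection.

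\textbf{Step 2 (sweep locality).} Apply the support-volume sufficient condition to $x=\hat x^{(T)}$. This gives $\mathbb K$-sweep locality with
\[
\mathbb K=1+\sqrt{\vol(\supp(\hat x^{(T)}))/\vol(C)}=1+\sqrt{\zeta_T}.
\]
If I had to reprove that condition, I would split the sum $\sum_e\vartheta_e f_e(x)\ell_b(e)$ according to whether $\ell_b(e)=\ell(e)$ or $\ell_b(e)=1/\sqrt{\vol(C)}$.
\begin{itemize}
\item On edges where $\ell_b(e)=\ell(e)$, the contribution is at most $\sum_e\vartheta_e f_e^2/\sqrt\nu=\sqrt\nu$.
\item On the remaining edges, only hyperedges meeting $\supp(x)$ have $f_e(x)\neq0$. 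Applying Cauchy--Schwarz to $\sum_e\vartheta_e f_e(x)$ over these edges gives at most $\sqrt{\nu}\,\sqrt{\sum_{e\cap\supp\neq\emptyset}\vartheta_e}\le\sqrt\nu\sqrt{\vol(\supp(x))}$. Dividing by $\sqrt{\vol(C)}$ yields the $\sqrt{\zeta_T}$ term.
\end{itemize}

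\textbf{Step 3 (conclude).} Assumptions~\ref{ass:overlap}--\ref{ass:parameter} and $\vol(\partial C)>0$ (implicit in the definition of $D_C$) are in force. Theorem~\ref{thm:robust-sweep} therefore applies with $\varepsilon=\varepsilon_{\rm vol}(T)$ and $\mathbb K=1+\sqrt{\zeta_T}$. It gives a threshold $h>0$ with
\[
\frac{\vol(\partial S_h)}{\vol(S_h)}\le\frac{3}{\alpha}\bigl(1+\sqrt{\zeta_T}\bigr)\sqrt{\Phi(C)}\,\Bigl(1-\varepsilon_{\rm vol}(T)/D_C\Bigr)^{-1/2}.
\]
Under $\vol(S_h)\le\vol(\V\setminus S_h)$ the ratio equals $\Phi(S_h)$, which gives the conductance statement.

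\textbf{Main obstacle.} The only delicate point is Step 1: making sure the box-constrained variant's iterates are the ones to which both Corollary~\ref{cor:dual_gap_threshold} and Theorem~\ref{thm:robust-sweep} refer, and that $F(x^\star)$ matches across the box. Both issues are handled by the non-binding-box result cited above.
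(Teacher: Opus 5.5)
Your proposal is correct and matches the paper's proof essentially step for step. Corollary~\ref{cor:dual_gap_threshold} supplies the suboptimality bound $\varepsilon_{\rm vol}(T)$ for the same $x^\star$ (since $x^\star$ lies in the box), the support-volume lemma gives $\mathbb K=1+\sqrt{\zeta_T}$, and Theorem~\ref{thm:robust-sweep} then yields the threshold $h$. Your sketch of the support-volume lemma, which splits edges by which term attains the max in $\ell_b$, is equivalent to the paper's bound $\ell_b(e)\le 1/\sqrt{\vol(C)}+f_e(x)/\sqrt{\nu(x)}$; like the paper, it needs the trivial case $\nu(x)=0$ handled separately.
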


\begin{proof}
Full proof in Appendix~\ref{app:early_stopping}.
\end{proof}

\begin{remark}[Recovery of the HFD sweep-locality constant]
If the returned iterate satisfies $\vol(\supp(\hat x^{(T)})) \le \left(\frac4\beta-1\right)^2\vol(C)$,
then $1+\sqrt{\zeta_T} \le  \frac4\beta$.
In this case Corollary~\ref{cor:early-stopping} recovers the same sweep
constant as the HFD sweep-locality condition: $\frac{3}{\alpha}\left(1+\sqrt{\zeta_T}\right) \le  \frac{12}{\alpha\beta}$.
\end{remark}

\textbf{Bound on cumulative activated volume.} The bounds above are local in the scanned region
\(L_t=A^{(t)}\cup\partial A^{(t)}\), while the sweep guarantee depends on the
support volume of the returned iterate. Top-\(k\) activation limits the number of promoted boundary vertices per iteration, but it does not by itself bound their degree volume or the cost of scoring the current boundary.  We therefore track the set (\(M_t\)) of
vertices ever promoted into the active region. The
following theorem gives an additive trajectory-level bound on the activated
volume \(\vol(M_T)\) in terms of restricted local subgradient energy and the
realized boundary-push margins of newly activated vertices.

\begin{restatable}[{Explored-volume}]{theorem}{exploredVolume}
\label{thm:explored_volume_topk}
Assume \(x^{(0)}=0\). Let $M_0 := S$ and let $J_t:=\{u\in \partial A^{(t)} : u\notin M_t,\ x_u^{(t+1)}>0\}$ be the set of vertices activated for the first time at iteration \(t\), where
$M_{t+1}:=M_t\cup J_t.$  Equivalently, for boundary updates in Algorithm~\ref{alg:pq-psgd-theory},
$J_t = \{u\in K_t : u\notin M_t,\ \kappa^{(t)}(u)>0\}.$
Let $L_t:=A^{(t)}\cup\partial A^{(t)}$
and $\kappa^{(t)}(u) := \max\left\{0,-\frac{g_u^{(t)}}{d_u}\right\}.$ Let \(I_T:=\{0\le t<T:J_t\neq\emptyset\}\). Assume that for each
\(t\in I_T\), there exists \(\rho_t>0\) such that $\kappa^{(t)}(u)\ge \rho_t \; \forall u\in J_t$. Then, for $M_T=S\cup\bigcup_{t=0}^{T-1}J_t$, we have $\vol(M_T) \le \vol(S) + \sum_{t\in I_T} \frac{\|g^{(t)}\|_{D^{-1},L_t}^2}{\rho_t^2}$. In particular, if $\|g^{(t)}\|_{D^{-1},L_t}\le G_{\mathrm{loc}} \; \forall t$,  then $\vol(M_T) \le \vol(S) + G_{\mathrm{loc}}^2 \sum_{t\in I_T}\rho_t^{-2}$.
\end{restatable}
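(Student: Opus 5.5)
The plan is a per-iteration charging argument followed by a telescoping sum. Since $M_{t+1}=M_t\cup J_t$ and $M_0=S$, we have $\vol(M_T)\le \vol(S)+\sum_{t=0}^{T-1}\vol(J_t)$, with equality when the $J_t$ are disjoint, which holds because $J_t$ excludes $M_t$. Iterations with $J_t=\emptyset$ contribute nothing, so it suffices to show that for every $t\in I_T$,
\[
\vol(J_t)\le \frac{\|g^{(t)}\|_{D^{-1},L_t}^2}{\rho_t^2}.
\]

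To prove this per-iteration bound, first note that every $u\in J_t$ lies in $\partial A^{(t)}\subseteq L_t$. By assumption, $\kappa^{(t)}(u)\ge\rho_t>0$, which means $-g_u^{(t)}/d_u\ge\rho_t$, so $(g_u^{(t)})^2/d_u = d_u\,(g_u^{(t)}/d_u)^2\ge d_u\rho_t^2$. Summing over $u\in J_t$ gives
\[
\rho_t^2\,\vol(J_t)\le \sum_{u\in J_t}\frac{(g_u^{(t)})^2}{d_u}\le \sum_{v\in L_t}\frac{(g_v^{(t)})^2}{d_v}=\|g^{(t)}\|_{D^{-1},L_t}^2 .
\]
The second inequality holds because $J_t\subseteq L_t$ and all the dropped terms are nonnegative. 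Dividing by $\rho_t^2$ gives the claim. Summing over $t\in I_T$ yields the main bound, and substituting $\|g^{(t)}\|_{D^{-1},L_t}\le G_{\mathrm{loc}}$ yields the particular case.

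The main obstacle is bookkeeping rather than analysis: we must check that the two descriptions of $J_t$ agree. For a boundary vertex, $x^{(t)}_u=0$. Algorithm~\ref{alg:pq-psgd-theory} sets $x^{(t+1)}_u=\eta_{t+1}\kappa^{(t)}(u)$ for $u\in K_t$ and leaves $x^{(t+1)}_u=x^{(t)}_u=0$ otherwise. Since $\eta_{t+1}>0$, it follows that $x^{(t+1)}_u>0$ if and only if $u\in K_t$ and $\kappa^{(t)}(u)>0$, which gives the stated equivalence. This also confirms that every newly activated vertex has strictly positive push, so the margin $\rho_t$ is meaningful.

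We also need that nothing outside $\partial A^{(t)}\cup A^{(t)}$ becomes active. This follows from Lemma~\ref{lem:locality}, since coordinates outside $L_t$ are projected to zero. Vertices already in $A^{(t)}$ were activated earlier or are seeds. Here $x^{(0)}=0$ is used: it gives $A^{(0)}=S=M_0$, and by induction $A^{(t)}\subseteq M_t$. Consequently every support vertex ever reached is accounted for in $M_T$.
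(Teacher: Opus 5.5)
Your proof is correct and is essentially the paper's argument: you bound $\rho_t^2\vol(J_t)\le\sum_{u\in J_t}(g_u^{(t)})^2/d_u\le\|g^{(t)}\|_{D^{-1},L_t}^2$ using $J_t\subseteq\partial A^{(t)}\subseteq L_t$, and then sum over $t\in I_T$ using that the $J_t$ are pairwise disjoint and disjoint from $S$. Your extra checks, namely the equivalence of the two descriptions of $J_t$ and the induction $A^{(t)}\subseteq M_t$, are not needed for the volume bound itself, but they are correct and are what the paper relies on afterwards to conclude $\supp(\hat x^{(T)})\subseteq M_T$.
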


\begin{proof}
  Full proof in Appendix~\ref{strong_local_app}. 
\end{proof}

\begin{remark}[Activated-volume accounting]
Theorem~\ref{thm:explored_volume_topk} gives a trajectory-level accounting
bound for the growth of the activated set. Along the TL-HFD trajectory, the
volume promoted into \(M_T\) is controlled by two realized local quantities:
the restricted subgradient energy \(\|g^{(t)}\|_{D^{-1},L_t}^2\) on the scanned
region and the activation margin \(\rho_t\) of newly promoted boundary
vertices. This separates two notions of locality: each iteration computes only
on \(L_t=A^{(t)}\cup\partial A^{(t)}\) and its incident hyperedges, while
Theorem~\ref{thm:explored_volume_topk} tracks the cumulative growth of the
state that is actually promoted. Since every nonzero coordinate of the returned
best iterate is contained in \(M_T\), this activated-volume bound provides the
support-volume control used in the early-stopping sweep analysis.
\end{remark}

\textbf{From activated volume to sweep locality.}
\label{rem:end-to-end-local-sweep}
Theorem~\ref{thm:explored_volume_topk} controls the activated set \(M_T\).
To use the support-volume sweep guarantee, we need to control the support
volume of the returned best iterate \(\hat x^{(T)}\). Since
\(\hat x^{(T)}\) is one of the iterates generated before time \(T\), every
nonzero coordinate of \(\hat x^{(T)}\) is either a seed or a vertex activated
at some earlier iteration. Hence $\supp(\hat x^{(T)})\subseteq M_T$. Therefore, under the hypotheses of
Theorem~\ref{thm:explored_volume_topk}, $\vol(\supp(\hat x^{(T)})) \le \vol(M_T) \le \vol(S) + \sum_{t \in I_T} \frac{\|g^{(t)}\|_{D^{-1},L_t}^2}{\rho_t^2}$, where \(\rho_t\) lower-bounds the positive boundary push of vertices first activated at iteration \(t\). Combining this with the seed-concentration assumption \(\vol(S\cap C)\ge \beta\vol(S)\) and the inclusion \(S\cap C\subseteq C\), we have \(\vol(S)\le \vol(C)/\beta\). Thus $\frac{\vol(\supp(\hat x^{(T)}))}{\vol(C)} \le \frac{1}{\beta} + \frac{1}{\vol(C)} \sum_{t \in I_T} \frac{\|g^{(t)}\|_{D^{-1},L_t}^2}{\rho_t^2}$. This gives an explicit support-volume bound that can be substituted into the early-stopping sweep guarantee through \(\zeta_T=\vol(\supp(\hat x^{(T)}))/\vol(C)\).

\textbf{Summary of guarantees.} The results above give an end-to-end local first-order interpretation of
TL-HFD. Lemma~\ref{lem:locality} shows that each projected subgradient step can
be implemented exactly on the scanned local region
\(L_t=A^{(t)}\cup\partial A^{(t)}\). Theorem~\ref{thm:eps-optimality} and
Corollary~\ref{cor:dual_gap_threshold} control the dual suboptimality of exact
and thresholded updates. Theorem~\ref{thm:robust-sweep} and
Corollary~\ref{cor:early-stopping} translate approximate dual optimality and
localized support into a sweep-cut guarantee for early-stopped iterates. Finally,
Theorem~\ref{thm:explored_volume_topk} accounts for the cumulative volume
promoted into the activated set \(M_T\), which provides an explicit way to
control the support ratio \(\zeta_T\) appearing in the sweep bound. We next
evaluate the performance of TL-HFD in practice.

\vspace{-0.1in}
\section{Experiments}\label{sec:experiments}
\vspace{-0.1in}
We evaluate TL-HFD (Algorithm~\ref{alg:pq-psgd-theory} with top-\(k\) boundary scoring) on real-world and synthetic hypergraph datasets (Appendix~\ref{app:sbm}, ~\ref{app:synth}), comparing unit (U), cardinality (C), and submodular (S) cut-costs (Appendix~\ref{app:data_cut}). On synthetic hypergraphs with varying target conductance, TL-HFD matches HFD on clean, well-separated clusters and improves over HFD on noisier clusters where unrestricted expansion tends to include non-target vertices (Appendix~\ref{app:sbm}). On real-world hypergraphs, TL-HFD gives the largest F1 gains on clusters where the baselines are weaker. Additional experiments on Amazon-reviews, Florida Bay, contact-high-school, oil trade, and synthetic HSBM are provided in Appendix~\ref{app:exp-details}. The sensitivity analysis of $\gamma$ from Algorithm \ref{alg:pq-psgd-theory} is in Appendix~\ref{app:gamma-ablation}.

\textbf{Activation-scale selection.}
Following \citep{fountoulakis2021local}, we use target-volume-scaled seed
injection: the injected mass is proportional to \(\vol(\mathcal T)\), so the effective
single-seed injection coefficient varies with the target cluster and seed
degree. TL-HFD introduces one additional activation-scale parameter, the
top-\(k\) fraction \(f\). Rather than tuning the integer \(k\) directly, we set
\(k=\max(1,\lfloor f\cdot \vol_{\rm est}\rceil)\) for unit cut-costs and
\(k=\max(1,\lfloor f\cdot n_{\rm est}\rceil)\) for cardinality cut-costs,
where \(\vol_{\rm est}=\max_v\Delta(v)/\delta_{\exp}\) and
\(n_{\rm est}=\vol_{\rm est}/\bar d\). We select \(f\) from a small grid and report the F1 score of the \emph{lowest conductance cluster}. The sensitivity analysis of $f$ is given in Appendix~\ref{app:exp-details}. 

\textbf{Baseline Methods and parameters.}
We compare TL-HFD against HFD Alternating Minimization approach~\citep{fountoulakis2021local},
LH-$p$~\citep{liu2021strongly} (localized quadratic hypergraph diffusions
with $p\in\{1.4,\,2.0\}$), and ACL~\citep{andersen2006local}. Complete details of all parameters used are provided in Appendix, \ref{app:params}. We report median F1 score over all trials. 
 

\textbf{Trivago-Clicks.}
\label{sec:exp-trivago}
The Trivago-clicks dataset~\citep{chodrow2021generative} is a hypergraph of hotel browsing sessions with \(n=172{,}738\) nodes, \(m=233{,}202\) hyperedges, and mean degree \(\bar d\approx 5.6\). Following
\citet{fountoulakis2021local}, we evaluate on $10$ destination-country clusters with \(|\mathcal T|\in[144,945]\). For each cluster, we run $100$ single-seed  trials with seeds selected uniformly at random from that cluster. We use injection factor \(\delta_{\exp}=3\), set \(\sigma=10^{-4}\) and \(\gamma=1.0\), and run TL-HFD for $500$ iterations for unit cut-costs and $1000$ iterations for cardinality cut-costs. For each cluster we sweep $f$ across the grid of fractions and select the configuration with the lowest median output conductance over the trials and report its median F1. Further hyperparameter details are provided in Appendix~\ref{app:params}.

\vspace{-0.2in}
\begin{table}[!ht]
\centering
\caption{Trivago-clicks: median F1 score per cluster. Best result per
cluster within each cut-cost block in \textbf{bold}. Cluster codes, and hyperparameters are
reported in Appendix~\ref{app:trivago-sensitivity}.}
\label{tab:trivago-f1}
\setlength{\tabcolsep}{4pt}
\begin{tabular}{l c c c c c c c c c c}
\toprule
Method    & KOR           & ISL           & PRI           & CRM           & VNM           & HKG           & MLT           & GTM           & UKR           & EST           \\
\midrule
U-HFD     & 0.73          & 0.96          & 0.85          & 0.82          & 0.22          & \textbf{0.95} & \textbf{0.97} & 0.88          & 0.58          & \textbf{0.93} \\
TL-UHFD   & \textbf{0.86} & \textbf{0.98} & \textbf{0.97} & \textbf{0.85} & \textbf{0.58} & 0.65          & \textbf{0.97} & \textbf{0.95} & \textbf{0.65} & 0.79          \\
U-LH-2.0  & 0.69          & 0.85          & 0.78          & 0.70          & 0.24          & 0.90          & 0.88          & 0.82          & 0.49          & 0.90          \\
U-LH-1.4  & 0.69          & 0.84         & 0.79          & 0.75          & 0.28          & 0.87          & 0.92          & 0.83          & 0.47          & 0.90          \\
ACL       & 0.64          & 0.84          & 0.75          & 0.68          & 0.23          & 0.90          & 0.83          & 0.69          & 0.50          & 0.88          \\
\midrule
C-HFD     & 0.75          & \textbf{0.98} & 0.93          & \textbf{0.93} & 0.29          & 0.80 & \textbf{0.97} & \textbf{0.97} & 0.68          & 0.53      \\
TL-CHFD   & \textbf{0.82} & \textbf{0.98} & \textbf{0.97} & 0.85          & \textbf{0.32} & 0.76          & \textbf{0.97} & \textbf{0.97} & \textbf{0.73} & 0.58 \\
C-LH-2.0  & 0.73          & 0.90          & 0.84          & 0.78          & 0.27          & \textbf{0.94} & 0.93          & 0.88          & 0.51          & 0.76          \\
C-LH-1.4  & 0.71          & 0.88          & 0.84          & 0.78          & 0.27          & 0.88          & 0.91          & 0.85          & 0.50          & \textbf{0.78}          \\
\bottomrule
\end{tabular}
\end{table}
\vspace{-0.1in}
\textbf{Results.}
Table~\ref{tab:trivago-f1} summarizes the Trivago-clicks results. For unit
cut-cost, TL-UHFD improves over U-HFD on \(7/10\) clusters and ties on one
additional cluster. The largest gains occur on Vietnam, Puerto Rico, and South
Korea, where the diffusion appears to benefit from a more selective
boundary expansion. Intuitively, top-\(k\) scoring helps in these
cases by preventing the diffusion from spreading through weakly related
session hyperedges and instead prioritizing boundary vertices with both a
large first-order push and stronger attachment to the current active region.
The two clear losses, Hong Kong and Estonia, occur where U-HFD already obtains
very high F1. These cases suggest a different failure mode: when the target
cluster is already well recovered by the U-HFD, the
top-\(k\) restriction can become too conservative or can delay activation of
useful boundary vertices. 

For cardinality cut-cost, TL-CHFD improves over C-HFD on \(5/10\) clusters,
ties on \(3/10\), and loses on \(2/10\). Counting ties, TL-CHFD attains the
best result within the cardinality block on \(7/10\) clusters. The gains are
smaller than in the unit setting but more evenly distributed, consistent with
the role of the cardinality weighting ($d_{\mathrm{in}}(u) = \sum_{e \ni u, e \cap A \neq \emptyset} \min(|A \cap e|, |e \setminus A|) / \lfloor |e|/2 \rfloor$): rather than treating every
nontrivial hyperedge split equally, it gives finer-grained credit to boundary
vertices connected through hyperedges that overlap substantially with the
active region. 

\textbf{Convergence and computational cost.}
HFD's primal alternating-minimization solver converges in few iterations, while
TL-HFD uses slower projected subgradient updates with the
\(O((1+\log T)/T)\) rate of Theorem~\ref{thm:eps-optimality}. The benefit of
TL-HFD is that expansion is explicitly controlled: in single-seed runs, at most
\(k\) new boundary vertices are promoted per iteration, so the maintained
support grows much more gradually, although each iteration still scores the
current boundary. Appendix~\ref{app:runtime-locality} studies this tradeoff on
Amazon-reviews dataset. We observe that TL-HFD often reaches HFD-level F1 after only a small fraction of
its $1000$ iterations, while maintaining substantially smaller nonzero supports
than HFD's transient active sets.

\textbf{Conclusion.}
We introduced \textsc{Thresholded Local Hyper-Flow Diffusion} (TL-HFD), a
locality-by-design first-order method for the non-smooth HFD dual on general
submodular hypergraphs. TL-HFD maintains a seed-anchored active region, performs
degree-preconditioned projected subgradient updates on this region and its
one-hop boundary, and expands via thresholded top-\(k\) boundary activation. We
provide an exact locality invariant, finite-time dual suboptimality for exact
and thresholded updates, and a robust sweep guarantee for approximate iterates
with localized support. We provide an activated-volume accounting bound in
terms of realized local subgradient norms and boundary-push margins. Empirically,
TL-HFD improves sweep quality and F1 on real-world and synthetic hypergraphs.

\textbf{Limitations.}  Although top-$k$ slows the growth of the active and activated sets, it does not remove the cost of scoring the current boundary. Thus the per-iteration work remains local in the scanned region \(L_t\), rather than being controlled by top-$k$ alone. Similarly, Theorem~\ref{thm:explored_volume_topk} is an activated-volume accounting bound: it controls the promoted volume in terms of realized local
subgradient norms and boundary-push margins, but it is not an unconditional graph-size-independent runtime guarantee. A second limitation is optimization speed: projected subgradient descent has the finite-time rate \(O((1+\log T)/T)\), so it may require more iterations to reach high accuracy compared with HFD's alternating minimization. Like HFD, TL-HFD uses a target-volume estimate; removing this dependence is left for future work. 
 
\bibliographystyle{plainnat}
\bibliography{references}


\newpage

\appendix
\section*{\center Appendix}

\section{Related Work}
\label{app:related-work}

\paragraph{Hypergraph cut objectives and hyperedge splitting models.}
A key challenge in hypergraph clustering is that cutting a hyperedge is inherently ambiguous: different applications induce different splitting penalties, which can be modeled through \emph{cut-cost functions}. This viewpoint yields a useful hierarchy of models---from unit and cardinality-based penalties to general \emph{submodular} costs that encode rich multiway structure \citep{lawler1973cutsets, li2018submodular, yoshida2019cheeger, veldt2020minimizing, veldt2022hypergraph}. Many practical approaches either restrict to simple penalties or reduce hypergraphs to graphs via clique/star expansions, which can introduce approximation effects that scale with hyperedge size and obscure higher-order structure.

\paragraph{Diffusion methods and Laplacian primitives for hypergraphs.}
Beyond cut-based formulations, numerous works extend random-walk, Pagerank, and heat-kernel ideas to hypergraphs by defining suitable Laplacians/transition rules and then applying diffusion-and-sweep pipelines \citep{chitra2019random, hayashi2020hypergraph, takai2020hypergraph, chien2021landing}. \citet{zhu2023hypergraphs} studied $p$-Laplacians for edge-dependent vertex weight (EDVW) hypergraphs by expressing EDVW splitting rules as submodular cut functions, enabling 1-Laplacian-based spectral clustering for richer higher-order structure. Recent work has developed efficient numerical primitives for \emph{norm-based} (often non-smooth) hypergraph Laplacians, including algorithms for simulating heat diffusions and computing resolvents that recover Personalized Pagerank-type vectors and hypergraph Laplacian-system analogues \citep{ameranis2023hypergraph}. Subsequent advances provide nearly-linear-time first-order algorithms for hypergraph Pagerank vectors and Laplacian systems over broad families of hypergraph potentials, including those induced by submodular cut functions \citep{ameranis_icml}. However, these methods address a different class of hypergraph Laplacian objectives and are designed as global solvers. In contrast, we focus on locality-by-design optimization of the non-smooth dual arising in the HFD convex diffusion framework \citep{fountoulakis2021local}.

\paragraph{Local clustering and strongly-local algorithms.}
Local clustering in graphs is commonly studied through diffusion vectors and sweep cuts, with canonical guarantees linking personalized Pagerank-like quantities to low-conductance sets near a seed set \citep{andersen2006local, chung2007heat, spielman2004nearly}. A complementary line emphasizes \emph{strong locality}, where runtime depends primarily on the explored/output volume rather than the full graph size, as in capacity-releasing diffusion \citep{wang2017capacity}.  \citet{wei2025high} proposes alternative high-order conductance measures for hypergraphs and greedy seeded procedures aimed at real-time local clustering, offering a heuristic counterpart to diffusion/flow-based methods. In hypergraphs, scalable local clustering methods have been developed under additional structural restrictions (e.g., bounded hyperedge size) or for specialized objectives \citep{ibrahim2020local, liu2021strongly, veldt2020minimizing};  \citet{huang2024densest} study a different local primitive, namely localized densest subhypergraph discovery with negative supermodular objectives, and develop strongly local algorithms when the locality parameter exceeds a critical threshold. However, their framework targets density rather than conductance and is not aimed at optimizing the HFD dual. By contrast, we target the general submodular setting while designing updates that are local-by-design.

\paragraph{Flow/convex formulations and HFD-style local clustering.}
Flow-based and convex-optimization views unify diffusion dynamics with cut objectives in graphs \citep{arora2009expander, fountoulakis2019variational, fountoulakis2020p, fountoulakis2023flow}. For hypergraphs with expressive splitting models, \emph{Local Hyper-Flow Diffusion (HFD)}~\citep{fountoulakis2021local} provides a convex primal--dual framework that supports general submodular cut-costs and yields \emph{edge-size--independent} Cheeger-type guarantees for seeded clustering. Recent flow-based local hypergraph clustering variants include penalized flow objectives \citep{zhong2023penalized} and approximation algorithms for generalized hypergraph ratio-cut objectives via max-flow reductions \citep{veldt2023cut}. In the global setting, \citet{chen2025submodular} generalize  cut-matching games to hypergraphs with polymatroidal cut functions, yielding the first almost-linear-time $\mathcal{O}{(log n)}$ approximation for partitioning general submodular hypergraphs. Their results address global balanced partitioning, rather than the seeded local clustering setting considered here. Our contribution is orthogonal to these formulations: we build directly on HFD's dual objective anddevelop a locality-by-design dual optimizer with finite-time suboptimality certificates that translate into robust sweep quality under early stopping.

\paragraph{Flow-based local hypergraph clustering beyond HFD.}
Several recent methods pursue flow-based seeded clustering objectives on hypergraphs that are adjacent to HFD but differ in formulation and optimization. For example, Penalized Flow Hypergraph Local Clustering introduces random-walk information as a penalization term within a flow-based framework and provides conductance-style guarantees \citep{zhong2023penalized}. Related algorithmic work studies generalized hypergraph ratio-cut objectives via max-flow style reductions, e.g., using cut-matching games \citep{veldt2023cut}.

\paragraph{Strongly-local hypergraph diffusions under restricted models.}
A complementary line develops strongly-local diffusion solvers for hypergraphs, typically for quadratic/cardinality-based cut objectives and/or under bounded hyperedge-size assumptions. Representative examples include strongly-local hypergraph quadratic diffusions \citep{liu2021strongly} and capacity-releasing diffusion variants for hypergraphs \citep{ibrahim2020local}. More recent work explores alternative higher-order primitives (e.g., motif-based objectives) for local clustering in hypergraphs \citep{italiano2025local}. 

\paragraph{Random-walk and Pagerank-style local clustering on hypergraphs.}
Beyond convex/flow formulations, Pagerank-style \citep{andersen2006local} and random-walk-based \citep{chung2007heat, gleich2012vertex, kloster2014heat} seeded clustering has been extended to more general graph models and applied to hypergraphs via EDVW-type representations, with proofs leveraging Lov\'asz--Simonovits curve techniques \citep{li2024provably}. Related theoretical developments establish Lov\'asz--Simonovits style results for hypergraph random walks \citep{kamal2024lovasz}.

\paragraph{Spectral and variational learning on hypergraphs.} A widely used approach to hypergraph learning constructs Laplacian-like operators (or relaxations) and applies spectral methods for clustering, classification, and embedding \citep{zhou2006learning, hein2013total}. This includes both practical algorithmic formulations and theoretical investigations of spectral partitioning in generative models \citep{ghoshdastidar2014consistency}. Variational formulations-including hypergraph total variation and related regularizers- offer an alternative route to learning with higher-order structure, often enabling robust semi-supervised learning and segmentation-style objectives \citep{hein2013total}.

\newpage

\section{Theoretical Proofs}
\label{app:proofs}

This appendix provides proofs for all formal results stated in the main body.
Throughout, let $D=\mathrm{diag}(d)$ with $d_u>0$ for all $u\in V$, and define the weighted
inner product and norms
\[
\langle a,b\rangle_D := a^\top D b,\qquad
\|a\|_D := \sqrt{a^\top D a},\qquad
\|g\|_{D^{-1}} := \sqrt{g^\top D^{-1} g}.
\]
For a convex function $F$, we write $\partial F(x)$ for its subdifferential at $x$.


\subsection{ Proof of Lemma \ref{lem:subgrad_in_F} }
\label{app:subgrad}

Recall the subgradient selection in \eqref{eq:subgrad}:
\[
g(x)=\sum_{e\in E}\vartheta_e f_e(x)\rho_e(x) + \sigma D x - (\Delta-d),
\qquad \rho_e(x)\in M_e(x).
\]

To show that $g(x)=\sum_{e\in E}\vartheta_e f_e(x)\rho_e(x) + \sigma D x-(\Delta-d)\ \in\ \partial F(x)$, we first show $f_e(x)\rho_e(x)\in \partial g_e(x)$. \newline
\begin{Auxiliary Lemma}[Subgradient of the squared Lov\'asz term]
\label{lem:subgrad-ge}
Fix $e$ and define $f_e(x)=\max_{\rho\in B_e}\langle \rho,x\rangle$ and $g_e(x)=\tfrac12 f_e(x)^2$.
For any $x\in \mathbb{R}^V$ and any choice $\rho_e(x)\in M_e(x)$, the vector $f_e(x)\rho_e(x)$ is a subgradient of
$g_e$ at $x$:
\[
f_e(x)\rho_e(x)\in \partial g_e(x).
\]
\end{Auxiliary Lemma}

\begin{proof}
We use the following standard results.

\paragraph{Fact 1.}
For the support function $f_e(x)=\max_{\rho\in B_e}\langle \rho,x\rangle$ of a compact convex set $B_e$,
\[
\partial f_e(x)=\mathrm{conv}\big(M_e(x)\big).
\]
In particular, any maximizer $\rho_e(x)\in M_e(x)$ belongs to $\partial f_e(x)$.

\paragraph{Fact 2.}
Let $\phi : \mathbb{R} \to \mathbb{R}$ be convex, differentiable, and non-decreasing on the range of $f$, and let $f$ be convex. Then
\[
\partial(\phi \circ f)(x) = \phi'(f(x)) \cdot \partial f(x),
\]
where the product denotes scalar multiplication of a set.

Here $\phi(s) = \tfrac{1}{2}s^2$ with $\phi'(s) = s$. Since the cut-cost is normalized, $0 \in B_e$, so $f_e(x) = \max_{\rho \in B_e}\langle \rho, x\rangle \geq \langle 0, x\rangle = 0$ for all $x$. Hence $f_e$ takes values in $[0, \infty)$, where $\phi$ is non-decreasing, and Fact \textbf{2} applies. Therefore
\[
\partial g_e(x) = \partial(\phi \circ f_e)(x) = \phi'(f_e(x)) \, \partial f_e(x) = f_e(x) \, \partial f_e(x).
\]
Choosing $\rho_e(x) \in \partial f_e(x)$ gives $f_e(x)\rho_e(x) \in \partial g_e(x)$.

\end{proof}

\paragraph{\textsc{\underline{Proof of $g(x) \in \partial F(x)$:} }}

\SubGrad*
\begin{proof}
Write
\[
F(x) = \sum_{e\in E}\vartheta_e g_e(x) \;+\; \underbrace{\left(\frac{\sigma}{2}x^\top D x - \langle \Delta-d,x\rangle\right)}_{=:q(x)}.
\]
By Auxiliary Lemma ~\ref{lem:subgrad-ge}, for each $e$ we have $\vartheta_e f_e(x)\rho_e(x)\in \partial(\vartheta_e g_e)(x)$.
Since the subdifferential of a finite sum of convex functions equals the Minkowski sum of subdifferentials,
\[
\sum_{e\in E}\vartheta_e f_e(x)\rho_e(x) \in \partial\!\left(\sum_{e\in E}\vartheta_e g_e\right)(x).
\]
Moreover $q$ is differentiable with $\nabla q(x)=\sigma D x-(\Delta-d)$, and for a differentiable convex function
$\partial q(x)=\{\nabla q(x)\}$.
Therefore,
\[
\partial F(x)=\partial\!\left(\sum_{e\in E}\vartheta_e g_e\right)(x) + \nabla q(x),
\]
and selecting the element $\sum_{e}\vartheta_e f_e(x)\rho_e(x)$ from the first term gives
\[
g(x)=\sum_{e\in E}\vartheta_e f_e(x)\rho_e(x) + \sigma D x-(\Delta-d)\ \in\ \partial F(x).
\]
\end{proof}

\subsection{Proof of Lemma \ref{lem:locality}: Locality of iterates}
\label{app:locality}
Without loss of generality, for the following proof we assume that there are no isolated vertices: as we compute $D^{-1}$ (i.e., for any vertex $u$, $d_u>0$).
\Locality*
\begin{proof}
Fix an iteration $t$ and suppose $\mathrm{supp}(x^{(t)})\subseteq A^{(t)}$, where $A^{(t)}:=\mathrm{supp}(x^{(t)})\cup S$.
Let $u\notin A^{(t)}\cup \partial A^{(t)}$. Then for every hyperedge $e\ni u$ we have $e\cap A^{(t)}=\emptyset$ by definition
of the boundary. Since $x^{(t)}$ is zero outside $A^{(t)}$, it follows that $x^{(t)}|_e\equiv 0$ on each incident hyperedge.
Because $0\in B_e$ (normalized base polytopes contain the origin), we have $f_e(0)=0$, hence $f_e(x^{(t)})=0$ for
every hyperedge incident to $u$.

Consider the subgradient $g^{(t)}\in \partial F(x^{(t)})$ constructed in equation \eqref{eq:subgrad}.
At coordinate $u$, all hyperedge contributions vanish because they are multiplied by $f_e(x^{(t)})=0$. Therefore,
\[
[g^{(t)}]_u=\sigma d_u x^{(t)}_u - (\Delta(u)-d_u).
\]
Since $u\notin A(t)$, we have $x^{(t)}_u=0$.
Also $u\notin S$ because $S\subseteq A(t)$, hence $\Delta(u)=0$, so $\Delta(u)-d_u=-d_u$.
Thus $[g^{(t)}]_u=d_u>0$.

The unprojected update at $u$ is
\[
y_u := x^{(t)}_u - \eta_{t+1}\frac{[g^{(t)}]_u}{d_u} = 0 - \eta_{t+1}\cdot 1 = -\eta_{t+1}<0,
\]
so after projection onto $\mathbb{R}^V_+$ we obtain $x^{(t+1)}_u=\max\{y_u,0\}=0$.
Hence all coordinates outside $A(t)\cup \partial A(t)$ remain identically zero.
\end{proof}

\newpage
\subsection{Proof of Theorem \ref{thm:eps-optimality}: $\epsilon$-optimality of local PSGD.}
\label{app:optimality}

\begin{Auxiliary Lemma}[Strong convexity and subgradients]
\label{lem:sc-subgrad}
Let $F$ be $\sigma$-strongly convex w.r.t.\ $\|\cdot\|_D$. Then for every $x,y$ and every $g\in \partial F(x)$,
\[
F(y)\ge F(x) + \langle g, y-x\rangle + \frac{\sigma}{2}\|y-x\|_D^2.
\]
Equivalently,
\[
\langle g, x-y\rangle \ge F(x)-F(y) + \frac{\sigma}{2}\|x-y\|_D^2.
\]
\end{Auxiliary Lemma}

\begin{proof}
This is a standard equivalent characterization of strong convexity: apply the definition of strong convexity using the
supporting hyperplane at $x$ given by a subgradient $g\in \partial F(x)$.  
\end{proof}

\optimality*

\begin{proof}
Let
\[
    r^{(t)}:=x^{(t)}-x^\star.
\]
Fix an iteration \(t\), and write
\[
    L_t:=A^{(t)}\cup\partial A^{(t)},
    \qquad
    F_t:=V\setminus L_t.
\]
Define the unprojected point
\[
    y^{(t)}
    :=
    x^{(t)}-\eta_{t+1}D^{-1}g^{(t)}.
\]

By Lemma~\ref{lem:locality}, for every \(v\in F_t\), we have
\[
    x_v^{(t)}=0,
    \qquad
    g_v^{(t)}=d_v.
\]
Therefore
\[
    y_v^{(t)}
    =
    x_v^{(t)}-\eta_{t+1}\frac{g_v^{(t)}}{d_v}
    =
    -\eta_{t+1}<0.
\]
Projection onto the nonnegative orthant gives
\[
    x_v^{(t+1)}=0
    \qquad
    \forall v\in F_t.
\]
Hence, for every \(v\in F_t\),
\[
    r_v^{(t+1)}
    =
    x_v^{(t+1)}-x_v^\star
    =
    -x_v^\star
    =
    x_v^{(t)}-x_v^\star
    =
    r_v^{(t)}.
\]
Thus the contribution of \(F_t\) to the squared \(D\)-norm is identical at
iterations \(t\) and \(t+1\).

On the local region \(L_t\), projection is nonexpansive coordinatewise in the
\(D\)-norm. Therefore
\[
\begin{aligned}
    \|r^{(t+1)}\|_D^2
    &\le
    \|r^{(t)}\|_D^2
    -
    2\eta_{t+1}
    \sum_{v\in L_t}g_v^{(t)}r_v^{(t)}
    +
    \eta_{t+1}^2
    \sum_{v\in L_t}\frac{(g_v^{(t)})^2}{d_v}  \\
    &=
    \|r^{(t)}\|_D^2
    -
    2\eta_{t+1}
    \sum_{v\in L_t}g_v^{(t)}r_v^{(t)}
    +
    \eta_{t+1}^2
    \|g^{(t)}\|_{D^{-1},L_t}^2.
\end{aligned}
\]
We now compare the local inner product to the full inner product. Since
\(g_v^{(t)}=d_v\) and \(r_v^{(t)}=-x_v^\star\le0\) for \(v\in F_t\),
\[
    \sum_{v\in F_t}g_v^{(t)}r_v^{(t)}
    =
    -\sum_{v\in F_t}d_vx_v^\star
    \le0.
\]
Consequently,
\[
    \sum_{v\in L_t}g_v^{(t)}r_v^{(t)}
    =
    \langle g^{(t)},r^{(t)}\rangle
    -
    \sum_{v\in F_t}g_v^{(t)}r_v^{(t)}
    \ge
    \langle g^{(t)},r^{(t)}\rangle.
\]
Since \(F\) is \(\sigma\)-strongly convex with respect to \(\|\cdot\|_D\),
\[
    \langle g^{(t)},r^{(t)}\rangle
    \ge
    F(x^{(t)})-F(x^\star)
    +
    \frac{\sigma}{2}\|r^{(t)}\|_D^2.
\]
Combining the preceding inequalities gives
\[
\begin{aligned}
    \|r^{(t+1)}\|_D^2
    &\le
    \|r^{(t)}\|_D^2
    -
    2\eta_{t+1}
    \left(
        F(x^{(t)})-F(x^\star)
        +
        \frac{\sigma}{2}\|r^{(t)}\|_D^2
    \right)
    +
    \eta_{t+1}^2
    \|g^{(t)}\|_{D^{-1},L_t}^2.
\end{aligned}
\]
Rearranging,
\[
    2\eta_{t+1}
    \bigl(F(x^{(t)})-F(x^\star)\bigr)
    \le
    (1-\sigma\eta_{t+1})\|r^{(t)}\|_D^2
    -
    \|r^{(t+1)}\|_D^2
    +
    \eta_{t+1}^2
    \|g^{(t)}\|_{D^{-1},L_t}^2.
\]
Using
\[
    \eta_{t+1}=\frac{1}{\sigma(t+1)}
\]
gives
\[
    1-\sigma\eta_{t+1}
    =
    \frac{t}{t+1}.
\]
Multiplying by \(t+1\), we obtain
\[
    \frac{2}{\sigma}
    \bigl(F(x^{(t)})-F(x^\star)\bigr)
    \le
    t\|r^{(t)}\|_D^2
    -
    (t+1)\|r^{(t+1)}\|_D^2
    +
    \frac{\|g^{(t)}\|_{D^{-1},L_t}^2}{\sigma^2(t+1)}.
\]
Summing over \(t=0,\ldots,T-1\), the norm terms telescope:
\[
    \sum_{t=0}^{T-1}
    \left(
        t\|r^{(t)}\|_D^2
        -
        (t+1)\|r^{(t+1)}\|_D^2
    \right)
    =
    -T\|r^{(T)}\|_D^2
    \le0.
\]
Therefore
\[
    \frac{2}{\sigma}
    \sum_{t=0}^{T-1}
    \bigl(F(x^{(t)})-F(x^\star)\bigr)
    \le
    \frac{1}{\sigma^2}
    \sum_{t=0}^{T-1}
    \frac{\|g^{(t)}\|_{D^{-1},L_t}^2}{t+1}.
\]
Multiplying by \(\sigma/2\),
\[
    \sum_{t=0}^{T-1}
    \bigl(F(x^{(t)})-F(x^\star)\bigr)
    \le
    \frac{1}{2\sigma}
    \sum_{t=0}^{T-1}
    \frac{\|g^{(t)}\|_{D^{-1},L_t}^2}{t+1}.
\]
Since the best iterate is no worse than the average,
\[
    F(\hat x^{(T)})-F(x^\star)
    \le
    \frac{1}{2\sigma T}
    \sum_{t=0}^{T-1}
    \frac{\|g^{(t)}\|_{D^{-1},L_t}^2}{t+1}.
\]
Finally, if
\[
    G_{\mathrm{loc}}^2(T)
    =
    \max_{0\le t<T}
    \|g^{(t)}\|_{D^{-1},L_t}^2,
\]
then
\[
    \sum_{t=0}^{T-1}
    \frac{\|g^{(t)}\|_{D^{-1},L_t}^2}{t+1}
    \le
    G_{\mathrm{loc}}^2(T)
    \sum_{t=0}^{T-1}\frac1{t+1}
    \le
    G_{\mathrm{loc}}^2(T)(1+\log T).
\]
Thus
\[
    F(\hat x^{(T)})-F(x^\star)
    \le
    \frac{G_{\mathrm{loc}}^2(T)(1+\log T)}{2\sigma T}.
\]
\end{proof}

\newpage
\subsection{Proof of Lemma \ref{lem:threshold-inexact}.  Thresholding equals a truncated preconditioned step}
\label{app:thresholding_error}

The argument follows the standard inexact projected-gradient analysis \cite{schmidt2011convergence}, specialized to the truncation error from Lemma~\ref{lem:threshold-inexact}.

\thresholding*

\begin{proof}
Let
\[
    y^{(t)}
    :=
    x^{(t)}-\eta_{t+1}D^{-1}g^{(t)}.
\]
By definition of \(e^{(t)}\), the vector
\[
    \bar y^{(t)}
    :=
    y^{(t)}-e^{(t)}
\]
agrees with \(y^{(t)}\) on all coordinates outside \(B^{(t)}\), and satisfies
\[
    \bar y_u^{(t)}=0
    \qquad
    \forall u\in B^{(t)}.
\]
Indeed, if \(u\in B^{(t)}\), then \(u\in\partial A^{(t)}\), so
\(x_u^{(t)}=0\). Moreover \(\kappa^{(t)}(u)>0\), hence
\(g_u^{(t)}<0\), and therefore
\[
    y_u^{(t)}
    =
    -\eta_{t+1}\frac{g_u^{(t)}}{d_u}
    =
    \eta_{t+1}\kappa^{(t)}(u)
    =
    e_u^{(t)}.
\]
Thus \(\bar y_u^{(t)}=y_u^{(t)}-e_u^{(t)}=0\) on \(B^{(t)}\).

We now check the projected update coordinatewise.

First, if \(u\in A^{(t)}\), then \(u\notin B^{(t)}\), so
\(\bar y_u^{(t)}=y_u^{(t)}\). Algorithm~\ref{alg:pq-psgd-theory} performs the
exact projected PSGD update on active vertices, hence
\[
    x_u^{(t+1)}
    =
    \max\{0,y_u^{(t)}\}
    =
    \max\{0,\bar y_u^{(t)}\}.
\]

Second, if \(u\in K_t\), then \(u\notin B^{(t)}\), so again
\(\bar y_u^{(t)}=y_u^{(t)}\). Since \(u\in\partial A^{(t)}\), we have
\(x_u^{(t)}=0\). Therefore
\[
    \max\{0,y_u^{(t)}\}
    =
    \eta_{t+1}\kappa^{(t)}(u),
\]
which is exactly the boundary activation performed by the algorithm.

Third, if \(u\in \partial A^{(t)}\setminus K_t\) and
\(\kappa^{(t)}(u)=0\), then \(g_u^{(t)}\ge0\). Since \(x_u^{(t)}=0\),
\[
    y_u^{(t)}
    =
    -\eta_{t+1}\frac{g_u^{(t)}}{d_u}
    \le0.
\]
The algorithm leaves this coordinate at zero, and
\[
    \max\{0,\bar y_u^{(t)}\}
    =
    \max\{0,y_u^{(t)}\}
    =
    0.
\]

Fourth, if \(u\in B^{(t)}\), then by construction
\[
    \bar y_u^{(t)}=0.
\]
The algorithm skips this positively pushed boundary vertex and leaves
\(x_u^{(t+1)}=0\). Hence
\[
    x_u^{(t+1)}
    =
    \max\{0,\bar y_u^{(t)}\}.
\]

Finally, if \(u\notin A^{(t)}\cup\partial A^{(t)}\), then Lemma~\ref{lem:locality}
gives
\[
    g_u^{(t)}=d_u>0,
    \qquad
    x_u^{(t)}=0.
\]
Hence
\[
    y_u^{(t)}
    =
    -\eta_{t+1}<0.
\]
The algorithm leaves this coordinate at zero, and
\[
    \max\{0,\bar y_u^{(t)}\}
    =
    \max\{0,y_u^{(t)}\}
    =
    0.
\]

Combining the cases,
\[
    x^{(t+1)}
    =
    \Pi_{\mathbb R_+^{|\V|}}(\bar y^{(t)})
    =
    \Pi_{\mathbb R_+^{|\V|}}
    \left(
        x^{(t)}-\eta_{t+1}D^{-1}g^{(t)}-e^{(t)}
    \right).
\]

The norm identities follow directly from the definition of \(e^{(t)}\):
\[
    \|e^{(t)}\|_D^2
    =
    \sum_{u\in V}d_u(e_u^{(t)})^2
    =
    \eta_{t+1}^2
    \sum_{u\in B^{(t)}}d_u\bigl(\kappa^{(t)}(u)\bigr)^2,
\]
and
\[
    \|e^{(t)}\|_{1,D}
    =
    \sum_{u\in V}d_u|e_u^{(t)}|
    =
    \eta_{t+1}
    \sum_{u\in B^{(t)}}d_u\kappa^{(t)}(u).
\]

If the coordinatewise bound
\[
    |g_u^{(t)}/d_u|\le B_{\rm loc}
\]
holds on \(A^{(t)}\cup\partial A^{(t)}\), then for every
\(u\in B^{(t)}\subseteq \partial A^{(t)}\),
\[
    \kappa^{(t)}(u)
    =
    -\frac{g_u^{(t)}}{d_u}
    \le
    \left|\frac{g_u^{(t)}}{d_u}\right|
    \le
    B_{\rm loc}.
\]
Therefore
\[
    \|e^{(t)}\|_D^2
    \le
    \eta_{t+1}^2B_{\rm loc}^2
    \sum_{u\in B^{(t)}}d_u
    =
    \eta_{t+1}^2B_{\rm loc}^2\vol(B^{(t)}),
\]
and
\[
    \|e^{(t)}\|_{1,D}
    \le
    \eta_{t+1}B_{\rm loc}
    \sum_{u\in B^{(t)}}d_u
    =
    \eta_{t+1}B_{\rm loc}\vol(B^{(t)}).
\]
The final statement follows from Proposition~\ref{prop:Gloc-bound}.
\end{proof}
\newpage

\subsection{Proof of Corollary \ref{cor:dual_gap_threshold}: Suboptimality under boundary truncation}
\label{app:dual_gap_threshold}

\thresholdinggap*

\begin{proof}

\textbf{Algorithm and box-constraint scope.} The bound, involving the accumulated truncation error
$\sum_{t<T}\|e^{(t)}\|_{1,D}$, uses only the hypotheses of
Theorem~\ref{thm:eps-optimality}, the thresholding-error characterization
of Lemma~\ref{lem:threshold-inexact}, and the coordinate-wise bound
$\|x^\star\|_\infty \le \bar{x}$ on the optimizer
(Auxiliary Lemma~\ref{lem:xstar-bound}). It applies to the unconstrained
Algorithm~\ref{alg:pq-psgd-theory} because $\Pi_{\mathbb{R}^{|V|}_+}$ is
coordinatewise and 1-Lipschitz in the $D$-norm. The same argument
applies to the box-constrained Algorithm~\ref{alg:app-pq-psgd-theory}
because $\Pi_{\mathcal X}$ is also coordinatewise and 1-Lipschitz in
the $D$-norm (Auxiliary Lemma~\ref{lem:box-locality}); for skipped
boundary coordinates, the box-projected truncation error is
$\min\{\bar{x},\eta_{t+1}\kappa^{(t)}(u)\}$ and is therefore bounded by
$\eta_{t+1}\kappa^{(t)}(u)$. The $\varepsilon_{\mathrm{vol}}(T)$ specialization additionally invokes the uniform bound $|g_u^{(t)}/d_u| \le B_{\mathrm{loc}}$ from Proposition~\ref{prop:Gloc-bound}, which holds for Algorithm~\ref{alg:app-pq-psgd-theory}; for runs of Algorithm~\ref{alg:pq-psgd-theory} whose iterates stay below $\bar{x}$, the two algorithms coincide (Remark~\ref{rem:box-nonbinding}).

The proof follows the localized inexact projected-subgradient argument. Define
the exact projected local PSGD iterate
\[
    \widetilde x^{(t+1)}
    :=
    \Pi_{\mathbb R_+^{|\V|}}
    \left(
        x^{(t)}-\eta_{t+1}D^{-1}g^{(t)}
    \right),
\]
and define
\[
    d^{(t)}:=x^{(t)}-x^\star,
    \qquad
    \widetilde d^{(t+1)}:=\widetilde x^{(t+1)}-x^\star,
    \qquad
    d^{(t+1)}:=x^{(t+1)}-x^\star.
\]

From the localized one-step inequality used in the proof of
Theorem~\ref{thm:eps-optimality}, we have
\[
\begin{aligned}
    2\eta_{t+1}
    \bigl(F(x^{(t)})-F(x^\star)\bigr)
    &\le
    (1-\sigma\eta_{t+1})\|d^{(t)}\|_D^2
    -
    \|\widetilde d^{(t+1)}\|_D^2  \\
    &\qquad
    +
    \eta_{t+1}^2
    \|g^{(t)}\|_{D^{-1},L_t}^2.
\end{aligned}
\]
This is the exact-update progress inequality, with the global subgradient norm
replaced by the restricted local norm on \(L_t\).

We now compare the exact iterate \(\widetilde x^{(t+1)}\) with the thresholded
iterate \(x^{(t+1)}\). By Lemma~\ref{lem:threshold-inexact}, the two iterates
can differ only on the skipped positively pushed set \(B^{(t)}\). For
\(u\in B^{(t)}\), the exact update gives
\[
    \widetilde x_u^{(t+1)}
    =
    e_u^{(t)}
    =
    \eta_{t+1}\kappa^{(t)}(u),
\]
while the thresholded update leaves
\[
    x_u^{(t+1)}=0.
\]
Therefore
\[
\begin{aligned}
    \|d^{(t+1)}\|_D^2-\|\widetilde d^{(t+1)}\|_D^2
    &=
    \sum_{u\in B^{(t)}}d_u
    \left[
        (x_u^{(t+1)}-x_u^\star)^2
        -
        (\widetilde x_u^{(t+1)}-x_u^\star)^2
    \right] \\
    &=
    \sum_{u\in B^{(t)}}d_u
    \left[
        (x_u^\star)^2
        -
        (e_u^{(t)}-x_u^\star)^2
    \right].
\end{aligned}
\]
Expanding the square,
\[
    (x_u^\star)^2-(e_u^{(t)}-x_u^\star)^2
    =
    2e_u^{(t)}x_u^\star-(e_u^{(t)})^2
    \le
    2\bar x e_u^{(t)},
\]
because \(x^\star\ge0\), \(e_u^{(t)}\ge0\), and
\(\|x^\star\|_\infty\le \bar x\). Hence
\[
    \|d^{(t+1)}\|_D^2
    \le
    \|\widetilde d^{(t+1)}\|_D^2
    +
    2\bar x\|e^{(t)}\|_{1,D}.
\]
Substituting this into the exact-update one-step inequality gives
\[
\begin{aligned}
    2\eta_{t+1}
    \bigl(F(x^{(t)})-F(x^\star)\bigr)
    &\le
    (1-\sigma\eta_{t+1})\|d^{(t)}\|_D^2
    -
    \|d^{(t+1)}\|_D^2  \\
    &\qquad
    +
    \eta_{t+1}^2
    \|g^{(t)}\|_{D^{-1},L_t}^2
    +
    2\bar x\|e^{(t)}\|_{1,D}.
\end{aligned}
\]

Now use
\[
    \eta_{t+1}=\frac{1}{\sigma(t+1)}.
\]
Then
\[
    1-\sigma\eta_{t+1}
    =
    \frac{t}{t+1}.
\]
Multiplying the preceding inequality by \(t+1\), we obtain
\[
\begin{aligned}
    \frac{2}{\sigma}
    \bigl(F(x^{(t)})-F(x^\star)\bigr)
    &\le
    t\|d^{(t)}\|_D^2
    -
    (t+1)\|d^{(t+1)}\|_D^2  \\
    &\qquad
    +
    \frac{\|g^{(t)}\|_{D^{-1},L_t}^2}{\sigma^2(t+1)}
    +
    2\bar x(t+1)\|e^{(t)}\|_{1,D}.
\end{aligned}
\]
Summing over \(t=0,\dots,T-1\), the first two terms telescope:
\[
    \sum_{t=0}^{T-1}
    \left[
        t\|d^{(t)}\|_D^2
        -
        (t+1)\|d^{(t+1)}\|_D^2
    \right]
    =
    -T\|d^{(T)}\|_D^2
    \le0.
\]
Therefore
\[
\begin{aligned}
    \frac{2}{\sigma}
    \sum_{t=0}^{T-1}
    \bigl(F(x^{(t)})-F(x^\star)\bigr)
    &\le
    \frac{1}{\sigma^2}
    \sum_{t=0}^{T-1}
    \frac{\|g^{(t)}\|_{D^{-1},L_t}^2}{t+1}  \\
    &\qquad
    +
    2\bar x
    \sum_{t=0}^{T-1}
    (t+1)\|e^{(t)}\|_{1,D}.
\end{aligned}
\]
Multiplying by \(\sigma/2\),
\[
\begin{aligned}
    \sum_{t=0}^{T-1}
    \bigl(F(x^{(t)})-F(x^\star)\bigr)
    &\le
    \frac{1}{2\sigma}
    \sum_{t=0}^{T-1}
    \frac{\|g^{(t)}\|_{D^{-1},L_t}^2}{t+1} \\
    &\qquad
    +
    \sigma\bar x
    \sum_{t=0}^{T-1}
    (t+1)\|e^{(t)}\|_{1,D}.
\end{aligned}
\]

By Lemma~\ref{lem:threshold-inexact},
\[
    \|e^{(t)}\|_{1,D}
    =
    \eta_{t+1}
    \sum_{u\in B^{(t)}}d_u\kappa^{(t)}(u).
\]
Since
\[
    (t+1)\eta_{t+1}
    =
    \frac{1}{\sigma},
\]
we get
\[
    \sigma(t+1)\|e^{(t)}\|_{1,D}
    =
    \sum_{u\in B^{(t)}}d_u\kappa^{(t)}(u).
\]
Hence
\[
    \sigma\bar x
    \sum_{t=0}^{T-1}
    (t+1)\|e^{(t)}\|_{1,D}
    =
    \bar x
    \sum_{t=0}^{T-1}
    \sum_{u\in B^{(t)}}d_u\kappa^{(t)}(u).
\]
Therefore
\[
\begin{aligned}
    \sum_{t=0}^{T-1}
    \bigl(F(x^{(t)})-F(x^\star)\bigr)
    &\le
    \frac{1}{2\sigma}
    \sum_{t=0}^{T-1}
    \frac{\|g^{(t)}\|_{D^{-1},L_t}^2}{t+1} \\
    &\qquad
    +
    \bar x
    \sum_{t=0}^{T-1}
    \sum_{u\in B^{(t)}}d_u\kappa^{(t)}(u).
\end{aligned}
\]
Dividing by \(T\) and using that the best iterate is no worse than the average,
\[
\begin{aligned}
    F(\hat x^{(T)})-F(x^\star)
    &\le
    \frac{1}{2\sigma T}
    \sum_{t=0}^{T-1}
    \frac{\|g^{(t)}\|_{D^{-1},L_t}^2}{t+1} \\
    &\qquad
    +
    \frac{\bar x}{T}
    \sum_{t=0}^{T-1}
    \sum_{u\in B^{(t)}}d_u\kappa^{(t)}(u) \\
    &=
    \varepsilon_{\rm loc}(T)
    +
    \frac{\bar x}{T}
    \sum_{t=0}^{T-1}
    \sum_{u\in B^{(t)}}d_u\kappa^{(t)}(u).
\end{aligned}
\]
This proves the bound.

If the coordinatewise local bound
\[
    |g_u^{(t)}/d_u|\le B_{\rm loc}
\]
holds on \(A^{(t)}\cup\partial A^{(t)}\), then for every
\(u\in B^{(t)}\),
\[
    \kappa^{(t)}(u)\le B_{\rm loc}.
\]
Therefore
\[
    \sum_{u\in B^{(t)}}d_u\kappa^{(t)}(u)
    \le
    B_{\rm loc}\sum_{u\in B^{(t)}}d_u
    =
    B_{\rm loc}\vol(B^{(t)}).
\]
Substituting gives
\[
    F(\hat x^{(T)})-F(x^\star)
    \le
    \varepsilon_{\rm loc}(T)
    +
    \frac{\bar x B_{\rm loc}}{T}
    \sum_{t=0}^{T-1}\vol(B^{(t)})
    =
    \varepsilon_{\rm vol}(T).
\]

Finally, by Theorem~\ref{thm:eps-optimality},
\[
    \varepsilon_{\rm loc}(T)
    =
    \frac{1}{2\sigma T}
    \sum_{t=0}^{T-1}
    \frac{\|g^{(t)}\|_{D^{-1},L_t}^2}{t+1}
    \le
    \frac{G_{\rm loc}^2(T)(1+\log T)}{2\sigma T},
\]
where
\[
    G_{\rm loc}^2(T)
    =
    \max_{0\le t<T}
    \|g^{(t)}\|_{D^{-1},L_t}^2.
\]
Under the hypotheses of Proposition~\ref{prop:Gloc-bound} for the
box-constrained variant, one may take
\[
    \bar x=\frac{\delta-1}{\sigma}
    \qquad\text{and}\qquad
    B_{\rm loc}=(\delta-1)(1/\sigma+2),
\]
to get $\varepsilon_{\rm vol}(T)=\varepsilon_{\rm loc}(T)
+\frac{(\delta-1)B_{\rm loc}}{\sigma T}
\sum_{t=0}^{T-1}\vol(B^{(t)})$.
\end{proof}

\newpage

\newpage
\subsection{Proof of Theorem \ref{thm:robust-sweep}: Sweep Bound under Dual Suboptimality}
\label{app:robust-sweep}

\begin{Auxiliary Lemma} [HFD lower bound on the optimal dual value]
\label{lem:hfd-dual-lower}
Assume Assumptions~\ref{ass:overlap}, \ref{ass:volume}, and
\ref{ass:parameter}. In particular, assume $ \delta=\frac{3}{\alpha}, \qquad  0\le w_e(A)\le 1 \quad \forall e\in \E,\ A\subseteq e, $ and $\sigma\le \frac{\beta\Phi(C)}{3}$. Let $D(x):=-F(x), \qquad x^\star\in\arg\min_{x\ge0}F(x)$. Equivalently, \(x^\star\in\arg\max_{x\ge0}D(x)\). Define $D_C:=\frac{\vol(C)^2}{2\vol(\partial C)}$, and assume \(\vol(\partial C)>0\). Then $D(x^\star)\ge D_C$.
\end{Auxiliary Lemma}

\begin{proof}
Since \(x^\star\in\arg\max_{x\ge0}D(x)\), it suffices to exhibit one feasible
vector \(x\ge0\) such that \(D(x)\ge D_C\). We use the test vector
\[
    x=\lambda \mathbf 1_C,
    \qquad
    \lambda\ge0.
\]
Recall that
\[
    D(x)
    =
    \langle \Delta-d,x\rangle
    -
    \frac12\sum_{e\in E}\vartheta_e f_e(x)^2
    -
    \frac{\sigma}{2}x^\top D x.
\]
By positive homogeneity of the Lov\'asz extension,
\[
    f_e(\lambda\mathbf 1_C)=\lambda f_e(\mathbf 1_C).
\]
Moreover,
\[
    f_e(\mathbf 1_C)=w_e(C\cap e).
\]

We first lower-bound the linear term. Since
\[
    \Delta(v)=\delta d_v
    \quad\text{for }v\in S
\]
and \(\Delta(v)=0\) otherwise,
\[
\begin{aligned}
    \langle \Delta-d,\mathbf 1_C\rangle
    &=
    \Delta(C)-\vol(C)  \\
    &=
    \delta \vol(S\cap C)-\vol(C).
\end{aligned}
\]
By Assumption~\ref{ass:overlap},
\[
    \vol(S\cap C)\ge \alpha\vol(C).
\]
Using \(\delta=3/\alpha\), we get
\[
    \langle \Delta-d,\mathbf 1_C\rangle
    \ge
    \frac{3}{\alpha}\alpha\vol(C)-\vol(C)
    =
    2\vol(C).
\]
Therefore
\[
    \langle \Delta-d,\lambda\mathbf 1_C\rangle
    \ge
    2\lambda\vol(C).
\]

Next, since \(0\le w_e(A)\le1\), we have
\[
    w_e(C\cap e)^2\le w_e(C\cap e).
\]
Thus
\[
\begin{aligned}
    \sum_{e\in E}\vartheta_e f_e(\mathbf 1_C)^2
    &=
    \sum_{e\in E}\vartheta_e w_e(C\cap e)^2  \\
    &\le
    \sum_{e\in E}\vartheta_e w_e(C\cap e) \\
    &=
    \vol(\partial C).
\end{aligned}
\]
Also,
\[
    \mathbf 1_C^\top D\mathbf 1_C=\vol(C).
\]
Combining these estimates gives
\[
\begin{aligned}
    D(\lambda\mathbf 1_C)
    &\ge
    2\lambda\vol(C)
    -
    \frac{\lambda^2}{2}\vol(\partial C)
    -
    \frac{\sigma\lambda^2}{2}\vol(C).
\end{aligned}
\]

Choose
\[
    \lambda=\frac{\vol(C)}{\vol(\partial C)}.
\]
Then
\[
\begin{aligned}
    D(\lambda\mathbf 1_C)
    &\ge
    2\frac{\vol(C)^2}{\vol(\partial C)}
    -
    \frac12\frac{\vol(C)^2}{\vol(\partial C)}
    -
    \frac{\sigma}{2}
    \frac{\vol(C)^3}{\vol(\partial C)^2}  \\
    &=
    \left(
        \frac32
        -
        \frac{\sigma}{2}
        \frac{\vol(C)}{\vol(\partial C)}
    \right)
    \frac{\vol(C)^2}{\vol(\partial C)}.
\end{aligned}
\]
By Assumption~\ref{ass:volume},
\[
    \vol(C)\le \vol(V\setminus C),
\]
so
\[
    \Phi(C)=\frac{\vol(\partial C)}{\vol(C)}.
\]
Since \(\beta\le1\) and
\[
    \sigma\le \frac{\beta\Phi(C)}{3},
\]
we have
\[
    \sigma
    \le
    \frac{\Phi(C)}{3}
    =
    \frac{\vol(\partial C)}{3\vol(C)}.
\]
Therefore
\[
    \sigma\frac{\vol(C)}{\vol(\partial C)}
    \le
    \frac13.
\]
Substituting this into the previous lower bound yields
\[
\begin{aligned}
    D(\lambda\mathbf 1_C)
    &\ge
    \left(
        \frac32-\frac16
    \right)
    \frac{\vol(C)^2}{\vol(\partial C)}  \\
    &=
    \frac43
    \frac{\vol(C)^2}{\vol(\partial C)}.
\end{aligned}
\]
In particular,
\[
    D(\lambda\mathbf 1_C)
    \ge
    \frac{\vol(C)^2}{2\vol(\partial C)}
    =
    D_C.
\]
Since \(x^\star\) maximizes \(D\) over \(x\ge0\), we conclude
\[
    D(x^\star)\ge D(\lambda\mathbf 1_C)\ge D_C.
\]
\end{proof}

\Robust*

\begin{proof}
Let
\[
    D(x):=-F(x)
\]
denote the HFD dual objective. Since
\[
    F(x)-F(x^\star)\le \varepsilon,
\]
we equivalently have
\[
    D(x^\star)-D(x)\le \varepsilon.
\]

By the auxiliary Lemma \ref{lem:hfd-dual-lower}, 
\[
    D(x^\star)\ge D_C.
\]
Therefore
\[
    D(x)\ge D(x^\star)-\varepsilon\ge D_C-\varepsilon.
\]
Since \(\varepsilon<D_C\), this implies
\[
    D(x)>0.
\]

Define
\[
    a:=\langle \Delta-d,x\rangle,
    \qquad
    \nu:=\sum_{e\in E}\vartheta_e f_e(x)^2.
\]
By definition of \(D\),
\[
    D(x)
    =
    a-\frac12\nu-\frac{\sigma}{2}x^\top D x.
\]
Since \(x^\top D x\ge0\), we have
\[
    D(x)\le a-\frac12\nu.
\]
Hence
\[
    a-\frac12\nu\ge D(x)>0.
\]
In particular, \(a>0\).

By \(K\)-sweep locality,
\[
    \sum_{e\in E}\vartheta_e f_e(x)\ell_b(e)
    \le
    K\sqrt{\nu}.
\]
Dividing by \(a>0\), we get
\[
    \frac{\sum_{e\in E}\vartheta_e f_e(x)\ell_b(e)}{a}
    \le
    K\frac{\sqrt{\nu}}{a}.
\]
We now bound \(\sqrt{\nu}/a\). Since \(a-\nu/2>0\),
\[
    \frac{\sqrt{\nu}}{a}
    \le
    \frac{1}{\sqrt{2(a-\nu/2)}}.
\]
Indeed, after squaring both sides, this inequality is equivalent to
\[
    2\nu(a-\nu/2)\le a^2,
\]
which is equivalent to
\[
    (a-\nu)^2\ge0.
\]
Using \(a-\nu/2\ge D(x)\), we obtain
\[
    \frac{\sqrt{\nu}}{a}
    \le
    \frac{1}{\sqrt{2D(x)}}.
\]
Therefore
\begin{equation}\label{eq:one}
    \frac{\sum_{e\in E}\vartheta_e f_e(x)\ell_b(e)}{a}
    \le
    \frac{K}{\sqrt{2D(x)}}.
\end{equation}

For \(h>0\), define the sweep set
\[
    S_h:=\{v\in V:x_v\ge h\}.
\]
Also define
\[
    q(h):=\Delta(S_h)-\vol(S_h),
\]
and
\[
    r(h):=
    \sum_{e\in E}\vartheta_e w_e(S_h\cap e)\ell_b(e).
\]
For nonnegative \(x\), the standard layer-cake identities give
\[
    a
    =
    \int_0^\infty q(h)\,dh,
\]
and the Lovász coarea formula gives
\[
    \sum_{e\in E}\vartheta_e f_e(x)\ell_b(e)
    =
    \int_0^\infty r(h)\,dh.
\]

Let
\[
    H_+:=\{h>0:q(h)>0\}.
\]
Since
\[
    a=\int_0^\infty q(h)\,dh>0,
\]
the set \(H_+\) is nonempty. Moreover, because \(q(h)\le0\) outside
\(H_+\),
\[
    \int_{H_+}q(h)\,dh\ge a.
\]
Also \(r(h)\ge0\), so
\[
    \int_{H_+}r(h)\,dh
    \le
    \int_0^\infty r(h)\,dh.
\]
Thus, 
\[
    \frac{\int_{H_+}r(h)\,dh}{\int_{H_+}q(h)\,dh}
    \le
    \frac{\int_0^\infty r(h)\,dh}{a}.
\]
By averaging, there exists some \(h\in H_+\) such that
\[
    \frac{r(h)}{q(h)}
    \le
    \frac{\int_{H_+}r(h)\,dh}{\int_{H_+}q(h)\,dh}
    \le
    \frac{\int_0^\infty r(h)\,dh}{a}.
\]
Using \eqref{eq:one}, 
\begin{equation}\label{eq:two}
    \frac{r(h)}{q(h)}
    \le
    \frac{K}{\sqrt{2D(x)}}.
\end{equation}

Since \(h\in H_+\), we have
\[
    q(h)>0.
\]
By definition of \(\ell_b(e)\),
\[
    \ell_b(e)\ge \frac{1}{\sqrt{\vol(C)}}.
\]
Therefore
\[
\begin{aligned}
    r(h)
    &=
    \sum_{e\in E}\vartheta_e w_e(S_h\cap e)\ell_b(e)  \\
    &\ge
    \frac{1}{\sqrt{\vol(C)}}
    \sum_{e\in E}\vartheta_e w_e(S_h\cap e)  \\
    &=
    \frac{\vol(\partial S_h)}{\sqrt{\vol(C)}}.
\end{aligned}
\]
On the other hand,
\[
    q(h)
    =
    \Delta(S_h)-\vol(S_h)
    \le
    \Delta(S_h).
\]
Since \(\Delta(v)=\delta d_v\) for \(v\in S\) and \(\Delta(v)=0\) otherwise,
\[
    \Delta(S_h)
    =
    \delta\vol(S_h\cap S)
    \le
    \delta\vol(S_h).
\]
Using \(\delta=3/\alpha\), we get
\[
    q(h)\le \frac3\alpha\vol(S_h).
\]

Combining the previous inequalities with equation \ref{eq:two} gives,
\[
    \frac{\vol(\partial S_h)}{\sqrt{\vol(C)}}
    \le
    r(h)
    \le
    \frac{K}{\sqrt{2D(x)}}q(h)
    \le
    \frac{K}{\sqrt{2D(x)}}\cdot \frac3\alpha\vol(S_h).
\]
Hence
\begin{equation}\label{eq:three}
    \frac{\vol(\partial S_h)}{\vol(S_h)}
    \le
    \frac{3K}{\alpha}
    \sqrt{\frac{\vol(C)}{2D(x)}}.
\end{equation}

By Assumption~\ref{ass:volume},
\[
    \vol(C)\le \vol(V\setminus C),
\]
so
\[
    \Phi(C)
    =
    \frac{\vol(\partial C)}{\vol(C)}.
\]
Therefore
\[
    D_C
    =
    \frac{\vol(C)^2}{2\vol(\partial C)}
    =
    \frac{\vol(C)}{2\Phi(C)}.
\]
Equivalently,
\[
    \vol(C)=2\Phi(C)D_C.
\]
Substituting this identity into equation \ref{eq:three} gives
\[
    \frac{\vol(\partial S_h)}{\vol(S_h)}
    \le
    \frac{3K}{\alpha}
    \sqrt{\Phi(C)}
    \sqrt{\frac{D_C}{D(x)}}.
\]

Finally,
\[
    D(x)
    \ge
    D(x^\star)-\varepsilon
    \ge
    D_C-\varepsilon
    =
    D_C\left(1-\frac{\varepsilon}{D_C}\right).
\]
Thus
\[
    \sqrt{\frac{D_C}{D(x)}}
    \le
    \frac{1}{\sqrt{1-\varepsilon/D_C}}.
\]
Consequently,
\[
    \frac{\vol(\partial S_h)}{\vol(S_h)}
    \le
    \frac{3K}{\alpha}\sqrt{\Phi(C)}
    \cdot
    \frac{1}{\sqrt{1-\varepsilon/D_C}}.
\]

If additionally
\[
    \vol(S_h)\le \vol(V\setminus S_h),
\]
then
\[
    \Phi(S_h)
    =
    \frac{\vol(\partial S_h)}
    {\min\{\vol(S_h),\vol(V\setminus S_h)\}}
    =
    \frac{\vol(\partial S_h)}{\vol(S_h)}.
\]
Hence the same bound holds for \(\Phi(S_h)\).
\end{proof}

\newpage
\subsection{Proof of Corollary \ref{cor:early-stopping}: Early stopping clustering guarantee for TL-HFD}\label{app:early_stopping}

\begin{Auxiliary Lemma}[{Support-volume sufficient condition for sweep locality}]\label{lem:support-volume-sweep-locality}
    Let \(x\ge0\), and let $A_x:=\supp(x)=\{v\in \V:x_v>0\}$. Then \(x\) satisfies \(\mathbb{K}\)-sweep locality with $\mathbb{K} = 1+\sqrt{\frac{\vol(A_x)}{\vol(C)}}$. In particular, if $\vol(A_x) \le  \left(\frac4\beta-1\right)^2\vol(C)$, then \(x\) satisfies the original HFD sweep-locality condition $\sum_{e\in \E}\vartheta_e f_e(x)\ell_b(e) \le \frac4\beta\sqrt{\nu(x)}$.
\end{Auxiliary Lemma}

\begin{proof}
If \(\nu(x)=0\), then \(f_e(x)=0\) for every \(e\) with \(\vartheta_e>0\).
Therefore
\[
    \sum_{e\in E}\vartheta_e f_e(x)\ell_b(e)=0
    =
    \left(
        1+\sqrt{\frac{\vol(A_x)}{\vol(C)}}
    \right)\sqrt{\nu(x)}.
\]
Thus the claim is trivial in this case. We may henceforth assume
\(\nu(x)>0\).

Let
\[
    E_x:=\{e\in \E:f_e(x)>0\}.
\]
If \(e\cap A_x=\emptyset\), then \(x\) is identically zero on \(e\), so
\(x|_e\equiv 0\). Since the cut-costs are normalized, \(0\in B_e\), and hence
\(f_e(0)=0\). Therefore \(f_e(x)=0\) whenever \(e\cap A_x=\emptyset\). Thus
every hyperedge in \(E_x\) intersects \(A_x\).

Consequently,
\[
    \sum_{e\in E_x}\vartheta_e
    \le
    \sum_{e:e\cap A_x\neq\emptyset}\vartheta_e.
\]
Each hyperedge \(e\) with \(e\cap A_x\neq\emptyset\) is counted at least once
in
\[
    \sum_{v\in A_x}\sum_{e\ni v}\vartheta_e.
\]
Therefore
\[
    \sum_{e\in E_x}\vartheta_e
    \le
    \sum_{v\in A_x}\sum_{e\ni v}\vartheta_e
    =
    \sum_{v\in A_x}d_v
    =
    \vol(A_x).
\]

Next, since
\[
    \ell_b(e)
    =
    \max\left\{
        \frac{1}{\sqrt{\vol(C)}},
        \frac{f_e(x)}{\sqrt{\nu(x)}}
    \right\},
\]
we have the elementary upper bound
\[
    \ell_b(e)
    \le
    \frac{1}{\sqrt{\vol(C)}}
    +
    \frac{f_e(x)}{\sqrt{\nu(x)}}.
\]
Thus
\[
\begin{aligned}
    \sum_{e\in E}\vartheta_e f_e(x)\ell_b(e)
    &\le
    \frac{1}{\sqrt{\vol(C)}}
    \sum_{e\in E}\vartheta_e f_e(x)
    +
    \frac{1}{\sqrt{\nu(x)}}
    \sum_{e\in E}\vartheta_e f_e(x)^2.
\end{aligned}
\]
The second term is exactly
\[
    \frac{1}{\sqrt{\nu(x)}}
    \sum_{e\in E}\vartheta_e f_e(x)^2
    =
    \sqrt{\nu(x)}.
\]
For the first term, since \(f_e(x)=0\) outside \(E_x\),
\[
    \sum_{e\in E}\vartheta_e f_e(x)
    =
    \sum_{e\in E_x}\vartheta_e f_e(x).
\]
By Cauchy--Schwarz,
\[
\begin{aligned}
    \sum_{e\in E_x}\vartheta_e f_e(x)
    &\le
    \left(\sum_{e\in E_x}\vartheta_e\right)^{1/2}
    \left(\sum_{e\in E_x}\vartheta_e f_e(x)^2\right)^{1/2}  \\
    &\le
    \sqrt{\vol(A_x)}\sqrt{\nu(x)}.
\end{aligned}
\]
Substituting this bound gives
\[
\begin{aligned}
    \sum_{e\in E}\vartheta_e f_e(x)\ell_b(e)
    &\le
    \frac{\sqrt{\vol(A_x)}}{\sqrt{\vol(C)}}\sqrt{\nu(x)}
    +
    \sqrt{\nu(x)} \\
    &=
    \left(
        1+\sqrt{\frac{\vol(A_x)}{\vol(C)}}
    \right)\sqrt{\nu(x)}.
\end{aligned}
\]
Hence \(x\) satisfies \(\mathbb{K}\)-sweep locality with
\[
    \mathbb{K}
    =
    1+\sqrt{\frac{\vol(A_x)}{\vol(C)}}.
\]

Finally, if
\[
    \vol(A_x)
    \le
    \left(\frac4\beta-1\right)^2\vol(C),
\]
then
\[
    1+\sqrt{\frac{\vol(A_x)}{\vol(C)}}
    \le
    \frac4\beta.
\]
Therefore
\[
    \sum_{e\in E}\vartheta_e f_e(x)\ell_b(e)
    \le
    \frac4\beta\sqrt{\nu(x)},
\]
which is the original HFD sweep-locality condition.
\end{proof}

\earlystopping*

\begin{proof}

By Corollary~\ref{cor:dual_gap_threshold}, given in Appendix ~\ref{app:dual_gap_threshold}, with the algorithm-scope
clarification at the start of its proof, the
$\varepsilon_{\mathrm{vol}}(T)$ bound is a formal guarantee for the
box-constrained Algorithm~\ref{alg:app-pq-psgd-theory}. The same bound also
applies to any run of the unconstrained Algorithm~\ref{alg:pq-psgd-theory} whose
iterates remain below the box ceiling, since then the two algorithms
coincide; see Remark~\ref{rem:box-nonbinding}.

By Corollary~\ref{cor:dual_gap_threshold}, under the box-constrained
Proposition~\ref{prop:Gloc-bound}, the best iterate satisfies
\[
    F(\hat x^{(T)})-F(x^\star)
    \le
    \varepsilon_{\rm vol}(T).
\]
Since \(\varepsilon_{\rm vol}(T)<D_C\) by assumption, the suboptimality
requirement in Theorem~\ref{thm:robust-sweep} is satisfied with
\(x=\hat x^{(T)}\) and \(\varepsilon=\varepsilon_{\rm vol}(T)\).

It remains to verify the \(K\)-sweep-locality condition. Let
\(A_{\hat x}:=\supp(\hat x^{(T)})\). By definition,
\[
    \zeta_T
    =
    \frac{\vol(A_{\hat x})}{\vol(C)}.
\]
Therefore, by Auxiliary Lemma~\ref{lem:support-volume-sweep-locality},
\(\hat x^{(T)}\) satisfies \(K_T\)-sweep locality with
\[
    K_T
    =
    1+\sqrt{\frac{\vol(A_{\hat x})}{\vol(C)}}
    =
    1+\sqrt{\zeta_T}.
\]
Applying Theorem~\ref{thm:robust-sweep} with
\(x=\hat x^{(T)}\), \(\varepsilon=\varepsilon_{\rm vol}(T)\), and
\(K=K_T\), there exists a threshold \(h>0\) such that
\(S_h:=\{v:\hat x^{(T)}_v\ge h\}\) satisfies
\[
    \frac{\vol(\partial S_h)}{\vol(S_h)}
    \le
    \frac{3K_T}{\alpha}\sqrt{\Phi(C)}
    \cdot
    \frac{1}{\sqrt{1-\varepsilon_{\rm vol}(T)/D_C}}.
\]
Substituting \(K_T=1+\sqrt{\zeta_T}\) gives
\[
    \frac{\vol(\partial S_h)}{\vol(S_h)}
    \le
    \frac{3}{\alpha}(1+\sqrt{\zeta_T})\sqrt{\Phi(C)}
    \cdot
    \frac{1}{\sqrt{1-\varepsilon_{\rm vol}(T)/D_C}}.
\]

If additionally \(\vol(S_h)\le \vol(V\setminus S_h)\), then
\[
    \Phi(S_h)
    =
    \frac{\vol(\partial S_h)}
    {\min\{\vol(S_h),\vol(V\setminus S_h)\}}
    =
    \frac{\vol(\partial S_h)}{\vol(S_h)}.
\]
Hence the same bound holds for \(\Phi(S_h)\).
\end{proof}

\newpage


\section{Explored-Volume Bound (Theorem \ref{thm:explored_volume_topk})}
\label{app:stronglocal}

This section provides the explicit local ingredients needed for the
\(B\)-based truncation bound and for the activated-volume certificate. In
particular, Proposition~\ref{prop:Gloc-bound} gives an explicit bound on the
restricted local subgradient norm under the box-constrained variant, and
Theorem~\ref{thm:explored_volume_topk} certifies the activated volume during the run.

\subsection{Bounding the optimal solution}\label{app:localized}
 
\begin{Auxiliary Lemma}[Optimal solution bound]\label{lem:xstar-bound}
Let \(x^\star\in\arg\min_{x\ge 0}F(x)\) with \(\delta>1\) and \(\sigma>0\).
Then
\[
    \|x^\star\|_\infty \le \frac{\delta-1}{\sigma}.
\]

\end{Auxiliary Lemma}

\begin{proof}
Let
\[
    m:=\|x^\star\|_\infty,
    \qquad
    U:=\{v\in V:x_v^\star=m\}.
\]
If \(m=0\), then the claim is immediate. Assume henceforth that \(m>0\).

Since \(x^\star\) minimizes \(F\) over \(\mathbb R_+^{|\V|}\), the KKT conditions
imply that there exists a subgradient \(g^\star\in\partial F(x^\star)\) such
that
\[
    g_v^\star=0
    \qquad
    \forall v\in U,
\]
because every \(v\in U\) satisfies \(x_v^\star=m>0\).

By the subgradient form of Lemma~\ref{lem:subgrad_in_F},
\[
    g^\star
    =
    \sum_{e\in E}\vartheta_e f_e(x^\star)\rho_e(x^\star)
    +\sigma D x^\star
    -(\Delta-d),
\]
for some choice \(\rho_e(x^\star)\in \partial f_e(x^\star)\) for each
\(e\in E\). Summing over \(U\), we obtain
\[
\begin{aligned}
    0
    &=
    \sum_{v\in U} g_v^\star \\
    &=
    \sum_{e\in E}\vartheta_e f_e(x^\star)
    \sum_{v\in U\cap e}[\rho_e(x^\star)]_v
    +\sigma \sum_{v\in U} d_v x_v^\star
    -\Delta(U)+\vol(U).
\end{aligned}
\]
Since \(x_v^\star=m\) for all \(v\in U\),
\[
    \sum_{v\in U} d_v x_v^\star
    =
    m\,\vol(U).
\]
Hence
\[
    0
    =
    \sum_{e\in E}\vartheta_e f_e(x^\star)\rho_e(x^\star)(U\cap e)
    +\sigma m\,\vol(U)
    -\Delta(U)+\vol(U),
\]
where
\[
    \rho_e(x^\star)(U\cap e)
    :=
    \sum_{v\in U\cap e}[\rho_e(x^\star)]_v.
\]

We now show that each edge contribution is nonnegative. Fix \(e\in E\), and
let \(U_e:=U\cap e\). Since every vertex in \(U_e\) attains the maximum value
of \(x^\star\) on \(e\), \(U_e\) is precisely the top level-set block of
\(x^\star|_e\). By the standard characterization of Lov\'asz subgradients,
every \(\rho_e(x^\star)\in \partial f_e(x^\star)\) is a convex combination of
greedy extreme subgradients associated with permutations consistent with the
ordering of \(x^\star|_e\). For each such greedy extreme subgradient, the total
mass on the first block \(U_e\) is
\[
    w_e(U_e)-w_e(\varnothing)=w_e(U_e).
\]
Therefore, for every \(\rho_e(x^\star)\in \partial f_e(x^\star)\),
\[
    \rho_e(x^\star)(U_e)=w_e(U_e)\ge 0,
\]
using the normalization \(w_e(\varnothing)=0\) and nonnegativity of cut costs.
Since also \(f_e(x^\star)\ge 0\), every summand
\[
    \vartheta_e f_e(x^\star)\rho_e(x^\star)(U_e)
\]
is nonnegative. Thus
\[
    \sum_{e\in E}\vartheta_e f_e(x^\star)\rho_e(x^\star)(U\cap e)\ge 0.
\]

It follows that
\[
    0
    \ge
    \sigma m\,\vol(U)-\Delta(U)+\vol(U),
\]
or equivalently
\[
    \sigma m\,\vol(U)
    \le
    \Delta(U)-\vol(U).
\]
Now
\[
    \Delta(U)=\delta\,\vol(U\cap S),
\]
so
\[
    \Delta(U)-\vol(U)
    =
    \delta\,\vol(U\cap S)-\vol(U)
    \le
    (\delta-1)\vol(U),
\]
because \(\vol(U\cap S)\le \vol(U)\). Therefore
\[
    \sigma m\,\vol(U)\le (\delta-1)\vol(U).
\]
Since \(U\neq\varnothing\), we have \(\vol(U)>0\), and hence
\[
    m\le \frac{\delta-1}{\sigma}.
\]
As \(m=\|x^\star\|_\infty\), the claim follows.
\end{proof}

\subsection{Box-constrained variant for explicit constants}\label{app:box-constraints}
The localized convergence theorem does not require bounded iterates. However,
to obtain the explicit coordinatewise bound used in
Proposition~\ref{prop:Gloc-bound} and the \(B\)-based forms of
Corollaries~\ref{cor:dual_gap_threshold} and~\ref{cor:early-stopping}, we
introduce a box-constrained theoretical variant. By
Auxiliary Lemma~\ref{lem:xstar-bound}, the optimizer already satisfies
\(\|x^\star\|_\infty\le (\delta-1)/\sigma\), so this clipping does not change
the optimum; it is used only to keep intermediate iterates uniformly bounded.

Define $\bar{x} := (\delta - 1)/\sigma$ and the box
\[
  \mathcal{X} := \bigl\{x \in \mathbb{R}^\V : 0 \le x_v \le \bar{x}\;\;\forall\, v \in V\bigr\}.
\]
By Auxiliary Lemma~\ref{lem:xstar-bound}, $x^\star \in \mathcal{X}$, so $\arg\min_{x \in \mathcal{X}} F(x) = x^\star$.
We modify the algorithm \ref{alg:pq-psgd-theory} by replacing the projection $\Pi_{\mathbb{R}^V_+}$ with the coordinatewise box projection $[\Pi_{\mathcal{X}}(z)]_v = \min(\max(z_v, 0),\, \bar{x})$ as given in Algorithm \ref{alg:app-pq-psgd-theory}.
 
\begin{Auxiliary Lemma}[Box projection preserves locality]\label{lem:box-locality}
  The box-constrained variant preserves all properties stated in Lemma~\ref{lem:locality}.
  In particular, for every $u \notin A^{(t)} \cup \partial A^{(t)}$,
  \[
    [\Pi_{\mathcal{X}}(x^{(t)} - \eta\, D^{-1} g^{(t)})]_u = 0.
  \]
  Moreover, $\|x^{(t)}\|_\infty \le \bar{x}$ for all $t \ge 0$.
\end{Auxiliary Lemma}
 
\begin{proof}
  For $u \notin A^{(t)} \cup \partial A^{(t)}$: $x_u^{(t)} = 0$ and $g_u = d_u > 0$ (Lemma~\ref{lem:locality}), so $y_u = -\eta < 0$ and $\Pi_{\mathcal{X}}(y_u) = 0$.
  The iterate bound $\|x^{(t)}\|_\infty \le \bar{x}$ follows by construction.
  Nonexpansiveness of $\Pi_{\mathcal{X}}$ in $\|\cdot\|_D$ holds because $\mathcal{X}$ is a convex set and the projection is coordinatewise with $D$ diagonal.
\end{proof}

\begin{remark}
For the explicit local subgradient bound, we consider the box-constrained variant obtained by replacing projection onto \(\mathbb R_+^{|\V|}\) with projection onto \(\mathcal X=\{x:0\le x_v\le (\delta-1)/\sigma\}\). This does not change the optimizer, since \(\|x^\star\|_\infty\le(\delta-1)/\sigma\).
\end{remark}

\begin{algorithm}[H]
\caption{Thresholded Local Hyper-Flow Diffusion (TL-HFD, box-constrained theoretical variant)}
\label{alg:app-pq-psgd-theory}
\begin{algorithmic}[1]
\Require Current iterate \(x^{(t)}\), seed set \(S\),
         step size \(\eta_{t+1}\), activation count \(k \ge 0\),
         box radius \(\bar x=(\delta-1)/\sigma\)
\Ensure Next iterate \(x^{(t+1)}\)

\Statex \textit{\# Active region and its one-hop boundary.}
\State \(A^{(t)} \gets \operatorname{supp}(x^{(t)}) \cup S\)
\State \(\partial A^{(t)} \gets
       \{u \notin A^{(t)} : \exists\, e \in E,\;
       u \in e,\; e \cap A^{(t)} \neq \emptyset\}\)

\Statex \textit{\# Compute subgradient only on the local region (Lemma~\ref{lem:locality}).}
\State \(g^{(t)} \gets g(x^{(t)})\)
       restricted to \(A^{(t)} \cup \partial A^{(t)}\)
\State \(x^{(t+1)} \gets x^{(t)}\)

\Statex \textit{\# Box-projected PSGD update on active vertices.}
\ForAll{\(u \in A^{(t)}\)}
  \State \(x^{(t+1)}_u \gets
       \min\left\{\bar x,\;
       \max\left\{0,\;
       x^{(t)}_u-\eta_{t+1}[g^{(t)}]_u/d_u
       \right\}
       \right\}\)
\EndFor

\Statex \textit{\# Scoring for boundary vertices.}
\ForAll{\(u \in \partial A^{(t)}\)}
  \State \(\kappa^{(t)}(u) \gets \max\left\{0,\; -[g^{(t)}]_u / d_u\right\}\)
  \State \(c^{(t)}(u) \gets \left(d_{\mathrm{in}}^{(t)}(u)/d(u)\right)^\gamma\)
  \State \(s^{(t)}(u) \gets \kappa^{(t)}(u)\, c^{(t)}(u)\)
\EndFor

\Statex \textit{\# Activate the top-\(k\) boundary vertices.}
\State \(K_t \gets \textsc{TopK}(s^{(t)},\, k)\)
\ForAll{\(u \in K_t\)}
  \State \(x_u^{(t+1)} \gets
       \min\left\{\bar x,\;
       \eta_{t+1}\kappa^{(t)}(u)
       \right\}\)
\EndFor

\State \Return \(x^{(t+1)}\)
\end{algorithmic}
\end{algorithm}

\begin{remark}[Box constraint is non-binding in practice]
\label{rem:box-nonbinding}
Although Proposition~\ref{prop:Gloc-bound} is stated for the
box-constrained variant, Algorithm~\ref{alg:app-pq-psgd-theory}, our
experiments use the unconstrained Algorithm~\ref{alg:pq-psgd-theory}.
The two algorithms produce identical iterates whenever no coordinate of
the unconstrained iterate reaches the ceiling
$\bar{x}=(\delta-1)/\sigma$. This is the case in our experiments.For example, for Amazon runs with $\delta_{\exp}=200$ and
$\sigma=10^{-4}$, the seed-specific theoretical coefficient is
$\delta(s^\star)
=\delta_{\exp}\operatorname{vol}(T)/d_{s^\star}$, so
$\bar{x}=((\delta(s^\star)-1)/\sigma)
\ge 1.99\times 10^6$. The observed coordinates of $x^{(t)}$
remain below $10^2$ throughout all runs. Hence the box
projection is never active on these instances, and the iterates of
Algorithm~\ref{alg:pq-psgd-theory} and
Algorithm~\ref{alg:app-pq-psgd-theory} coincide.
The box constraint therefore plays a theoretical role: it yields the
uniform coordinatewise bound $|g_v^{(t)}/d_v|\le B_{\mathrm{loc}}$
established in Proposition~\ref{prop:Gloc-bound} and used in the
convergence corollaries, without affecting the empirical behavior of
the algorithm. Moreover, the box projection $\Pi_{\mathcal X}$ is coordinatewise and 1-Lipschitz in the $D$-norm (Auxiliary Lemma~\ref{lem:box-locality}). For skipped boundary coordinates, the box-projected truncation error is at most $\eta_{t+1}\kappa^{(t)}(u)$, so the same truncation-error upper bound used in Lemma~\ref{lem:threshold-inexact} applies to Algorithm~\ref{alg:app-pq-psgd-theory}.
\end{remark}

\subsection{Explicit bound on the local subgradient norm}
 
\begin{proposition}[Explicit local subgradient bound]\label{prop:Gloc-bound}
  Under the box-constrained algorithm with $\|x^{(t)}\|_\infty \le \bar{x} = (\delta - 1)/\sigma$, $\delta \geq 2$,  and the cut-cost normalization $0 \le w_e(S) \le 1$, for every iteration~$t$:
  \[
    \|g^{(t)}\|^2_{D^{-1},\, L_t}
    \;\le\;
    B^2 \cdot \operatorname{vol}(L_t),
    \qquad
    B \;:=\; (\delta - 1)\!\left(\frac{1}{\sigma} + 2\right).
  \]
\end{proposition}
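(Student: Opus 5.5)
The proof reduces the restricted norm to a uniform coordinatewise bound. It suffices to show that for every $v\in L_t$,
\[
\Bigl|\frac{g_v^{(t)}}{d_v}\Bigr| \le B=(\delta-1)\Bigl(\frac1\sigma+2\Bigr).
\]
Once this holds,
\[
\frac{(g_v^{(t)})^2}{d_v}=d_v\Bigl(\frac{g_v^{(t)}}{d_v}\Bigr)^2\le B^2 d_v ,
\]
and summing over $v\in L_t$ gives $\|g^{(t)}\|_{D^{-1},L_t}^2\le B^2\vol(L_t)$. This is the same coordinatewise bound $B_{\rm loc}$ invoked in Lemma~\ref{lem:threshold-inexact}. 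I would split $g_v^{(t)}$ from \eqref{eq:subgrad} into three pieces and bound each in absolute value by a multiple of $d_v$:
\[
g_v^{(t)}=\underbrace{\sum_{e\ni v}\vartheta_e f_e(x^{(t)})[\rho_e(x^{(t)})]_v}_{\text{hyperedge term}}+\underbrace{\sigma d_v x_v^{(t)}}_{\text{regularizer}}+\underbrace{d_v-\Delta(v)}_{\text{injection}}.
\]

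The two easy pieces come first. For the regularizer, the box gives $0\le x_v^{(t)}\le\bar x=(\delta-1)/\sigma$, hence $0\le\sigma d_v x_v^{(t)}\le(\delta-1)d_v$. For the injection term, $d_v-\Delta(v)$ equals $d_v$ if $v\notin S$ and $(1-\delta)d_v$ if $v\in S$. Its absolute value is therefore at most $\max\{1,\delta-1\}d_v$, which is $(\delta-1)d_v$ precisely because $\delta\ge2$. This is where that hypothesis enters.

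The main step is the hyperedge term, which needs two ingredients.

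First, I need $0\le f_e(x)\le\bar x$ for $x\in\mathcal X$. Nonnegativity follows from $0\in B_e$, as in Auxiliary Lemma~\ref{lem:subgrad-ge}. For the upper bound I would use the level-set (Lov\'asz) representation $f_e(x)=\int_0^\infty w_e(\{u\in e:x_u\ge h\})\,dh$, valid for $x\ge0$. Since $w_e\le1$ and the integrand vanishes for $h>\max_{u\in e}x_u\le\bar x$, this gives $f_e(x)\le\bar x$.

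Second, I need $|[\rho]_v|\le1$ for every $\rho\in B_e$, not only the greedy extreme points; this matters because the selected maximizer $\rho_e(x)$ is arbitrary. I would argue directly from the polytope constraints. Taking $S=\{v\}$ gives $\rho_v\le w_e(\{v\})\le1$. Using $\rho(e)=0$, we have $\rho_v=-\rho(e\setminus\{v\})\ge -w_e(e\setminus\{v\})\ge-1$.

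Combining the two ingredients, the hyperedge term has absolute value at most $\sum_{e\ni v}\vartheta_e\bar x=\bar x\,d_v=\frac{\delta-1}{\sigma}d_v$. Adding the three pieces yields $|g_v^{(t)}|\le(\delta-1)(1/\sigma+1+1)d_v=B\,d_v$, which is the claimed coordinatewise bound.

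The only real subtlety is the level-set formula for $f_e$ on nonnegative vectors, that is, identifying the support-function definition with the Lov\'asz extension. I will take this as the standard fact already used in the coarea step of Theorem~\ref{thm:robust-sweep}. Everything else is triangle inequalities together with the box invariant from Auxiliary Lemma~\ref{lem:box-locality}.
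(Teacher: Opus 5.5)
Your proposal is correct and follows essentially the paper's own proof: you reduce to the same coordinatewise bound $|g_v^{(t)}/d_v|\le B$ on $L_t$, use the same polytope argument for $|[\rho_e]_v|\le 1$, bound the hyperedge term by $\bar x\,d_v$, and use the box together with $\delta\ge 2$ to control the remaining terms. The differences are minor. The paper obtains $f_e(x)\le\|x\|_\infty\le\bar x$ directly from the fact that the positive entries of any $\rho\in B_e$ sum to at most $1$, so it does not need the level-set identification with the Lov\'asz extension. It also organizes the final estimate as a seed/active/boundary case split, with $\delta\ge 2$ showing that the seed case dominates, whereas you bound each of the three terms by its worst case and sum them.
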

 
\begin{proof}
  It suffices to show $|g_v / d_v| \le B$ for all $v \in L_t$, since this implies $g_v^2/d_v \le B^2\, d_v$.
 
  \paragraph{Bounding the Lov\'{a}sz contribution.}
  We first establish that for every $v \in L_t$,
  \begin{equation}\label{eq:lov-bound}
    \frac{1}{d_v}\Biggl|\sum_{e \ni v} \vartheta_e\, f_e(x^{(t)})\, [\rho_e(x^{(t)})]_v\Biggr|
    \;\le\; \bar{x}.
  \end{equation}
  Since $\rho_e \in B_e$ and $0 \le w_e(S) \le 1$, for any single vertex $v \in e$ we have
  \[
    [\rho_e]_v = \langle \rho_e, \mathbf{1}_{\{v\}} \rangle \le w_e(\{v\}) \le 1,
  \]
  and
  \[
    [\rho_e]_v = \langle \rho_e, \mathbf{1}_e \rangle - \langle \rho_e, \mathbf{1}_{e \setminus \{v\}} \rangle = 0 - \langle \rho_e, \mathbf{1}_{e \setminus \{v\}} \rangle \ge -w_e(e \setminus \{v\}) \ge -1,
  \]
  so $|[\rho_e]_v| \le 1$.  Since $f_e(x) \ge 0$, this gives
  $\bigl|\sum_{e \ni v} \vartheta_e\, f_e(x)\, [\rho_e]_v\bigr| \le \sum_{e \ni v} \vartheta_e\, f_e(x)$.
 
  For $x \ge 0$ with $\|x\|_\infty \le \bar{x}$: since $\langle \rho, \mathbf{1}_S \rangle \le 1$ for all $S \subseteq e$, the positive components of any $\rho \in B_e$ sum to at most~$1$.
  Therefore $f_e(x) = \max_{\rho \in B_e} \langle \rho, x \rangle \le \|x\|_\infty \le \bar{x}$, and
  \[
    \sum_{e \ni v} \vartheta_e\, f_e(x) \;\le\; \bar{x} \sum_{e \ni v} \vartheta_e \;=\; \bar{x}\, d_v,
  \]
  establishing~\eqref{eq:lov-bound}.
 
  \paragraph{Bounding $|g_v/d_v|$ by cases.}
  Recall that
  $g_v/d_v = [\text{Lov}]_v / d_v + \sigma\, x_v - (\Delta(v) - d_v)/d_v$,
  where $[\text{Lov}]_v := \sum_{e \ni v} \vartheta_e\, f_e(x)\, [\rho_e]_v$.
 
  \medskip
  \noindent\emph{Case (a): $v \in S \cap A^{(t)}$ \textup{(}seed vertex\textup{)}.}
  Here $\Delta(v) = \delta\, d_v$, so $(\Delta(v) - d_v)/d_v = \delta - 1$.  Using~\eqref{eq:lov-bound} and $x_v \le \bar{x}$:
  \[
    \left|\frac{g_v}{d_v}\right|
    \;\le\;
    \bar{x} + \sigma\, \bar{x} + (\delta - 1)
    \;=\;
    (1 + \sigma)\,\frac{\delta - 1}{\sigma} + (\delta - 1)
    \;=\;
    (\delta - 1)\!\left(\frac{1}{\sigma} + 2\right)
    = B.
  \]
 
  \medskip
  \noindent\emph{Case (b): $v \in A^{(t)} \setminus S$ \textup{(}active, non-seed\textup{)}.}
  Here $\Delta(v) = 0$, so $(\Delta(v) - d_v)/d_v = -1$.
  \[
    \left|\frac{g_v}{d_v}\right|
    \;\le\;
    \bar{x} + \sigma\, \bar{x} + 1
    \;=\;
    \frac{\delta - 1}{\sigma} + (\delta - 1) + 1
    \;=\;
    \frac{\delta - 1}{\sigma} + \delta.
  \]
 
  \medskip
  \noindent\emph{Case (c): $v \in \partial A^{(t)}$ \textup{(}boundary\textup{)}.}
  Here $x_v = 0$ and $\Delta(v) = 0$.
  \[
    \left|\frac{g_v}{d_v}\right|
    \;\le\;
    \bar{x} + 0 + 1
    \;=\;
    \frac{\delta - 1}{\sigma} + 1.
  \]
 
  For $\delta \ge 2$ (guaranteed since $\delta = 3/\alpha \ge 3$), Case~(a) dominates:
  \begin{itemize}
    \item Case~(a) $\ge$ Case~(b): $\;(\delta-1)(1/\sigma + 2) \ge (\delta-1)/\sigma + \delta\;$ iff $\;2(\delta-1) \ge \delta\;$ iff $\;\delta \ge 2$. 
    \item Case~(a) $\ge$ Case~(c): $\;(\delta-1)(1/\sigma + 2) \ge (\delta-1)/\sigma + 1\;$ iff $\;2(\delta-1) \ge 1$. 
  \end{itemize}
   Thus $|g_v/d_v| \le B$ for all $v \in L_t$, and therefore
  \[
    \|g^{(t)}\|^2_{D^{-1},\, L_t}
    = \sum_{v \in L_t} \frac{g_v^2}{d_v}
    \le B^2 \sum_{v \in L_t} d_v
    = B^2 \cdot \operatorname{vol}(L_t).\qedhere
  \]
\end{proof}

\subsection{Localized explored-volume bound}\label{strong_local_app}

\exploredVolume*

\begin{proof}
Fix an iteration \(t\). By construction of
Algorithm~\ref{alg:pq-psgd-theory}, \(x^{(0)}=0\) and let  
$ J_t:=\{u\in \partial A^{(t)}:u\notin M_t,\ x_u^{(t+1)}>0\}
$ be the set of vertices activated for the first time at iteration \(t\), where \(M_{t+1}:=M_t\cup J_t\). As every first-time activated vertex is
selected from the current boundary, we get
\[
    J_t\subseteq \partial A^{(t)}\subseteq L_t.
\]
By hypothesis,
\[
    \kappa^{(t)}(u)\ge \rho_t
    \qquad \forall u\in J_t.
\]
Therefore
\[
    \rho_t^2\vol(J_t)
    =
    \rho_t^2\sum_{u\in J_t}d_u
    \le
    \sum_{u\in J_t}d_u\bigl(\kappa^{(t)}(u)\bigr)^2.
\]
Since
\[
    \kappa^{(t)}(u)
    =
    \max\left\{0,-\frac{g_u^{(t)}}{d_u}\right\},
\]
we have, for every \(u\),
\[
    d_u\bigl(\kappa^{(t)}(u)\bigr)^2
    \le
    \frac{(g_u^{(t)})^2}{d_u}.
\]
Thus, using \(J_t\subseteq L_t\),
\[
    \rho_t^2\vol(J_t)
    \le
    \sum_{u\in J_t}\frac{(g_u^{(t)})^2}{d_u}
    \le
    \sum_{v\in L_t}\frac{(g_v^{(t)})^2}{d_v}
    =
    \|g^{(t)}\|_{D^{-1},L_t}^2.
\]
Hence
\[
    \vol(J_t)
    \le
    \frac{\|g^{(t)}\|_{D^{-1},L_t}^2}{\rho_t^2}.
\]

The sets \(J_t\) are pairwise disjoint because each vertex can be
activated for the first time in at most one iteration. Moreover,
\(S\cap J_t=\varnothing\) for all \(t\), since \(S\subseteq A^{(0)}\). Since $\vol(J_t)=0$ for $t \notin I_t$ we restrict the sum to $t \in I_T$, where  $\rho_t > 0$. Therefore
\[
    \vol(M_T)
    =
    \vol(S)
    +
    \sum_{t\in I_T}\vol(J_t).
\]
Substituting the preceding bound on \(\vol(J_t)\) gives
\[
    \vol(M_T)
    \le
    \vol(S)
    +
    \sum_{t\in I_T}
    \frac{\|g^{(t)}\|_{D^{-1},L_t}^2}{\rho_t^2}.
\]
If additionally
\[
    \|g^{(t)}\|_{D^{-1},L_t}\le G_{\mathrm{loc}}
    \qquad \forall t,
\]
then
\[
    \vol(M_T)
    \le
    \vol(S)
    +
    G_{\mathrm{loc}}^2
    \sum_{t\in I_T}\rho_t^{-2}.
\]
\end{proof}

\subsection{Known Facts: A technical fact about projections in the $\|\cdot\|_D$ norm }
\label{app:proj}

Equation \ref{eq:local-update} uses the coordinatewise Euclidean projection onto the nonnegative orthant,
$\Pi_{\mathbb{R}^V_+}(z)=\max\{z,0\}$ applied coordinatewise.
Because the orthant constraint is separable across coordinates, this coincides with the
projection in any diagonal weighted Euclidean norm (in particular, the $\|\cdot\|_D$ norm).

\begin{lemma}[Projection in $\|\cdot\|_D$ is coordinatewise clipping]
\label{lem:Dproj}
Let $D=\mathrm{diag}(d)$ with $d_u>0$ for all $u$. Define the $D$-projection onto $\mathbb{R}^V_+$ by
\[
\Pi^{D}_{\mathbb{R}^V_+}(z) \in \arg\min\{\|x-z\|_D^2:\ x\ge 0\}.
\]
Then $\Pi^{D}_{\mathbb{R}^V_+}(z) = \Pi_{\mathbb{R}^V_+}(z)$ coordinatewise, i.e.
$(\Pi^{D}_{\mathbb{R}^V_+}(z))_u = \max\{z_u,0\}$. Moreover, $\Pi_{\mathbb{R}^V_+}$ is nonexpansive in
$\|\cdot\|_D$:
\[
\|\Pi_{\mathbb{R}^V_+}(a) - \Pi_{\mathbb{R}^V_+}(b)\|_D \le \|a-b\|_D \qquad \forall a,b\in \mathbb{R}^V.
\]
\end{lemma}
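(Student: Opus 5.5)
The plan is to prove both claims by reducing everything to a single scalar problem per coordinate, exploiting that $D$ is diagonal and the constraint set $\mathbb{R}^V_+$ is a product of half-lines.

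For the first claim, I would expand the objective as
\[
\|x-z\|_D^2=\sum_{u\in V} d_u (x_u-z_u)^2 .
\]
The constraint $x\ge 0$ decouples as $x_u\ge 0$ for each $u$, so the minimization splits into $|V|$ independent problems $\min_{x_u\ge0} d_u(x_u-z_u)^2$. Since $d_u>0$, each is a strictly convex one-dimensional quadratic. Its unique minimizer over $[0,\infty)$ is $z_u$ if $z_u\ge 0$ and $0$ otherwise, that is, $\max\{z_u,0\}$. I would justify this either by the first-order condition on a half-line, or by noting that the positive factor $d_u$ does not change the minimizer of $(x_u-z_u)^2$. Uniqueness of each scalar minimizer makes the $\arg\min$ a singleton, so it equals the coordinatewise clipping $\Pi_{\mathbb{R}^V_+}(z)$.

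For nonexpansiveness, I would use the scalar fact that $\phi(s)=\max\{s,0\}$ is $1$-Lipschitz, i.e. $|\max\{a_u,0\}-\max\{b_u,0\}|\le|a_u-b_u|$. This follows by checking the sign cases of $a_u,b_u$: both nonnegative, both negative, or mixed, where in the mixed case the left side is $|a_u|\le|a_u-b_u|$, assuming $a_u\ge0>b_u$. Then I square each coordinate inequality, multiply by $d_u>0$, and sum over $u$ to get $\|\Pi(a)-\Pi(b)\|_D^2\le\|a-b\|_D^2$. As a cross-check, this also follows from the general fact that a metric projection onto a closed convex set in a Hilbert space is nonexpansive in that space's norm, applied to the inner product $\langle\cdot,\cdot\rangle_D$ together with the first claim. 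I would give the elementary argument, however, since it also covers the box projection in Auxiliary Lemma~\ref{lem:box-locality}: there the scalar map $s\mapsto\min\{\max\{s,0\},\bar x\}$ is likewise $1$-Lipschitz.

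I do not expect a real obstacle. The only step needing care is making the separability argument explicit, so that the $\arg\min$ is shown to be a single point; this relies on the strict positivity $d_u>0$, which the paper assumes throughout.
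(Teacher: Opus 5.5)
Your proposal is correct and follows the paper's proof essentially verbatim: decouple $\sum_u d_u(x_u-z_u)^2$ over coordinates to get the unique minimizer $\max\{z_u,0\}$, then use the $1$-Lipschitz property of $t\mapsto\max\{t,0\}$, square, weight by $d_u$, and sum. Your explicit sign-case check and the remark about the box projection are harmless additions to the same argument.
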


\begin{proof}
Because $D$ is diagonal, the projection problem decouples coordinatewise:
\[
\min_{x\ge 0}\ \sum_{u\in V} d_u(x_u-z_u)^2 \;=\; \sum_{u\in V}\ \min_{x_u\ge 0}\ d_u(x_u-z_u)^2.
\]
Each scalar problem has the unique minimizer $x_u^\star=\max\{z_u,0\}$, hence
$\Pi^D_{\mathbb{R}^V_+}(z)=\Pi_{\mathbb{R}^V_+}(z)$.

For nonexpansiveness, fix $a,b$ and set $p=\Pi_{\mathbb{R}^V_+}(a)$, $q=\Pi_{\mathbb{R}^V_+}(b)$.
The scalar map $t\mapsto \max\{t,0\}$ is $1$-Lipschitz, so $|p_u-q_u|\le |a_u-b_u|$ for each $u$.
Multiplying by $d_u$ and summing yields
\[
\|p-q\|_D^2=\sum_{u} d_u(p_u-q_u)^2 \;\le\; \sum_{u} d_u(a_u-b_u)^2 = \|a-b\|_D^2.
\]
Taking square roots gives the claim.
\end{proof}

\subsection{Directional derivatives and Lov\'asz support functions (Known result - provided here for completeness) }
\label{app:dirder}

Recall $f_e(x)=\max_{\rho\in B_e}\langle \rho,x\rangle$ and $M_e(x)=\arg\max_{\rho\in B_e}\langle \rho,x\rangle$.

\begin{proposition}[Directional derivative formulas]
\label{prop:dirder}
Fix $x\in \mathbb{R}^V$ and a hyperedge $e$.
\begin{enumerate}
\item For any direction $h$,
\[
f'_e(x;h)=\max_{\rho\in M_e(x)} \langle \rho,h\rangle.
\]
\item Let $g_e(x)=\tfrac12 f_e(x)^2$. Then $g'_e(x;h)=f_e(x)\, f'_e(x;h)$.
\item Consequently, the directional derivative of $F$ in equation \eqref{eq:cut_HFD} satisfies
\[
F'(x;h)=\sum_{e\in E} \vartheta_e f_e(x)\max_{\rho\in M_e(x)}\langle \rho,h\rangle
+\sigma x^\top D h - \langle \Delta-d,h\rangle.
\]
\end{enumerate}
\end{proposition}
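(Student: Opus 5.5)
The plan is to prove the three items in order, each building on the previous, using only the support-function representation $f_e(x)=\max_{\rho\in B_e}\langle\rho,x\rangle$ with $B_e$ compact and convex.

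For item 1, I will invoke Danskin's theorem: $B_e$ is compact and the map $(\rho,x)\mapsto\langle\rho,x\rangle$ is continuous and linear in $x$, so $f_e'(x;h)=\max_{\rho\in M_e(x)}\langle\rho,h\rangle$. To keep the argument self-contained, I will also check both inequalities directly. The lower bound comes from $f_e(x+\tau h)\ge\langle\rho,x+\tau h\rangle=f_e(x)+\tau\langle\rho,h\rangle$ for every $\rho\in M_e(x)$. The upper bound uses maximizers $\rho_\tau\in M_e(x+\tau h)$, extracts a convergent subsequence by compactness, and notes that every limit point lies in $M_e(x)$ by continuity. Then
\[
\frac{f_e(x+\tau h)-f_e(x)}{\tau}\le\langle\rho_\tau,h\rangle,
\]
which yields the matching limsup bound.

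For item 2, I will apply the chain rule for one-sided directional derivatives with $\phi(s)=\tfrac12 s^2$. The function $\phi$ is $C^1$, and $f_e$ is convex and finite, hence locally Lipschitz and directionally differentiable. Writing $f_e(x+\tau h)=f_e(x)+\tau f_e'(x;h)+o(\tau)$ and squaring gives
\[
g_e(x+\tau h)=g_e(x)+\tau f_e(x)f_e'(x;h)+o(\tau).
\]

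For item 3, I will use linearity of one-sided directional derivatives over finite sums with positive weights $\vartheta_e$. The quadratic and linear terms are differentiable, contributing $\sigma x^\top Dh-\langle\Delta-d,h\rangle$. Substituting item 1 into item 2 then gives the stated formula.

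The only step that is not routine is the upper bound in item 1, namely the limit-point argument showing that maximizers at perturbed points accumulate in $M_e(x)$. This follows from the closedness of the argmax correspondence, which is a consequence of Berge's maximum theorem.
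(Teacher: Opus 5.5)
Your proposal is correct and follows essentially the same route as the paper. For item 1 you prove the same two bounds: a lower bound from any maximizer in $M_e(x)$, and an upper bound from maximizers at $x+\tau h$ whose limit points lie in $M_e(x)$ by compactness. Item 2 uses the one-sided chain rule for $\tfrac12 f_e^2$, where your first-order expansion just verifies explicitly the chain rule the paper cites, and item 3 uses additivity of directional derivatives over the finite sum.
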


\begin{proof}
(1) Define $\phi(t):=f_e(x+th)=\max_{\rho\in B_e}\langle \rho,x+th\rangle$.
Because $B_e$ is compact and the objective is continuous linear in $\rho$, the max is attained.
Moreover $\phi$ is convex as a pointwise maximum of linear functions in $t$.

Lower bound: for any $\bar\rho\in M_e(x)$ and any $t>0$,
\[
f_e(x+th)\ge \langle \bar\rho,x+th\rangle = f_e(x) + t\langle \bar\rho,h\rangle,
\]
so $\frac{f_e(x+th)-f_e(x)}{t}\ge \langle \bar\rho,h\rangle$.
Taking $\max_{\bar\rho\in M_e(x)}$ and then $t\downarrow 0$ gives
$f'_e(x;h)\ge \max_{\rho\in M_e(x)}\langle \rho,h\rangle$.

Upper bound: for $t>0$ let $\rho_t\in M_e(x+th)$.
Then
\[
f_e(x+th)=\langle \rho_t,x+th\rangle = \langle \rho_t,x\rangle + t\langle \rho_t,h\rangle
\le f_e(x)+t\langle \rho_t,h\rangle,
\]
since $\langle \rho_t,x\rangle\le \max_{\rho\in B_e}\langle \rho,x\rangle=f_e(x)$.
Thus $\frac{f_e(x+th)-f_e(x)}{t}\le \langle \rho_t,h\rangle$.
As $t\downarrow 0$, compactness implies $\{\rho_t\}$ has limit points, and any limit point must lie in
$M_e(x)$ by continuity of $\rho\mapsto \langle \rho,x\rangle$.
Taking $\limsup_{t\downarrow 0}$ yields
$f'_e(x;h)\le \max_{\rho\in M_e(x)}\langle \rho,h\rangle$.
Combining proves (1).

(2) Let $\psi(t):=f_e(x+th)$. Since $f_e$ is convex and finite everywhere, $\psi$ is convex and hence
directionally differentiable at $0$.
The scalar map $s\mapsto \tfrac12 s^2$ is differentiable with derivative $s$.
Therefore, by the one-sided chain rule for directional derivatives,
\[
g'_e(x;h)=\left(\tfrac12 \psi(t)^2\right)'_{t=0+}=\psi(0)\psi'(0+)=f_e(x)\, f'_e(x;h).
\]

(3) $F$ is a finite sum of directionally differentiable functions, so the directional derivative is the sum of
directional derivatives. Apply (2) to each $e$, and use the standard derivative of the quadratic and linear terms.
\end{proof}
\newpage

\section{Additional Experimental Details}\label{app:exp-details}
 
\subsection{Real World Dataset Statistics}\label{app:data}
 Table \ref{tab:datasets} provides the statistics of the real-world datasets used for our empirical study. We now describe the datasets used together with the evaluation criteria.
\begin{table}[h]
\centering
\caption{Summary statistics of datasets used in the experiments.}
\label{tab:datasets}
\begin{tabular}{l r r r r l}
\toprule
Dataset & $|V|$ & $|E|$ & Mean/Median & Clusters & Cut-costs \\
&&&& (\#nodes)&\\
\midrule
Trivago-clicks \ref{sec:exp-trivago}, \ref{app:trivago-sensitivity}  & 172,738   & 233,202   & 5.6 / 2.0  & 10 (144--945)    & Unit, Card \\
Amazon-reviews \ref{sec:exp-amazon}  & 2,268,231 & 4,285,363 & 32.2 / 11.0 & 9 (31--5334)     & Unit \\
Florida Bay \ref{app:foodweb}  & 126 & 141,233 & 4,483.6/  3,770.5  & 3 (17--70 ) & Sub \\
High-school-contact \ref{app:highschool} & 327 & 7,818 & 55.6/ 53 & 9 (29--44 ) & Unit, Card \\
Oil trade (2017) \ref{app:oil-trade} & 229 & 100,639 & 1,757.9/175.0 & --- (ranking) & Sub \\
\bottomrule
\end{tabular}
\end{table}

\paragraph{Trivago-Clicks \citep{chodrow2021generative}.} The nodes in this hypergraph represent accommodations and hotels. A hyperedge connects a set of nodes when a user executes a ``click-out'' action during the same browsing session, indicating that the user was redirected to a partner website. The graph is constructed by employing geographical locations as ground truth cluster identities. The dataset contains $160$ such clusters in total. Similar to \citep{fountoulakis2021local}, we focus on all clusters in this dataset that contain fewer than $1,000$ nodes and exhibit conductance below $0.25$.

\paragraph{Amazon-reviews \citep{ni2019justifying, veldt2020minimizing}.} This hypergraph is constructed from Amazon product review data, with each node representing a product. Products are connected by a hyperedge when they are reviewed by the same individual. Similar to \cite{fountoulakis2021local}, the product category labels are employed as ground truth cluster identities. The dataset encompasses $29$ product categories in total. Given our primary focus on local clustering, we examine all clusters containing fewer than $10,000$ nodes.

\paragraph{High-school-contact \citep{chodrow2021generative,mastrandrea2015contact}} Nodes in this hypergraph correspond to high school students. Students form a hyperedge when they are all detected in close proximity simultaneously, as measured by wearable sensors. The ground truth clusters are defined based on classroom assignments, with $9$ classrooms represented in the dataset. 

\paragraph{Florida Bay food network \citep{li2017inhomogeneous}.} Each node in this hypergraph represents a species or organism inhabiting the Bay, while hyperedges encode transformed network motifs. Nodes are labeled according to their role in the food chain.  Submodular cut costs are shown to be more appropriate for this dataset \citep{fountoulakis2021local}. 

\paragraph{Oil Trade $2017$ \citep{fountoulakis2021local}.} This hypergraph is constructed from $2017$ international oil-trade records in the
UN Comtrade dataset. Each node represents a country, and a $4$-tuple
\(\{v_1,v_2,v_3,v_4\}\) forms a hyperedge if the trade surplus from each of
\(v_1\) and \(v_2\) to each of \(v_3\) and \(v_4\) exceeds $10$ million USD. This
threshold is approximately the $80$th percentile of country-level oil export
values. Two countries are grouped in the same hyperedge when they share at least
\emph{two} major trading partners. Similar to \citet{fountoulakis2021local}, we use this network for a node-ranking task rather than a clustering task. As with the Florida Bay food network, submodular cut-costs are more appropriate for this dataset.

\subsection{Hypergraph cut-cost models} \label{app:data_cut}
Our experiments use three cut-cost classes. Under \emph{unit} cut-cost, every non-trivial split of a hyperedge incurs cost~$1$. Under \emph{cardinality}
cut-cost, the penalty depends on the number of nodes on each side:
$w_e(S) = \min(|S\cap e|,\, |e\setminus S|)/\lfloor|e|/2\rfloor$, giving
finer-grained credit for edges that overlap heavily with the active set.
Under \emph{general submodular} cut-cost, an arbitrary submodular function
$w_e$ assigns a different penalty to each split, capturing directional or
role-based structure (e.g., prey--predator interactions in food webs). We
prefix methods by U- (unit), C- (cardinality), and S- (submodular) to
indicate the cut-cost used.

\subsection{Methods and Parameter Settings}\label{app:params}

\subsubsection{HFD baseline \citep{fountoulakis2021local}.}
We set the same parameters for HFD as described in  \citet{fountoulakis2021local}. We use \(\sigma=10^{-4}\) in all clustering experiments. The total seed-injection mass is set to a constant multiple of the target-cluster volume: $\|\Delta\|_1 = t\,\vol(\mathcal  T)$, where \(\mathcal  T\) denotes the target cluster and \(t\) is a dataset-dependent injection factor. For Amazon-reviews, \(t=200\) for the smaller categories $\{1,2,3,12,18\}$, and \(t=50\) for the larger categories $\{15,17,24,25\}$. For Trivago-clicks and High-school-contact, \(t=3\). For the Florida Bay food network, $t$ is set to  cluster-specific values \(t=20,10,5\) for clusters $\{1,2,3\}$, respectively. The injection factor \(t\) controls how far the diffusion spreads from the seed set. Following prior HFD evaluations, \(t\) is chosen  large enough so that the diffusion reaches a nontrivial portion of the target cluster while still penalizing excessive spread through the objective. All the mass is placed on the single seed: $\Delta(s)=\|\Delta\|_1$. For the Oil trade (2017) ranking experiment, we follow the HFD protocol of \citet{fountoulakis2021local}. The seed node is Iran, and the seed-injection mass is set to $\Delta[\mathrm{seed}] = 500{,}000$.  Unlike the clustering experiments, which use \(\sigma=10^{-4}\), the Oil trade ranking experiment uses $\sigma = 0.1$. We run HFD with the iteration budget of $T=20$ for submodular cut-costs. Since this dataset is used for node ranking rather than clustering, we do not report F1 or conductance. Instead, countries are ranked by the HFD diffusion score $\sigma x_v$, where \(x\) is the returned HFD diffusion vector. 

\paragraph{LH \citep{liu2021strongly} and ACL \citep{andersen2006local} baselines.}
For the LH baselines, we use the parameter choices recommended by the authors
of ~\citet{liu2021strongly}. For both \(*\)-LH-2.0 and \(*\)-LH-1.4, we set
$\gamma=0.1, \; \rho=0.5, \; \kappa=c\,r,$ where \(r\) is the ratio between the number of seed nodes and the target-cluster size, and \(c\) is a tuning constant. For Amazon-reviews, we use \(c=0.025\), as recommended in~\citet{liu2021strongly}. For  Trivago-clicks, and Florida Bay, we also use \(c=0.025\). For High-school-contact, \(c=0.25\) is chosen by tuning so that both \(*\)-LH-2.0 and \(*\)-LH-1.4 obtain competitive performance. For ACL, we use the same parameter choices as specified in~\citet{fountoulakis2021local}. For the LH baselines, we set \(\delta=1\) for U-LH-\(*\), and $\delta=\max_{e\in E}|e|$ for C-LH-\(*\).

\paragraph{TL-HFD.} TL-HFD (Algorithm \ref{alg:pq-psgd-theory}) inherits the seed-injection protocol from the HFD baseline. The regularization parameter is fixed at $\sigma = 10^{-4}$ across all clustering experiments. Each trial uses a single-seed diffusion: for a chosen seed vertex $s \in T$, the injection vector $\Delta$ has exactly one nonzero entry,
\[
\Delta(s) \;=\; \delta_{\mathrm{exp}} \cdot \mathrm{vol}(T),
\qquad
\Delta(v) \;=\; 0 \quad \text{for } v \neq s,
\]
where $T$ is the target cluster and $\delta_{\mathrm{exp}}$ is a dataset-dependent injection factor.

For Amazon-reviews and Trivago-clicks, we run multiple independent trials per cluster, each using a different vertex as the single seed. For contact-high-school, Florida Bay, and the synthetic HSBM, we use every target-cluster vertex once as the seed (yielding $|\mathcal T|$ trials per cluster). For oil trade, the seed is fixed to Iran. We report the median F1 across trials in the clustering experiments. 

\underline{Injection factors.} The injection factors $\delta_{\mathrm{exp}}$ follow the standard HFD protocol of \citet{fountoulakis2021local}: $\delta_{\mathrm{exp}} = 3$ for Trivago-clicks, contact-high-school, and synthetic HSBM; $\delta_{\mathrm{exp}} = 200$ for the smaller Amazon categories $\{1, 2, 3, 12, 18\}$ and $\delta_{\mathrm{exp}} = 50$ for the larger categories $\{15, 17, 24, 25\}$; cluster-specific values $\{20, 10, 5\}$ for Florida Bay (Pelagic, Detritus, Benthic respectively); and $\Delta(\text{Iran}) = 500{,}000$ for the oil trade ranking experiment.

\underline{Boundary activation.} TL-HFD (Algorithm \ref{alg:pq-psgd-theory}) additionally requires a score exponent $\gamma$ and a boundary-activation rule. Boundary vertices are ranked by the  score
\[
s^{(t)}(u) \;=\; \kappa^{(t)}(u) \left( \frac{d_{\mathrm{in}}^{(t)}(u)}{d(u)} \right)^{\!\gamma},
\]
where $\gamma$ controls the strength of the structural commitment factor.

\underline{Two activation regimes: Activation-scale selection (auto-$k$) and grid-over-$k$.}
TL-HFD selects $k$, the number of boundary vertices activated per iteration,
in one of two ways depending on the dataset. Under \emph{Activation-scale
selection} (auto-$k$), the user specifies a grid of fractions $f$, and at
runtime the integer activation count $k$ is computed as a function of $f$
and the seed-injection magnitude. This rule lets a single fraction grid
generalize across clusters of different sizes without retuning $k$ directly.
Under \emph{grid-over-$k$}, the user instead supplies $k$ from a small set
of integer values. In both regimes, the configuration achieving the lowest
output conductance is reported; ties are broken by selecting the smallest
$f$ (auto-$k$) or smallest $k$ (grid-over-$k$). For the oil-trade ranking
task, where there is no labeled target cluster and conductance is undefined,
the configuration is selected by qualitative inspection of the returned
ranking.

The auto-$k$ rule activates the top-scoring boundary vertices according to
\[
k \;=\; \max\!\left\{1, \left\lfloor f \cdot \mathrm{vol}_{\mathrm{est}} \right\rceil\right\}
\quad \text{(unit cut-cost),}
\qquad
k \;=\; \max\!\left\{1, \left\lfloor f \cdot n_{\mathrm{est}} \right\rceil\right\}
\quad \text{(cardinality cut-cost),}
\]
where
\[
\mathrm{vol}_{\mathrm{est}} \;=\; \frac{\max_v \Delta(v)}{\delta_{\mathrm{exp}}},
\qquad
n_{\mathrm{est}} \;=\; \frac{\mathrm{vol}_{\mathrm{est}}}{\bar{d}},
\]
and $\bar{d}$ is the mean vertex degree. Because each trial uses a single seed with $\max_v \Delta(v) = \delta_{\mathrm{exp}} \cdot \mathrm{vol}(\mathcal T)$, the estimator simplifies to $\mathrm{vol}_{\mathrm{est}} = \mathrm{vol}(\mathcal T)$, so $f$ has a clear interpretation as a per-iteration boundary expansion rate relative to the target volume. Smaller $f$ yields a more conservative, locality-respecting diffusion; larger $f$ expands faster but increases the risk of activating vertices outside the target cluster. The fraction is selected from a small per-cluster grid (Section~\ref{app:fraction-sweep}). This rule is well-behaved on graphs with moderate density and a clearly defined target cluster: the range $f \in [10^{-3}, 10^{-1}]$ then produces $k$ values in $[1, |\mathcal T|]$, allowing a single per-dataset sweep to identify a near-optimal $f$. We use auto-$k$ on Trivago-clicks, Amazon-reviews, contact-high-school, and synthetic HSBM.

Two regimes break the volume-based scaling and require grid-over-$k$ instead. The first is \emph{very dense graphs}, where the target volume is large relative to $|V|$: on Florida Bay, $\bar{d} \approx 4{,}484$ and a target cluster of $35$ vertices has $\mathrm{vol}(T) \approx 173{,}000$, so even $f = 10^{-3}$ produces $k > |V|$ and auto-$k$ saturates instantly. The second is \emph{node-ranking tasks without a target cluster}, where $\mathrm{vol}(T)$ is undefined: the oil trade diffusion is run from a single fixed seed (Iran) for the purpose of ranking trading partners, with no labeled target community to scale against. In both regimes we sweep $k$ over a small grid and report the configuration that produces the lowest output conductance (Florida Bay) or the most semantically coherent ranking (oil trade).

We use \(\gamma=0.5\) on Amazon-reviews and \(\gamma=1.0\) on
Trivago-clicks and contact-high-school. The smaller exponent on Amazon makes
the structural commitment factor less aggressive, so the ranking remains more
responsive to the gradient push \(\kappa^{(t)}(u)\). This was empirically more
stable on Amazon, where product co-review hyperedges often contain bridge
products spanning multiple categories. On Trivago-clicks and contact-high-school, the cluster boundaries are cleaner --- co-clicked accommodations and within-classroom contact events are associated with a single cluster --- so a sharper $\gamma = 1.0$ produces better recovery.

\begin{table}[h]
\centering
\caption{Dataset-specific TL-HFD parameters for real-world experiments. $T_{\mathrm{iter}}$ denotes the number of TL-HFD iterations; two values are given when unit and cardinality cut-costs use different counts.}
\label{tab:tlhfd-real-params}
\begin{tabular}{l c c  c l}
\toprule
Dataset & Cut-costs  & $T_{\mathrm{iter}}$ (unit / card) & $\gamma$ & Activation  \\
\midrule
Trivago-clicks      & Unit, Card      & 500 / 1000 & 1.0  & auto-$k$ via $f$ \\
Amazon-reviews      & Unit              & 1000 / ---  & 0.5  & auto-$k$ via $f$ \\
Contact-high-school & Unit, Card     & 1000 / 1000 & 1.0  & auto-$k$ via $f$ \\
Florida Bay         & Sub             & 500 / ---  & 1.0  & grid over $k$ \\
Oil trade (2017)    & Sub             & 20  / ---  & 1.0 & grid over $k$ \\
\bottomrule
\end{tabular}
\end{table}

\underline{Fraction and \(k\)-grid summary for real-world datasets.}
\label{app:fraction-sweep}
 Here we summarize the grids used to select the boundary activation scale in the experiments. For sparse-to-moderate clustering datasets, TL-HFD uses a fraction \(f\) and converts it to an activation count \(k\) through the auto-\(k\) rule. For dense submodular clustering and ranking datasets, such as Florida Bay and Oil Trade, the volume-based scaling is less meaningful as explained above: in Florida Bay the extreme density can make the resulting \(k\) too large, while in Oil Trade there is no labeled target cluster volume because the task is ranking rather than clustering. For these datasets, we sweep \(k\) directly over a small integer grid.

 The auto-$k$ rule uses different normalizations for the two cut-cost classes: $k = \max\{1, \lfloor f \cdot \mathrm{vol}_{\text{est}} \rceil\}$ for unit cut-cost and $k = \max\{1, \lfloor f \cdot n_{\text{est}} \rceil\}$ for cardinality cut-cost, where $n_{\text{est}} = \mathrm{vol}_{\text{est}} / \bar{d}$ and $\bar{d}$ is the mean vertex degree. For the same $f$, the cardinality rule therefore activates approximately $\bar{d}$-fold fewer boundary vertices per iteration than the unit rule. To explore comparable activation counts across cut-cost classes, the cardinality grid is extended to roughly $\bar{d}$-fold larger fractions: on Trivago-clicks ($\bar{d} \approx 5.6$), this corresponds to unit fractions in $[0.01, 0.05]$ and cardinality fractions in $[0.01, 0.10]$. The same scaling motivates the corresponding grid for submodular cut-cost on datasets where it is used. This is purely a normalization difference in how $f$ maps to $k$. 
 
\begin{table}[h]
\centering
\caption{Activation-scale grids used for TL-HFD. A fraction grid means that
\(k\) is computed by the auto-\(k\) rule; a direct \(k\)-grid means that the
integer activation count is swept directly.}
\label{tab:fraction-grids}
\small
\begin{tabular}{l l l}
\toprule
Dataset & Cut-cost & Activation-scale grid \\
\midrule
Trivago-clicks
& unit
& \(f\in\{0.01,0.02,0.03,0.05\}\) \\

Trivago-clicks
& cardinality
& \(f\in\{0.01,0.02,0.03,0.05,0.07,0.10\}\) \\

Amazon-reviews
& unit
& \(f\in\{0.001,0.002,0.005,0.01,0.02,0.03, 0.05, 0.07, 0.1\}\) \\

Contact-high-school
& unit
& \(f\in\{0.01, 0.05, 0.1, 0.2, 0.5\}\) \\

Contact-high-school
& cardinality
&  \(f\in\{0.01, 0.05, 0.1, 0.2, 0.5\}\)\\

Florida Bay food web
& submodular
& direct \(k\)-grid, \(k\in\{1,2,3,5,7,10,15\}\) \\

Oil trade
& submodular
& direct \(k\)-grid, $k \in \{3, 5, 10\}$ \\

\bottomrule
\end{tabular}
\end{table}

For Trivago-clicks and Amazon-reviews, we report how sensitive F1 is to the fraction $f$ in their corresponding  sections. 

\subsection{Theory-Experiment bridge for seed injection mass}
\label{sec:theory-exp-bridge}

The theoretical analysis of Section~\ref{sec:tlhfd} uses the degree-weighted seed injection $\Delta(v)=\delta d_v$ for $v\in S$ and $\Delta(v)=0$ otherwise. The experiments in Section~\ref{sec:experiments} and Section~\ref{app:params} use a single-seed rule $\Delta[s^\star]=\delta_{\exp}\cdot\vol(\mathcal  T)$ with $s^\star\in C$ and target $T=C$ (this is also the case for \citep{fountoulakis2021local}). The following proposition makes the identification between these two rules precise.

\begin{proposition}
\label{prop:theory-experiment-bridge}
Let $S=\{s^\star\}$ with $s^\star\in C$, $T=C$, and let
$\Delta[s^\star]=\delta_{\exp}\cdot\vol(\mathcal T)$ with $\delta_{\exp}\geq 3$.
Define the seed-specific
\[
\delta \;:=\; \frac{\delta_{\exp}\cdot\vol(\mathcal  T)}{d_{s^\star}}.
\]
Then:
\begin{enumerate}
  \item[(a)] $\Delta$ agrees with the degree-weighted rule $\Delta(v)=\delta d_v$ for $v\in S$.
  \item[(b)] Assumption~\ref{ass:overlap} holds with $\beta=1$ and any
  $\alpha\leq d_{s^\star}/\vol(C)$; in particular, since
  $\delta=\delta_{\exp}\vol(C)/d_{s^\star}$ and $\delta_{\exp}\geq 3$, it
  holds with the choice $\alpha=3/\delta$.
  \item[(c)] Assumption~\ref{ass:parameter} holds whenever $\sigma\leq\Phi(C)/3$.
  \item[(d)] Theorems~\ref{thm:eps-optimality},~\ref{thm:robust-sweep}, and~\ref{thm:explored_volume_topk}; Corollaries~\ref{cor:dual_gap_threshold}, and ~\ref{cor:early-stopping}; and Proposition~\ref{prop:Gloc-bound} apply with the seed-specific \(\delta\) as above. The lower bound $D(x^\star)\geq D_C$ (Auxiliary Lemma \ref{lem:hfd-dual-lower}) and the sweep bound retains the same form \(3K/\alpha\), now evaluated with the seed-specific \(\alpha=3/\delta\). The box radius and subgradient constant become
\[
\bar x \;=\; \frac{\delta-1}{\sigma},\qquad
B \;=\; (\delta-1)\!\left(\tfrac{1}{\sigma}+2\right),
\]
with the seed-specific $\delta$.
\end{enumerate}
\end{proposition}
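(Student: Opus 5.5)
The proof is a direct verification of (a)--(c), after which (d) follows by re-reading the hypotheses of the earlier results. Nothing in the earlier proofs depends on $\delta$ being a global constant; it only enters through the identity $\Delta(v)=\delta d_v$ on $S$. For (a), take $S=\{s^\star\}$. Then $\delta d_{s^\star}=\delta_{\exp}\vol(\mathcal T)=\Delta[s^\star]$, and $\Delta(v)=0$ off $S$, so the two injection rules coincide.

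For (b), note that $S\cap C=\{s^\star\}$. This gives $\vol(S\cap C)=d_{s^\star}=\vol(S)$, so $\beta=1$. Also $\vol(S\cap C)=d_{s^\star}\ge\alpha\vol(C)$ exactly when $\alpha\le d_{s^\star}/\vol(C)$. With $\alpha=3/\delta=3d_{s^\star}/(\delta_{\exp}\vol(C))$, the condition $\delta_{\exp}\ge3$ yields $\alpha\le d_{s^\star}/\vol(C)$. We also need $\alpha\le1$, which follows from $d_{s^\star}\le\vol(C)$ since $s^\star\in C$. Hence $\alpha\in(0,1]$.

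For (c), the choice $\alpha=3/\delta$ makes $\delta=3/\alpha$ hold identically. The normalization $0\le w_e\le1$ is a standing hypothesis that I will simply carry along. Since $\beta=1$, the condition $\sigma\le\beta\Phi(C)/3$ reduces to $\sigma\le\Phi(C)/3$.

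For (d), I will go through each earlier result and check where $\delta$ enters:
\begin{itemize}
\item Lemma~\ref{lem:locality}, Theorem~\ref{thm:eps-optimality}, Lemma~\ref{lem:threshold-inexact} and Theorem~\ref{thm:explored_volume_topk} use only the generic injection $\Delta\ge0$ supported on $S$, so they apply verbatim.
\item Auxiliary Lemma~\ref{lem:xstar-bound} uses $\Delta(U)=\delta\vol(U\cap S)$, which holds by (a). It therefore gives $\bar x=(\delta-1)/\sigma$ with the seed-specific $\delta$; this requires $\delta>1$, which follows from $\delta\ge3$.
\item Proposition~\ref{prop:Gloc-bound} needs $\delta\ge2$, which also holds because $\delta=3/\alpha\ge3$. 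It then gives the stated $B$.
\item Auxiliary Lemma~\ref{lem:hfd-dual-lower} and Theorem~\ref{thm:robust-sweep} use only (a), (b), (c) and $\delta=3/\alpha$. The bound $\Delta(S_h)\le\delta\vol(S_h)$ and the estimate $\langle\Delta-d,\mathbf 1_C\rangle\ge2\vol(C)$ go through unchanged, so the constant remains $3K/\alpha$ with $\alpha=3/\delta$.
\item Corollaries~\ref{cor:dual_gap_threshold} and~\ref{cor:early-stopping} then follow by composing the results above.
\end{itemize}

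The main obstacle is making sure that no earlier proof silently used a $\delta$ or $\alpha$ independent of the seed. I will check this by auditing each proof for the two places these constants appear. The first is the identity $\Delta(v)=\delta d_v$, which is secured by (a). The second is the pair of overlap inequalities, which are secured by (b) together with $\alpha\le1$.
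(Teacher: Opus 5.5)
Your proposal is correct and follows the paper's proof. Parts (a)--(c) are checked exactly as the paper does. For (d), the paper also reduces the work to re-checking the dual lower bound: it computes $\langle\Delta-d,\lambda\mathbf 1_C\rangle=\lambda(\delta_{\exp}-1)\vol(C)\ge 2\lambda\vol(C)$ directly, which is equivalent to your route through $\delta\, d_{s^\star}\ge \delta\alpha\vol(C)=3\vol(C)$. Your item-by-item audit (including $\alpha\le 1$, $\delta>1$ for Auxiliary Lemma~\ref{lem:xstar-bound}, and $\delta\ge 2$ for Proposition~\ref{prop:Gloc-bound}) simply makes explicit what the paper summarizes as ``all subsequent arguments are unchanged.''
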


\begin{proof} 
\emph{Part (a).} Direct: $\delta d_{s^\star}=\delta_{\exp}\cdot\vol(\mathcal T)$ by construction.

\emph{Part (b).} Since $S=\{s^\star\}\subseteq C$, we have
$\vol(S\cap C)=d_{s^\star}=\vol(S)$, so $\beta=1$. The tight overlap ratio is
\[
\alpha_{\rm tight}
\;=\;
\frac{\vol(S\cap C)}{\vol(C)}
\;=\;
\frac{d_{s^\star}}{\vol(C)}.
\]
Since $T=C$, $\vol(\mathcal T)=\vol(C)$, and the defining relation
$\delta=\delta_{\exp}\vol(\mathcal T)/d_{s^\star}$ gives
$\delta=\delta_{\exp}\vol(C)/d_{s^\star}$. Hence
\[
\frac{3}{\delta}
\;=\;
\frac{3d_{s^\star}}{\delta_{\exp}\vol(C)}
\;\leq\;
\frac{d_{s^\star}}{\vol(C)}
\;=\;
\alpha_{\rm tight},
\]
because $\delta_{\exp}\geq 3$. Assumption~\ref{ass:overlap} therefore
holds with the looser choice $\alpha=3/\delta\leq\alpha_{\rm tight}$ and
$\beta=1$.

\emph{Part (c).} Assumption~\ref{ass:parameter} requires $\delta=3/\alpha$,
$0\leq w_e(A)\leq 1$, and $\sigma\leq\beta\Phi(C)/3$. The first holds with
the choice $\alpha=3/\delta$ from part~(b); the second is inherited from the
cut-cost normalization of Section~2; and the third reduces to
$\sigma\leq\Phi(C)/3$ since $\beta=1$.

\emph{Part (d).} We verify that Auxiliary Lemma \ref{lem:hfd-dual-lower} goes through unchanged.
Using $x=\lambda\mathbf{1}_C$ and $S=\{s^\star\}\subseteq C$,
\[
\langle\Delta-d,\,\lambda\mathbf{1}_C\rangle
\;=\;
\lambda\!\left(\delta_{\exp}\cdot\vol(\mathcal T)\cdot\mathbf{1}_{s^\star\in C}
\;-\;\vol(C)\right)
\;=\;
\lambda(\delta_{\exp}-1)\vol(C)
\;\geq\;
2\lambda\vol(C),
\]
using $T=C$ and $\delta_{\exp}\geq 3$. This matches the lower bound used in
the proof of Auxiliary Lemma~3, so $D(\lambda\mathbf{1}_C)\geq D_C$ and
hence $D(x^\star)\geq D_C$. All subsequent arguments
(Theorem~\ref{thm:robust-sweep}, Corollary~\ref{cor:early-stopping}) are
unchanged. The constants $\bar x$ and $B$ in
Proposition~\ref{prop:Gloc-bound} are stated in terms of $\delta$
and $\sigma$, so they specialize with the seed-specific $\delta$ given above.
\end{proof}

\paragraph{Remark on $\delta_{\exp}>3$.}
For datasets where $\delta_{\exp}>3$ (e.g., Amazon with
$\delta_{\exp}\in\{50,200\}$), the seed-specific $\delta$ is correspondingly
larger. Part~(b) still applies with $\alpha=3/\delta$, which is smaller than
$\alpha_{\rm tight}$; Assumption~\ref{ass:overlap} holds at this looser
$\alpha$ because it is a lower bound. The sweep constant $3K/\alpha$ grows
with $\delta_{\exp}$, reflecting the over-injection relative to the tight
seed–cluster overlap.

\paragraph{Notation convention for experimental injection.} In the experimental sections, when we report values such as \(\delta=3\), \(\delta=50\), or \(\delta=200\), this symbol denotes the experimental injection multiplier \(\delta_{\mathrm{exp}}\) used in the single-seed rule \(\Delta[s^\star]=\delta_{\mathrm{exp}}\operatorname{vol}(T)\), following the convention of \citep{fountoulakis2021local}. This experimental multiplier should not be confused with the degree-weighted theoretical coefficient \(\delta\) used in Section~\ref{sec:tlhfd}, where \(\Delta(v)=\delta d_v\) for \(v\in S\). For a single seed \(s^\star\), the two are related by $\delta_{\mathrm{theory}}(s^\star)
= \frac{\delta_{\mathrm{exp}}\operatorname{vol}(T)}{d_{s^\star}}$. Experimental hyperparameter values use the shorthand \(\delta\equiv\delta_{\mathrm{exp}}\).
 
\subsection{Trivago-Clicks}\label{app:trivago-sensitivity}

Table \ref{app:trivago-details} shows the cluster codes used in the Table \ref{tab:trivago-f1} for the Trivago dataset.  Tables ~\ref{tab:trivago-unit-frac} and~\ref{tab:trivago-card-frac} show the median F1 across the full fraction grid, demonstrating that performance is stable over a range of $f$ values. The cardinality and submodular cut-costs require larger fractions than the unit cut-cost because the weighted in-degree $d_{\mathrm{in}}^{\mathrm{card}}(u)$ is generally smaller than its unit counterpart, making the score more selective; consequently the auto-$k$ rule with a given $f$ activates fewer vertices in the cardinality and submodular settings. Accordingly, we draw $f$ from $[0.01, 0.05]$ for unit cut-cost and from $[0.01, 0.10]$ for cardinality cut-cost.

\begin{table}[!ht]
\centering
\caption{Trivago-clicks: Cluster Codes for Table~\ref{tab:trivago-f1}. $|\mathcal T|$ is the target cluster size;
$f$ is the boundary-fraction;
$k$ is the resulting per-iteration activation count. $\gamma$ is set to $1$ for both TL-HFD cut costs.}
\label{app:trivago-details}
\setlength{\tabcolsep}{6pt}
\begin{tabular}{l c c c c c}
\toprule
              &       & \multicolumn{2}{c}{TL-UHFD (unit)} & \multicolumn{2}{c}{TL-CHFD (card.)} \\
\cmidrule(lr){3-4} \cmidrule(lr){5-6}
Cluster       & $|\mathcal T|$ & $f$   & $k$ & $f$   & $k$  \\
\midrule
KOR  (South Korea)  & 945  & 0.02 & 74  & 0.07 & 47 \\
ISL  (Iceland)      & 202  & 0.03 & 25  & 0.07 & 11 \\
PRI  (Puerto Rico)  & 144  & 0.02 & 9   & 0.03 & 3  \\
CRM  (Crimea)       & 200  & 0.01 & 11  & 0.03 & 6  \\
VNM  (Vietnam)      & 832  & 0.01 & 23  & 0.05 & 21 \\
HKG  (Hong Kong)    & 536  & 0.01 & 23  & 0.02 & 17 \\
MLT  (Malta)        & 157  & 0.01 & 5   & 0.07 & 6  \\
GTM  (Guatemala)    & 199  & 0.01 & 7   & 0.10 & 12 \\
UKR  (Ukraine)      & 264  & 0.01 & 6   & 0.10 & 12 \\
EST  (Estonia)      & 158  & 0.01 & 8   & 0.05 & 8  \\
\bottomrule
\end{tabular}
\end{table}
 
\begin{table}[!ht]
\centering
\caption{Trivago-clicks: median F1 vs.\ fraction $f$ for unit cut-cost (vol-based $k$-scaling).}
\label{tab:trivago-unit-frac}
\begin{tabular}{l c c c c}
\toprule
Cluster & $f\!=\!.01$ & $.02$ & $.03$ & $.05$ \\
\midrule
KOR (South Korea)   & 0.81 & 0.86 & 0.81 & 0.62 \\
ISL (Iceland)       & 0.98 & 0.98 & 0.98 & 0.96 \\
PRI (Puerto Rico)   & 0.97 & 0.97 & 0.97 & 0.97 \\
CRM (Crimea)        & 0.85 & 0.84 & 0.84 & 0.83 \\
VNM (Vietnam)       & 0.58 & 0.55 & 0.50 & 0.34 \\
HKG (Hong Kong)    & 0.65 & 0.49 & 0.46 & 0.09 \\
MLT (Malta)        & 0.97 & 0.96 & 0.78 & 0.79 \\
GTM (Guatemala)     & 0.95 & 0.95 & 0.95 & 0.95 \\
UKR (Ukraine)       & 0.65 & 0.65 & 0.51 & 0.38 \\
EST (Estonia)      & 0.79 & 0.56 & 0.56 & 0.55 \\
\bottomrule
\end{tabular}
\vspace{2pt}
 
\emph{Observation: $f \in [0.01, 0.02]$ is broadly robust for Trivago's unit cut-cost. Performance degrades outside this range, with larger fractions causing overshoot
(activating too many boundary vertices per iteration) and smaller fractions
causing under-exploration.}
\end{table}
 
\begin{table}[!ht]
\centering
\caption{Trivago-clicks: median F1 vs.\ fraction $f$ for cardinality cut-cost ($n_{\mathrm{est}}$-based $k$-scaling).}
\label{tab:trivago-card-frac}
\begin{tabular}{l c c c c c c}
\toprule
Cluster & $f\!=\!.01$ & $.02$ & $.03$ & $.05$ & $.07$ & $.10$ \\
\midrule
KOR (South Korea)  & 0.24 & 0.75 & \textbf{0.82} & 0.81 & \textbf{0.82} & 0.77 \\
ISL (Iceland)      & 0.07 & 0.09 & 0.59 & 0.95 & \textbf{0.98} & 0.94 \\
PRI (Puerto Rico)  & 0.07 & 0.91 & \textbf{0.97} & \textbf{0.97} & \textbf{0.97} & \textbf{0.97} \\
CRM (Crimea)    & 0.30 & 0.71 & \textbf{0.85} & 0.70 & 0.66 & 0.75 \\
VNM (Vietnam)      & 0.20 & 0.22 & 0.30 & \textbf{0.32} & \textbf{0.32} & \textbf{0.32} \\
HKG (Hong Kong)    & 0.02 & \textbf{0.76} & 0.73 & 0.02 & 0.03 & 0.01 \\
MLT (Malta)         & 0.05 & 0.23 & 0.96 & \textbf{0.97} & \textbf{0.97} & \textbf{0.97} \\
GTM (Guatemala)    & 0.05 & 0.17 & 0.96 & 0.96 & \textbf{0.97} & \textbf{0.97} \\
UKR (Ukraine)     & 0.03 & 0.08 & 0.42 & 0.59 & 0.61 & \textbf{0.73} \\
EST (Estonia)      & 0.07 & 0.08 & 0.10 & \textbf{0.58} & 0.56 & 0.42 \\
\bottomrule
\end{tabular}
\vspace{2pt}
 
\emph{Observation: Cardinality requires larger fractions than unit,
reflecting the finer-grained weighted $d_{\mathrm{in}}$ which makes the scoring
more selective. Hong Kong exhibits instability at $f > 0.03$, suggesting the cluster
boundary is sharp and sensitive to over-expansion.}
\end{table}

\subsection{Results on Amazon-reviews Dataset}\label{sec:exp-amazon}
The Amazon-reviews dataset~\citep{veldt2020minimizing, ni2019justifying} is a large-scale hypergraph with $n = 2{,}268{,}231$ nodes, $m = 4{,}285{,}363$ hyperedges, and mean degree $\bar{d} \approx 32.3$. Hyperedges are sets of product reviews, and nodes are labeled by product category. Following~\citet{fountoulakis2021local}, we evaluate on 9 category clusters with varying injection coefficients ($\delta = 200$
for small clusters, $\delta = 50$ for large clusters). For each cluster $\mathcal T$, we run $\min(|\mathcal T|, 500)$ single-seed trials: every target-cluster vertex is used once as a seed when $|\mathcal T| \leq 500$, and $500$ vertices are sampled uniformly at random from $\mathcal T$ otherwise. We set $\sigma = 10^{-4}$, $\gamma = 0.5$, and run TL-HFD for $T = 1000$ iterations (unit cut-cost only, matching the baseline). Auto-$k$ fractions are swept over $f \in \{0.001, 0.002, 0.005, 0.01, 0.02, 0.03, 0.05, 0.07, 0.1\}$, with fractions producing $k > |\mathcal T|$ skipped. For each cluster, we select $f$ by minimum output conductance $\Phi(S_h)$ across the grid (ties are resolved by selecting the F1 corresponding to the smallest such $f$) --- an unsupervised criterion that uses no ground-truth labels --- and report the median F1 across trials at the selected~$f$. The sensitivity to the structural-commitment exponent $\gamma$ is provided in Appendix~\ref{app:gamma-ablation}.

\begin{table}[t]
\centering
\caption{Amazon-reviews: 
Best result per cluster (unit costs) in \textbf{bold}. Clusters are grouped by injection
coefficient $\delta$ (smaller clusters use $\delta = 200$; larger use
$\delta = 50$). }
\label{tab:amazon-f1}
\setlength{\tabcolsep}{4pt}
\begin{tabular}{l c c c c c c c c c}
\toprule
Method    & 2             & 1             & 3             & 12            & 18            & 15            & 17            & 24            & 25            \\
          & \scriptsize Fashion & \scriptsize Beauty & \scriptsize Appliances & \scriptsize Gift Cards & \scriptsize Magazine & \scriptsize Ind./Sci. & \scriptsize Lux. Beauty & \scriptsize Pantry & \scriptsize Software \\
\midrule
U-HFD     & 0.26        & 0.07        & 0.46         & 0.82         & 0.71         & 0.02 & 0.09         & \textbf{0.83} & 0.09         \\
TL-UHFD   & \textbf{0.28} & \textbf{0.09} & \textbf{0.61} & \textbf{0.86} & \textbf{0.75} & 0.01         & \textbf{0.09} & 0.79         & \textbf{0.11} \\
U-LH-2.0  & 0.21          & 0.07          & 0.23          & 0.29          & 0.21          & \textbf{0.05} & 0.06          & 0.28          & 0.05          \\
U-LH-1.4  & 0.21          & 0.09          & 0.35          & 0.40          & 0.31          & 0.01          & 0.07          & 0.35          & 0.06          \\
ACL       & 0.20          & 0.07          & 0.22          & 0.25          & 0.17          & 0.04          & 0.05          & 0.20          & 0.04          \\
\bottomrule
\end{tabular}
\end{table}

\paragraph{Results.}
Table~\ref{tab:amazon-f1} shows the results. TL-UHFD improves over U-HFD on 6 out of 9 clusters, ties on one, loses on two. For the $\delta = 200$ clusters (small,
well-separated categories), TL-UHFD wins on all five. The largest absolute gain is
$+0.15$ F1 on Appliances ($0.46 \to 0.61$, a $33\%$ relative improvement), followed
by $+0.04$ on Gift Cards ($0.82 \to 0.86$) and $+0.04$ on Magazine Subscriptions
($0.71 \to 0.75$); Fashion and Beauty show modest gains of $+0.02$ each. For the $\delta = 50$ clusters (larger categories), results are more mixed: TL-UHFD ties U-HFD on Luxury Beauty (both at $0.09$) and edges ahead by $+0.02$ on Software, while losing
$-0.04$ on Prime Pantry, where U-HFD's primal method finds the cluster effectively.
Industrial~\& Scientific fails for all diffusion-based methods (best F1~$=0.05$,
achieved by U-LH-2.0), indicating this cluster lacks well-defined local structure
amenable to local clustering (baseline conductance~$>0.5$). TL-UHFD outperforms LH
and ACL baselines, often by large margins ---
e.g., $+0.46$ over U-LH-1.4 on Magazine Subscriptions and $+0.57$ over ACL on
Gift Cards. Table~\ref{tab:amazon-frac} reports the sensitivity of F1 to the
boundary fraction $f$, confirming that the unsupervised conductance-based selection
picks fractions in regions of stable performance.

\begin{table}[h]
\centering
\caption{Amazon-reviews: median F1 vs.\ fraction $f$ for unit cut-cost (vol-based $k$-scaling). N/A indicates $k > |\mathcal T|$ (skipped). $^\star$ indicates the cluster with lowest conductance that is chosen and reported in Table~\ref{tab:amazon-f1}. The Indust.\ \& Sci. row is truncated at two decimal digits. }
\label{tab:amazon-frac}
\begin{tabular}{l c c c c c c c c c}
\toprule
Cluster & $f\!=\!.001$ & $.002$ & $.005$ & $.01$ & $.02$ & $.03$ & $.05$ & $.07$ & $.10$ \\
\midrule
Amazon Fashion    & 0.23 & 0.23 & 0.28$^\star$ & 0.27 & N/A & N/A & N/A & N/A & N/A \\
All Beauty        & 0.05 & 0.08 & 0.09$^\star$ & 0.09 & 0.08 & N/A & N/A & N/A & N/A \\
Appliances        & 0.22 & 0.22 & 0.22 & 0.28 & 0.28 & 0.62 & 0.62 & 0.61$^\star$ & 0.61 \\
Gift Cards        & 0.25 & 0.34 & 0.53 & 0.73 & 0.86$^\star$ & 0.82 & 0.72 & N/A & N/A \\
Magazine Sub.     & 0.09 & 0.20 & 0.31 & 0.48 & 0.75$^\star$ & 0.76 & 0.65 & N/A & N/A \\
Indust.\ \& Sci.  & 0.00 & 0.00 & 0.00 & 0.00 & 0.00 & 0.01$^\star$ & 0.01 & 0.01 & N/A \\
Luxury Beauty     & 0.01 & 0.03 & 0.04 & 0.08 & 0.08 & 0.08 & 0.09$^\star$ & N/A & N/A \\
Prime Pantry      & 0.09 & 0.20 & 0.31 & 0.35 & 0.79$^\star$ & 0.74 & N/A & N/A & N/A \\
Software          & 0.02 & 0.03 & 0.05 & 0.09 & 0.11$^\star$ & 0.12 & 0.12 & N/A & N/A \\
\bottomrule
\end{tabular}
\vspace{2pt}
 
\emph{Observation: $\delta=200$ clusters peak at $f \in [0.005, 0.03]$. $\delta=50$ clusters
need larger fractions ($f \in [0.02, 0.05]$) to generate sufficient boundary
activations. Industrial \& Scientific fails across all fractions for both
methods, indicating poor cluster separability.}
\end{table}

\subsection{Florida Bay Food Web}\label{app:foodweb}
 
The Florida Bay food web~\citep{li2017inhomogeneous} is a small but extremely dense hypergraph with $n = 126$ nodes, $m = 141{,}233$ hyperedges (all 4-uniform), and mean degree $\bar{d} \approx 4{,}484$. It contains 3 clusters corresponding to trophic levels: Pelagic ($|\mathcal T|=17$), Detritus ($|\mathcal T|=35$), and Benthic ($|\mathcal T|=70$). Each hyperedge encodes a motif between two prey and two predator species.
 
\paragraph{Why submodular cut-cost only.}
Following~\citet{fountoulakis2021local}, we evaluate using the \emph{submodular} cut-cost (S-HFD), which is the natural model for this dataset. The submodular weight function $w_e$ distinguishes between prey--prey splits ($w_e = 0$), predator--predator splits ($w_e = 0$), and prey--predator splits ($w_e = 1$). Unit and cardinality cut-costs treat all splits equally, losing this biological distinction. The HFD paper uses the food web specifically to demonstrate the advantage of submodular cut-costs over simpler alternatives.
 
\paragraph{Setup.} We set $\sigma = 10^{-4}$, $T = 500$ iterations, and we use every target-cluster vertex once as a seed (matching the HFD protocol). The injection coefficients are cluster-specific: $\delta_{exp} = 20$ (Pelagic), $\delta_{exp} = 10$ (Detritus), $\delta_{exp} = 5$ (Benthic). We compare S-HFD and our method (TL-HFD) with top-$k$, $\gamma \in \{0.25, 0.5, 1.0\}$ over $k \in \{1, 2, 3, 5, 7, 10, 15\}$.

Unlike the other datasets, we sweep over a grid of $k$ values rather than using the fraction-based $k$ selection. The food web's extreme density ($\bar{d} \approx 4{,}484$ with only 126 nodes) makes the volume-based scaling $k = f \cdot \text{vol}_{\mathrm{est}}$ produce unreasonably large $k$ values --- for instance, Detritus has $\text{vol}_{\mathrm{est}} = 173{,}311$, so even $f = 0.001$ yields $k = 173$, exceeding the total number of nodes. The auto-$k$ mechanism is designed for sparse-to-moderate hypergraphs where volume scales proportionally with cluster size; it is not applicable in this ultra-dense regime.

The  scoring for submodular cut-cost uses the submodular weight function to compute $d_{\mathrm{in}}$: for each edge $e$ incident to boundary vertex $u$ that touches the active set $A$, $d_{\mathrm{in}}(u) \mathrel{+}= w_e(A \cap e)$. 
 
\begin{table}[h]
\centering
\caption{Florida Bay food web: median F1 for submodular cut-cost. Best result per cluster in \textbf{bold}.}
\label{tab:foodweb}
\begin{tabular}{l c c c c}
\toprule
Cluster & $|\mathcal T|$ & S-HFD  & TL-HFD  & $(\gamma, k)$ \\
\midrule
Pelagic  &  17 & \textbf{0.69} & \textbf{0.69} ($k\!\leq\!15$) & any \\
Detritus &  35 & 0.62 &  \textbf{0.68} ($k\!=\!3$) & $\gamma\!=\!0.25$ \\
Benthic  &  70 & \textbf{0.84} &  \textbf{0.84} ($k\!=\!3$) & $\gamma\!=\!0.25$ \\
\bottomrule
\end{tabular}
\end{table}

 \paragraph{Results.}
Table~\ref{tab:foodweb} summarizes the results. The food web is a dense 
dataset: it has only \(126\) nodes but \(141{,}233\) hyperedges, giving a mean
degree of approximately \(4{,}484\). Thus, each node participates in thousands
of hyperedges. As a consequence, the boundary at each iteration is often
smaller than the tested values of \(k\), so thresholding has limited effect:
in many runs, all boundary vertices are activated.

For Pelagic, TL-SHFD matches S-HFD, achieving median F1 \(=0.69\) across
almost all tested \(k\) and \(\gamma\) values. For Detritus, TL-HFD improves
over S-HFD, increasing median F1 from \(0.62\) to \(0.68\) at
\(\gamma=0.25\) and \(k=3\). For Benthic, S-HFD and TL-SHFD attain the same result,
achieving F1 \(=0.84\), at \(\gamma=0.25\) and \(k=3\).

These results show that TL-HFD with submodular cut-cost correctly integrates
the submodular weight function in both the subgradient computation and the
 boundary scoring, while largely preserving the HFD's clustering
quality on this dense, well-structured dataset.

\subsection{Contact-High-School}
\label{app:highschool}

The contact-high-school dataset~\citep{mastrandrea2015contact} is a dense
hypergraph of face-to-face interactions among students, with \(n=327\) nodes,
\(m=7{,}818\) hyperedges, and mean degree \(\bar d\approx 55\). The nine
ground-truth clusters correspond to school classes, such as 2BIO1, MP*1, and
PC. Following~\citet{fountoulakis2021local}, we evaluate each class by using
every target-cluster vertex once as a single seed, yielding \(|\mathcal T|\)
single-seed trials per class.

\paragraph{Setup.}
We set \(\sigma=10^{-4}\), injection factor \(\delta_{\mathrm{exp}}=3\), and  exponent \(\gamma=1.0\). TL-HFD is run for \(T=1000\)
iterations for both unit and cardinality cut-costs. For each cluster, $f$
is selected from the grid by minimum output conductance, and the median F1 at the selected $f$ is reported.

\begin{table}[h]
\centering
\caption{Contact-high-school: median F1 for unit cut-cost. $f$ is
 selected per cluster by minimum output conductance for TL-UHFD. Best result per cluster in \textbf{bold}.}
\label{tab:school-unit}
\begin{tabular}{l c c c c }
\toprule
Cluster & $|\mathcal T|$ & U-HFD & TL-UHFD & $f$\\
\midrule
2BIO1  & 36 & \textbf{0.986} & \textbf{0.986} & 0.01 \\
2BIO2  & 34 & \textbf{1.000} & 0.964          & 0.05  \\
2BIO3  & 40 & 0.593          & \textbf{0.620} & 0.01   \\
MP*1   & 29 & \textbf{0.964} & 0.893          & 0.01   \\
MP*2   & 38 & 0.731          & \textbf{0.812} & 0.50  \\
PSI*   & 34 & \textbf{1.000} & \textbf{1.000} & 0.01   \\
PC     & 44 & 0.879          & \textbf{1.000} & 0.01 \\
PC*    & 39 & \textbf{1.000} & \textbf{1.000} & 0.01  \\
MP     & 33 & \textbf{0.985} & \textbf{0.985} & 0.01  \\
\bottomrule
\end{tabular}
\end{table}
 
\begin{table}[h]
\centering
\caption{Contact-high-school: median F1 for cardinality cut-cost. Best result per cluster in \textbf{bold}.}
\label{tab:school-card}
\begin{tabular}{l c c c c }
\toprule
Cluster & $|\mathcal T|$ & C-HFD & TL-CHFD & $f$  \\
\midrule
2BIO1  & 37 & \textbf{0.986} & \textbf{0.986} & 0.01  \\
2BIO2  & 34 & \textbf{1.000} & 0.946          & 0.01   \\
2BIO3  & 40 & \textbf{1.000} & \textbf{1.000} & 0.01  \\
MP*1   & 29 & 0.964          & \textbf{1.000} & 0.01  \\
MP*2   & 38 & 0.796          & \textbf{0.833} & 0.01   \\
PSI*   & 34 & \textbf{1.000} & \textbf{1.000} & 0.01   \\
PC     & 44 & \textbf{1.000} & \textbf{1.000} & 0.01   \\
PC*    & 39 & \textbf{1.000} & \textbf{1.000} & 0.01   \\
MP     & 33 & \textbf{0.985} & \textbf{0.985} & 0.01   \\
\bottomrule
\end{tabular}
\end{table}

 \paragraph{Results.}
Tables~\ref{tab:school-unit} and~\ref{tab:school-card} show the results. This
dataset is a challenging evaluation setting because HFD (\cite{fountoulakis2021local}) already achieves
near-perfect F1 on most classes, leaving limited room for improvement.

For unit cut-cost, TL-UHFD wins on three classes: 2BIO3 \((+0.027)\), MP*2
\((+0.081)\), and PC \((+0.121)\). It ties HFD on four classes and loses on
two classes, 2BIO2 \((-0.036)\) and MP*1 \((-0.071)\), where HFD is already
perfect or near-perfect.

For cardinality cut-cost, TL-CHFD ties or wins on eight out of nine classes.
MP*1 improves from \(0.964\) to \(1.000\), and MP*2 improves from \(0.796\) to
\(0.833\). The only loss is 2BIO2 \((-0.054)\), where HFD achieves perfect
F1. TL-CHFD achieves F1 \(=1.0\) on five classes: 2BIO3, MP*1, PSI*, PC, and
PC*. Overall, these results show that TL-HFD largely preserves quality on a dense,
well-clustered hypergraph where the baseline is already strong, while still
providing gains on some classes.

 \subsection{Oil Trade Ranking}\label{app:oil-trade}
 The oil-trade dataset~\citep{fountoulakis2021local} is a 4-uniform hypergraph
of international oil-trade flows with \(n=229\) countries, \(m=100{,}639\)
hyperedges, and mean degree \(\bar d\approx 1{,}758\). Following
\citet{fountoulakis2021local}, we use this dataset for a node-ranking task
rather than a clustering task: given a seed country, Iran, the goal is to rank
other countries by proximity in the diffusion embedding. There are no
ground-truth cluster labels, so evaluation is qualitative.

\paragraph{Setup.} We set $\Delta[\text{Iran}] = 500{,}000$, $\sigma = 0.1$, and run the submodular cut-cost with $T = 20$ iterations, matching the HFD parameter setting. We also run unit and cardinality cut costs for $T=100$ to observe the ranking generated from TL-HFD. For TL-SHFD (TL-HFD with submodular cut costs), rankings are computed from the best dual iterate: $\text{score}(v) = \sigma \cdot x^{\mathrm{dual}}_v$. TL-HFD uses a grid over $k$ with $k \in \{3, 5, 10\}$ and $\gamma \in \{0.25, 0.5, 1.0\}$.

 \paragraph{Why submodular is the correct model.}
The oil-trade hypergraph encodes $4$-way trade motifs in which two countries
export and two countries import within each transaction. The submodular
cut-cost distinguishes between exporter--exporter splits, which receive low
penalty, and exporter--importer splits, which receive high penalty. This
captures the directional structure of trade flows. Unit and cardinality
cut-costs treat all nontrivial splits symmetrically, and therefore discard this
exporter--importer structure.

\begin{table}[h]
\centering
\caption{Oil-trade ranking (seed: Iran): top-10 countries under submodular cut-cost. TL-HFD produces identical rankings across all $k$ and $\gamma$ configurations.}
\label{tab:oil-trade}
\begin{tabular}{c l l}
\toprule
Rank & S-HFD (baseline) & TL-SHFD (all configs) \\
\midrule
1  & Turkmenistan    & Turkmenistan    \\
2  & Iraq            & Iraq            \\
3  & Bahrain         & Guinea $\uparrow$         \\
4  & Brunei          & Bahrain $\downarrow$        \\
5  & Guinea          & Brunei $\downarrow$         \\
6  & Kazakhstan      & Kazakhstan      \\
7  & Libya           & Libya           \\
8  & Oman            & Qatar $\uparrow$          \\
9  & Qatar           & Congo $\uparrow$          \\
10 & Congo           & Mozambique (new)\\
\bottomrule
\end{tabular}
\vspace{2pt}
 
\emph{9 of 10 countries overlap. Oman (baseline rank 8) is replaced by Mozambique. All are oil-producing or oil-exporting nations with similar trade network positions to Iran.}
\end{table}

 \paragraph{Results.}
Table~\ref{tab:oil-trade} compares the submodular rankings. Under the
submodular cut-cost, TL-HFD produces identical rankings across all tested
\(k\) and \(\gamma\) configurations, indicating that thresholding does not
materially change the ranking on this dense instance. The top-10 ranking
overlaps with the HFD baseline on 9 of 10 countries, with only minor reordering:
Guinea moves from rank 5 to rank 3, and Oman drops out in favor of Mozambique.
Both methods identify countries with oil-trade network positions similar to
Iran, such as Turkmenistan, Iraq, Bahrain, Kazakhstan, Libya, and Qatar.

Under unit and cardinality cut-costs, the corresponding HFD and TL-HFD variants
rank countries near Iran that are less clearly related to Iran's oil-trade
position, such as Kenya, Bangladesh, Armenia, and Tanzania. This behavior is
consistent across methods and reflects model mismatch: unit and cardinality
cut-costs cannot encode the directional exporter--importer structure of the
4-way trade motifs. The submodular cut-cost is therefore necessary for
meaningful rankings on this dataset, as also observed by
\citet{fountoulakis2021local}.

\subsection{Hypergraph Stochastic Block Model (SBM)}
\label{app:sbm} 

We evaluate TL-HFD on synthetic hypergraphs generated from the hypergraph stochastic block model used by~\citet{fountoulakis2021local}. The hypergraph has \(n=100\) vertices partitioned into two equal blocks of size \(50\), and
all hyperedges are 3-uniform. We use the first block as the target cluster \(\mathcal T\). The intra-block triples are sampled with probability \(p=0.0765\), which gives approximately \(1500\) intra-block hyperedges in expectation for each block. The inter-block probability \(q\) is varied to control the ground-truth conductance \(\Phi(\mathcal T)\) of the target cluster.

\paragraph{Conductance sweep.}
The sweep over \(q\) corresponds to approximately \(500\) to \(9000\) expected
cross-block hyperedges, and hence to total expected hyperedge counts $\approx$
 \(3500\) to \(12000\). This varies the target-cluster conductance from
approximately \(0.10\) to \(0.50\). For each setting of \(q\), one hypergraph
is sampled, and every vertex in the target cluster is used once as the single
seed, yielding \(50\) single-seed trials per hypergraph. This keeps the number
of vertices, hyperedge cardinality, and intra-block density fixed while varying
the amount of cross-block noise and hence the separation of the planted
cluster.

\paragraph{Methods compared.}
We compare U-HFD against TL-UHFD with fixed top-$k$: activate the top $k$ boundary vertices ranked by the score
$s(u) = \kappa(u) \left( d_{\mathrm{in}}(u) / d(u) \right)^{\gamma}$, with $k \in \{3, 5, 10\}$ and $\gamma \in \{0.3, 0.5, 1\}$. Because cluster size is fixed at $|\mathcal T|=50$ and the boundary is bounded by $|V \setminus \mathcal T| \le 50$, we sweep $k$ directly rather than via the auto-$k$ rule.

\paragraph{Hypotheses.}
The synthetic SBM tests two hypotheses about TL-HFD's behavior:
\begin{enumerate}
     
\item \textbf{H1 (Effect of \(\gamma\)).} \label{h1}
The planted partition model has a clean, symmetric block structure. Unlike
real-world hypergraphs, it contains no semantically ambiguous bridge
vertices --- such as products belonging naturally to multiple categories ---
so we expect the structural commitment factor
\((d_{\mathrm{in}} / d)^\gamma\) to have only a minor effect, with most of
the variation between methods driven by the top-\(k\) activation cap itself.
SBM thereby provides a controlled setting for isolating the effect of
thresholding from the effect of structural commitment.

\item \textbf{H2 (Effect of conductance).} \label{h2}
We expect TL-HFD to recover the planted cluster well when conductance is low
(clean separation, little inter-cluster noise) and to degrade as conductance
increases. Specifically, restricting boundary expansion may slow recovery on
low-conductance clusters where broad expansion is beneficial, but the same
cap may help on high-conductance clusters by limiting over-inclusion of
non-target vertices. The crossover --- if it exists --- locates the
conductance regime in which top-\(k\) thresholding becomes net beneficial.
\end{enumerate}

\paragraph{Setup.}
We use $\sigma = 0.01$ and injection factor $\delta_{\mathrm{exp}} = 3$.
The choice $\sigma = 0.01$ matches the baseline's synthetic protocol and
is larger than the $\sigma = 10^{-4}$ used on real-world hypergraphs,
which have noisier and more heterogeneous local structure. Both U-HFD and
TL-HFD are run for $T = 500$ iterations; U-HFD converges in fewer than
$100$ iterations under its alternating-minimization update. We report
median F1 across the $50$ single-seed trials per hypergraph.

\paragraph{Results.}
We report results at low conductance (\(\Phi(T) \in [0.10, 0.20]\),
Table~\ref{tab:sbm-low}) and at high conductance
(\(\Phi(T) \in [0.40, 0.50]\), Table~\ref{tab:sbm-high}). In the intermediate
range \(\Phi(T) \in [0.20, 0.40]\), TL-UHFD with top-\(k\) tracks
U-HFD closely (within \(\pm 0.02\) median F1), so we omit these rows to keep the focus on the regimes where the methods diverge. 

\paragraph{Hypothesis \ref{h1}: Effect of \(\gamma\) .} Across the SBM sweep, varying \(\gamma \in \{0.3, 0.5, 1.0\}\) produces no
meaningful change in F1 scores (differences within \(\pm 0.005\) at all
conductance levels). This is consistent with the Hypothesis \ref{h1}: in a planted partition without bridge vertices, all boundary vertices have similar \(d_{\mathrm{in}} / d\) ratios, so the structural commitment factor rescales the score uniformly and does not change the top-\(k\) ranking. We therefore report all SBM results at \(\gamma = 1\).

\begin{table}[h]
\centering
\caption{Synthetic SBM: median F1 at \emph{low conductance}
(clean, well-separated clusters). The columns labeled topk=\(k\) give
committed top-\(k\) at fixed activation count \(k\). U-HFD achieves perfect F1; TL-UHFD remains very close, with a
small loss only near the higher end of this low-conductance range. }
\label{tab:sbm-low}
\begin{tabular}{c c c c c c }
\toprule
\#edges & \(\Phi(T)\) & U-HFD & topk=3 & topk=5 & topk=10  \\
\midrule
3500 & 0.095 & 1.000 & 1.00 & 1.00 & 1.00 \\
3600 & 0.111 & 1.000 & 1.00 & 1.00 & 1.00  \\
3700 & 0.126 & 1.000 & 1.00 & 1.00 & 1.00 \\
3800 & 0.141 & 1.000 & 1.00 & 1.00 & 1.00 \\
3900 & 0.154 & 1.000 & 1.00 & 1.00 & 1.00 \\
4000 & 0.167 & 1.000 & 0.99 & 0.99 & 0.99  \\
4100 & 0.179 & 1.000 & 0.99 & 0.99 & 0.99  \\
4200 & 0.191 & 1.000 & 0.99 & 0.99 & 0.99  \\
\bottomrule
\end{tabular}

\vspace{2pt}
\emph{On well-separated clusters, U-HFD achieves perfect F1. TL-HFD matches
within \(\le 0.015\) across all configurations, confirming that thresholding
causes only a small loss when clusters are easy to find.}
\end{table}

\begin{table}[h]
\centering
\caption{Synthetic SBM: median F1 at \emph{high conductance}
(noisy, poorly separated clusters). The columns labeled topk=\(k\) give
committed top-\(k\) at fixed activation count \(k\). Best per row in \textbf{bold}. TL-HFD improves over U-HFD in
the intermediate high-conductance regime.}
\label{tab:sbm-high}
\begin{tabular}{c c c c c c }
\toprule
\#edges & \(\Phi(T)\) & U-HFD & topk=3 & topk=5 & topk=10  \\
\midrule
7700  & 0.407 & 0.74 & \textbf{0.75} & \textbf{0.75} & 0.74  \\
8000  & 0.417 & 0.72 & 0.73 & \textbf{0.74} & 0.73  \\
8500  & 0.432 & 0.68 & \textbf{0.69} & \textbf{0.69} & \textbf{0.69}  \\
9000  & 0.445 & 0.64 & 0.66 & \textbf{0.68} & 0.67  \\
10000 & 0.468 & 0.59 & \textbf{0.60} & \textbf{0.60} & \textbf{0.60}  \\
11000 & 0.486 & \textbf{0.56} & 0.55 & 0.55 & \textbf{0.56}  \\
12000 & 0.502 & \textbf{0.51} & 0.50 & \textbf{0.51} & 0.50  \\
\bottomrule
\end{tabular}

\vspace{2pt}
\emph{TL-HFD improves over U-HFD in the intermediate range
\(\Phi(T) \approx 0.42\)--\(0.47\), with the largest rounded gain at
\(\Phi(T) = 0.445\), where U-HFD obtains \(0.64\) and top-\(k = 5\) obtains
\(0.68\). Beyond \(\Phi(T) \approx 0.49\), all methods degrade as the planted
cluster becomes weakly separated.}
\end{table}

\textbf{Low conductance.}
For \(\Phi(T) \le 0.20\), U-HFD achieves perfect F1 across all tested
conductance levels. TL-HFD remains very close, with median F1 between
\(0.99\) and \(1.00\). This is the cost of thresholding in a clean planted
partition: when most positively pushed boundary vertices are useful,
restricting expansion can only slow or slightly limit recovery.

\textbf{Higher conductance.}
For \(\Phi(T) \in [0.40, 0.50]\), the planted cluster is less well separated.
In the intermediate range \(\Phi(T) \approx 0.42\)--\(0.47\), TL-HFD improves
over U-HFD, with the largest rounded gain at \(9000\) edges
(\(\Phi(T) = 0.445\)): U-HFD obtains \(0.64\), while TL-HFD with
top-\(k = 5\) obtains \(0.68\). Beyond \(\Phi(T) \approx 0.49\), all methods
degrade together, reflecting the weak separation of the planted cluster.

\paragraph{Hypothesis  \ref{h2}: Effect of conductance.}
The two regimes in Tables~\ref{tab:sbm-low} and~\ref{tab:sbm-high} confirm H2
and locate the crossover. On low-conductance clusters
(\(\Phi(T) \le 0.20\)), U-HFD achieves perfect F1 across all tested levels,
and TL-HFD remains very close (median F1 between \(0.99\) and \(1.00\), gap
\(\le 0.015\)). This is the predicted cost of thresholding in a clean planted
partition: when most positively pushed boundary vertices belong to the
target, restricting expansion can only slow or slightly limit recovery. On
high-conductance clusters (\(\Phi(T) \in [0.40, 0.50]\)), the planted cluster
is less well separated, and the relative ordering between U-HFD and TL-HFD
flips in the intermediate range \(\Phi(T) \approx 0.42\)--\(0.47\): TL-HFD
improves over U-HFD, with the largest rounded gain at \(\Phi(T) = 0.445\),
where U-HFD obtains \(0.64\) and TL-HFD with top-\(k = 5\) obtains \(0.68\).
Beyond \(\Phi(T) \approx 0.49\), all methods degrade together as the planted
structure becomes too weak for any local method to recover. The crossover
near \(\Phi(T) \approx 0.40\) therefore locates the regime in which
top-\(k\) thresholding becomes net beneficial: TL-HFD's restriction protects
against over-inclusion of cross-block vertices once the boundary expansion
itself becomes noisy.

\paragraph{Tradeoff summary.}
The synthetic experiment isolates the role of top-\(k\) thresholding under a
clean planted-partition model. On well-separated clusters, thresholding
slightly hurts because broad expansion is beneficial. On noisier clusters,
thresholding can help by limiting over-expansion. The larger gains observed
on real-world datasets are consistent with the additional role of committed
weighting, which is designed to filter structurally ambiguous boundary
vertices; this effect is largely absent in the planted SBM.

\paragraph{Visualizing the SBM sweep.}
Figures~\ref{fig:sbm-f1-low}--\ref{fig:sbm-cond-high} present the SBM
results graphically, complementing the median values in the tables. Each figure plots a metric (F1 or output conductance) against the ground-truth target-cluster conductance $\Phi(T)$. Solid and dashed lines show the median across the $50$ single-seed trials per hypergraph, with line style and color identifying the method (U-HFD or TL-HFD at a specific top-$k$ cap). The shaded band around each line shows the interquartile range ($25$th--$75$th percentile) across the same trials: a narrow band indicates insensitivity to seed choice on that hypergraph, while a wide band indicates substantial seed-to-seed variability. We report each metric across two panels: a low-$\Phi$ panel covering $\Phi(T) \in [0.10, 0.20]$ (clean, well-separated clusters) and a high-$\Phi$ panel covering $\Phi(T) \in [0.40, 0.50]$ (noisy, weakly separated clusters). The intermediate range $\Phi(T) \in [0.20, 0.40]$ is omitted because all methods track each other within $\pm 0.02$ median F1 there. For F1, higher is better; for output conductance, lower is better. The methods are visually distinguished by both color and dash pattern, so the figures remain legible in grayscale. Figures~\ref{fig:sbm-f1-low}, and ~\ref{fig:sbm-f1-high} show the F1 scores for U-HFD and TL-HFD  under low conductance setting. Figures ~\
~\ref{fig:sbm-cond-low}, and ~\ref{fig:sbm-cond-high} depicts how the methods track the ground truth conductance. 

\begin{figure}[!htbp]
\centering
\includegraphics[width=0.65\linewidth]{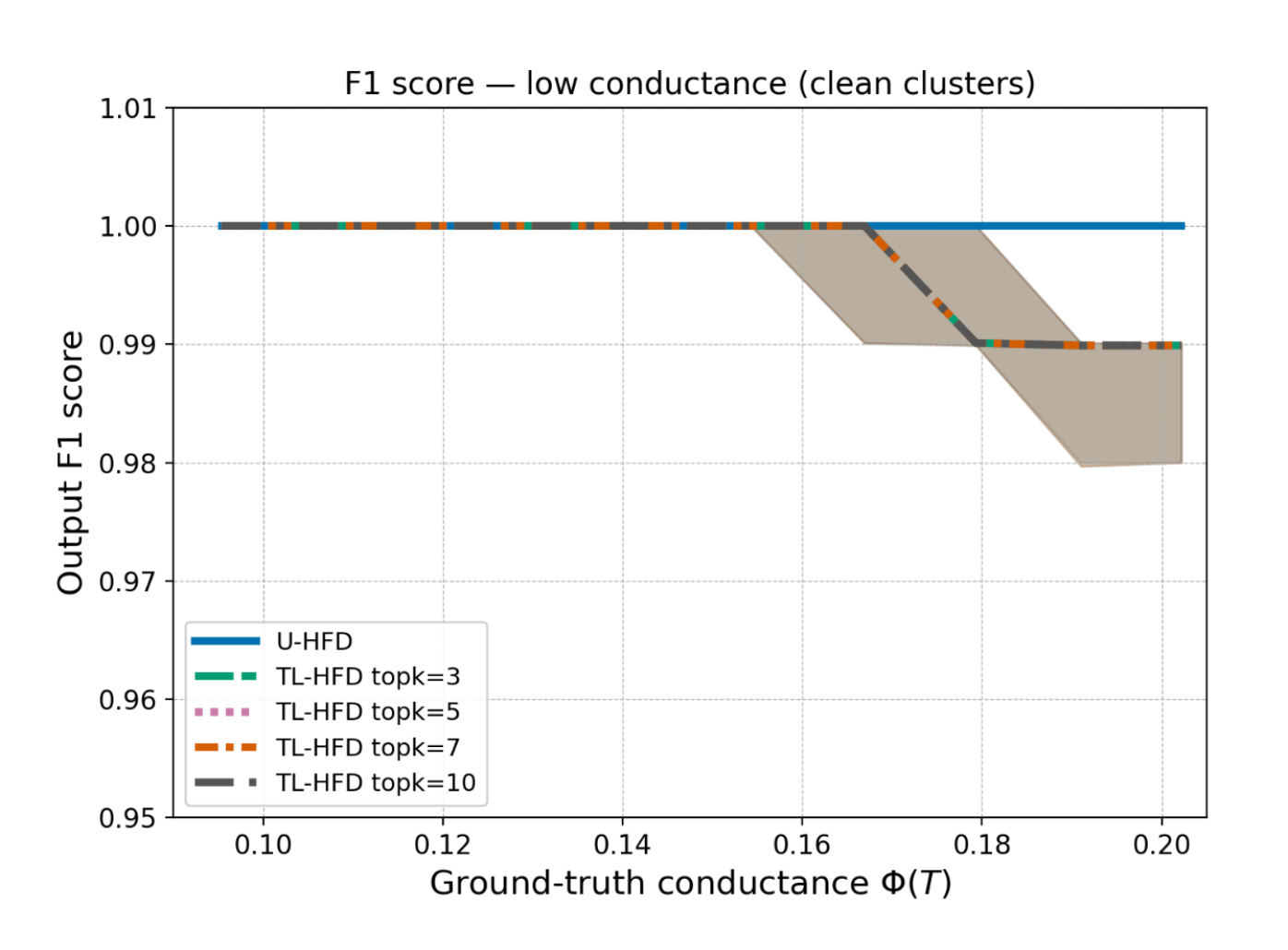}
\caption{Synthetic SBM: F1 score at low conductance ($\Phi(T) \in [0.10, 0.20]$, easy clusters). U-HFD achieves F1 $= 1.0$; TL-HFD with $k \in \{3, 5, 7, 10\}$ matches within $\pm 0.015$, indicating that top-$k$ thresholding incurs only a small cost when clusters are clean.}
\label{fig:sbm-f1-low}
\end{figure}

\begin{figure}[!htbp]
\centering
\includegraphics[width=0.65\linewidth]{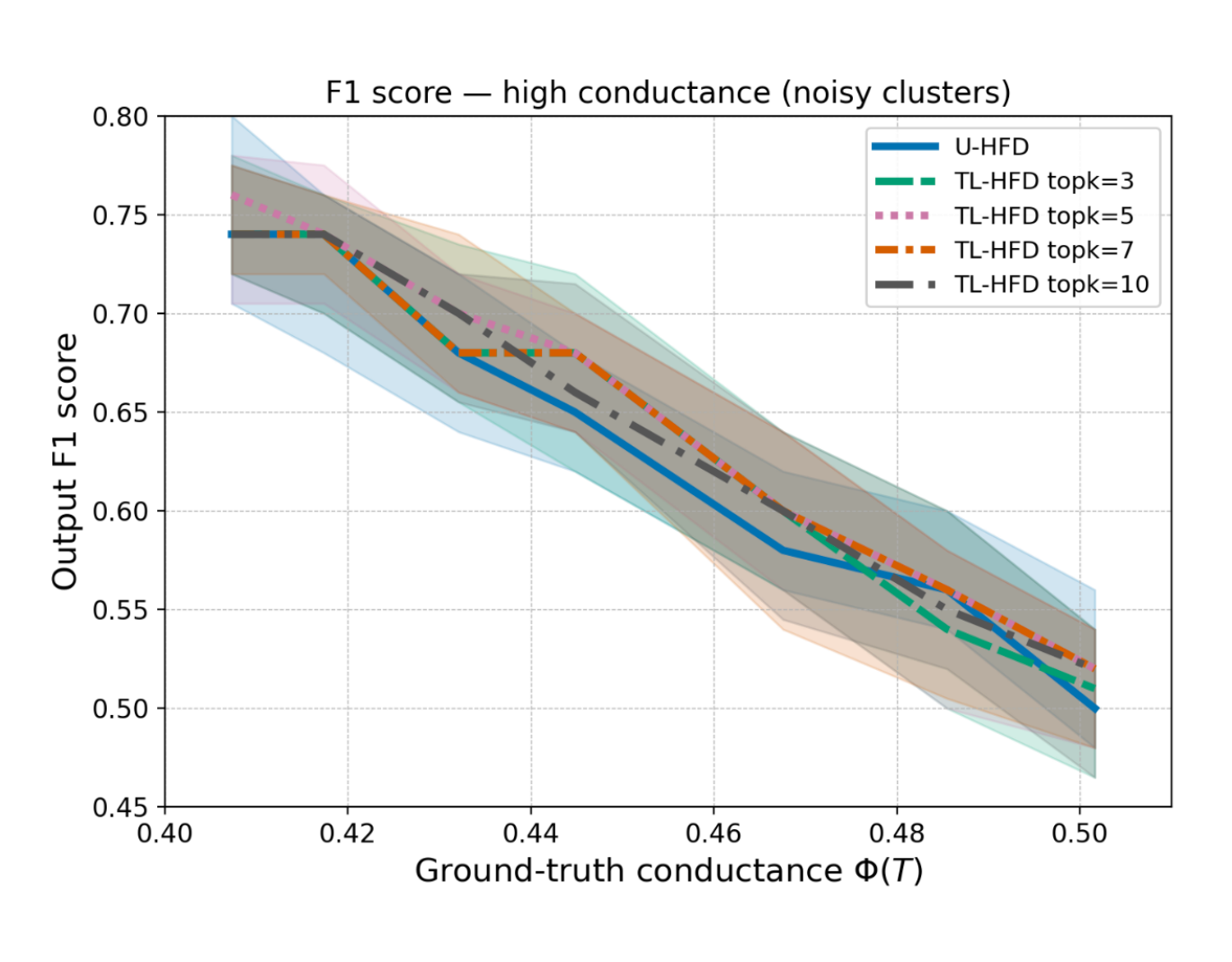}
\caption{Synthetic SBM: F1 score at high conductance ($\Phi(T) \in [0.40, 0.50]$, noisy clusters). TL-HFD overtakes U-HFD for $\Phi(T) \in [0.42, 0.47]$, with the largest gain at $\Phi(T) = 0.445$. Beyond $\Phi(T) \approx 0.49$, all methods degrade together as the planted structure weakens.}
\label{fig:sbm-f1-high}
\end{figure}

\begin{figure}[!htbp]
\centering
\includegraphics[width=0.65\linewidth]{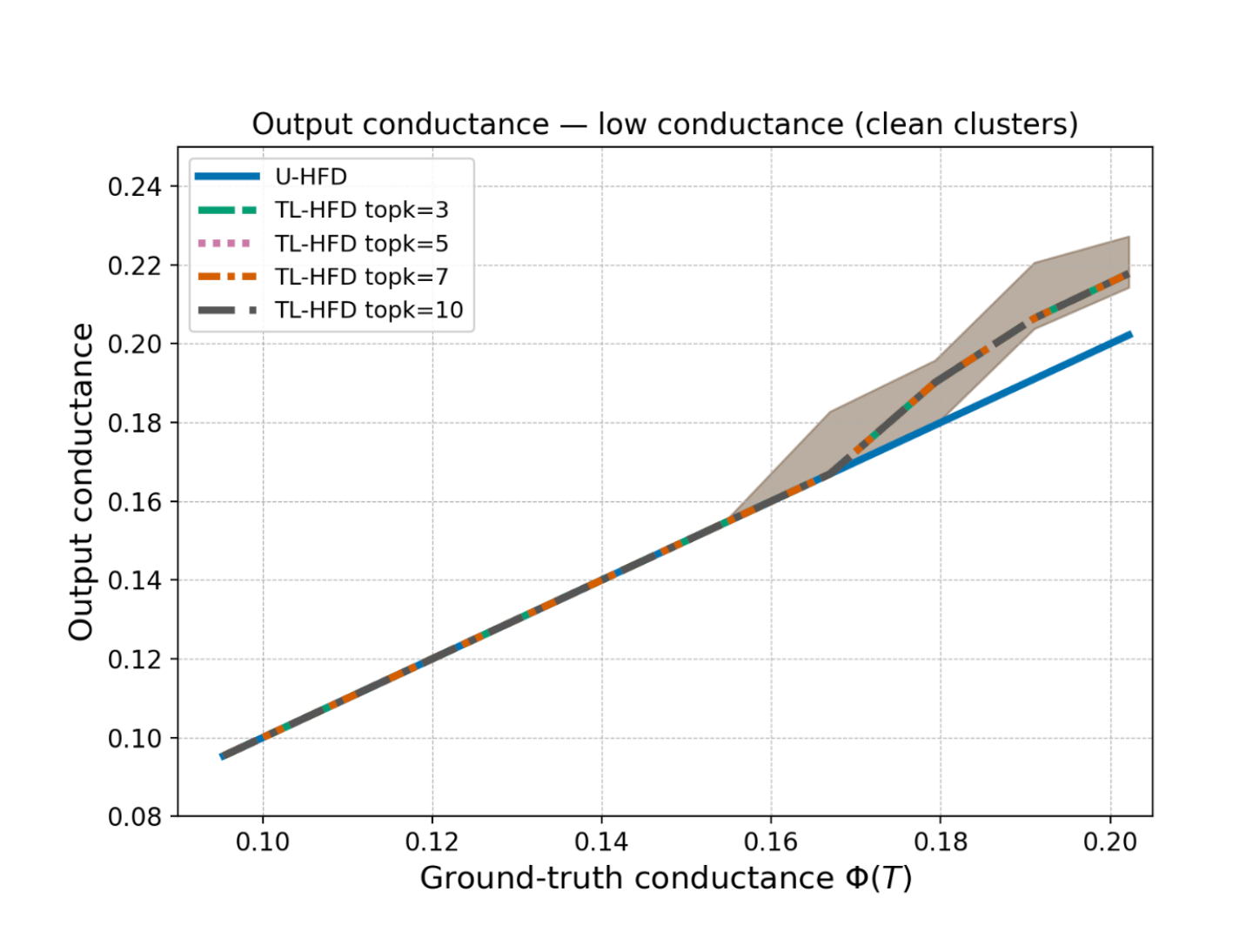}
\caption{Synthetic SBM: output conductance at low ground-truth conductance ($\Phi(T) \in [0.10, 0.20]$). All methods track the ground-truth conductance closely.}
\label{fig:sbm-cond-low}
\end{figure}

\begin{figure}[!htbp]
\centering
\includegraphics[width=0.65\linewidth]{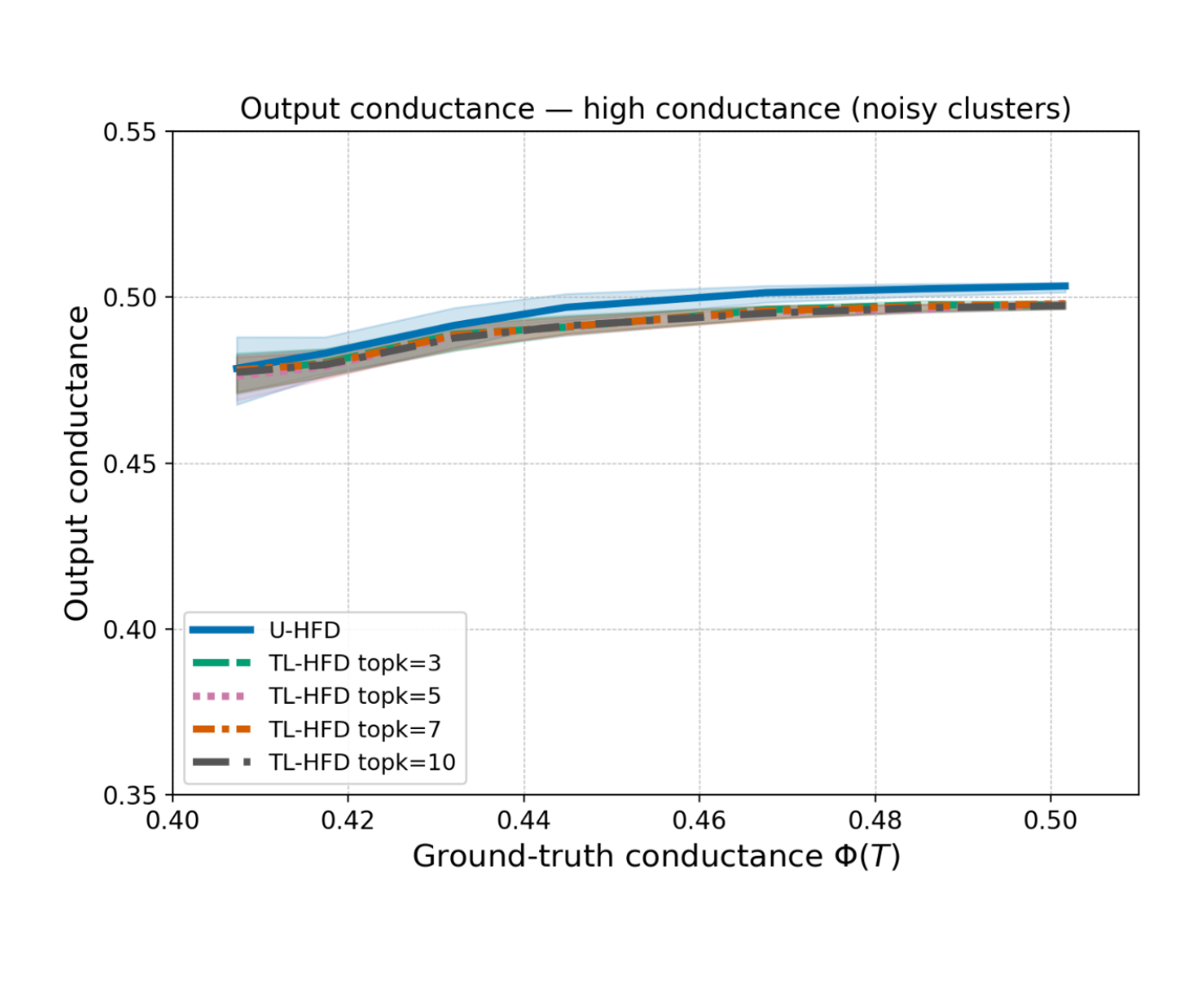}
\caption{Synthetic SBM: output conductance at high ground-truth conductance ($\Phi(T) \in [0.40, 0.50]$). TL-HFD produces slightly lower (better) conductance clusters than U-HFD in this regime.}
\label{fig:sbm-cond-high}
\end{figure}

\begin{figure}[t]
    \centering
    \includegraphics[width=\linewidth]{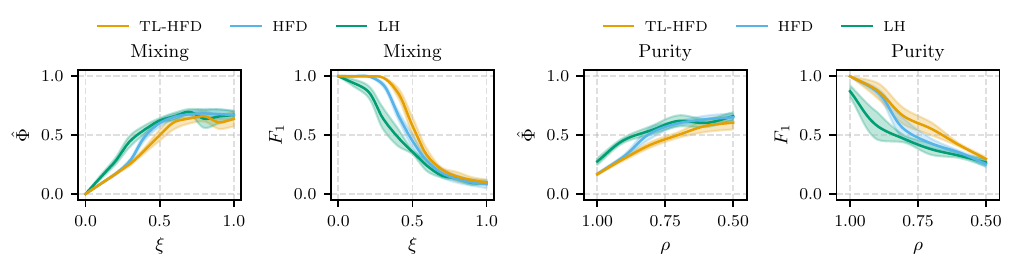}
    \caption{h-ABCD synthetic benchmark: TL-HFD vs.\ HFD and LH across two robustness sweeps. \textbf{Top row (mixing):} output conductance $\hat{\Phi}$ and F1 as a function of the mixing parameter $\xi$ with purity $\rho = 1$. At $\xi = 0$ all hyperedges are internal to a single community; at $\xi = 1$ all hyperedges are drawn uniformly from the full vertex set. \textbf{Bottom row (purity):} $\hat{\Phi}$ and F1 as a function of $\rho$ with mixing $\xi = 0.2$. At $\rho = 1$ each community edge is a clique within one community; as $\rho \to 0.5$, community edges contain only a bare majority of same-community nodes. Each panel shows medians across communities and $10$ independently generated hypergraphs, with shaded regions indicating confidence bands across the same trials. TL-HFD maintains higher F1 in the intermediate-mixing regime (top right) and shows a widening advantage as community edges become more heterogeneous (bottom right), consistent with the structural-commitment scoring's de-prioritization of boundary vertices that span communities.}

    \label{fig:habcd-mix-pur-unit}
\end{figure}

\subsection{Experiments on h-ABCD Synthetic Datasets}\label{app:synth}

\paragraph{Synthetic data generation.}
We use the h-ABCD benchmark~\citep{kaminski2023hypergraph}, the hypergraph extension of the ABCD random graph model~\citep{kaminski2021artificial}. We generate hypergraphs on $n = 5000$ nodes partitioned into communities whose sizes follow a $(1.9, 100, 500)$ truncated power-law distribution (TPD) with exponent $1.9$ and range $[100, 500]$. Each node has a degree drawn from a $(2.2, 3, 50)$-TPD, and hyperedge sizes follow a $(2.0, 3, 50)$-TPD, so edges range from small triangles to large group interactions.

We generate two kinds of hyperedges. \emph{Community edges} draw their
members primarily from a single community; \emph{background edges} draw
members uniformly from the full vertex set. The fraction of each node's
degree assigned to background edges is the \emph{mixing parameter}
$\xi \in [0, 1]$: at $\xi = 0$ all hyperedges are internal and community structure is perfect; at $\xi = 1$ all edges are background and the structure is erased.

The \emph{purity} parameter $\rho \in [0.5, 1]$ controls the homogeneity of community edges. A community edge of size $d$ has its number of within-community members drawn from $\mathrm{Binomial}(d, \rho)$, restricted to a strict majority $c > d/2$. At $\rho = 1$ each community edge is a clique within one community; as $\rho \to 0.5$ edges become increasingly mixed, containing only a bare majority of same-community nodes.

\paragraph{Protocol.}
For each community we randomly select one of its members as the seed.
Hyperparameters for each method are tuned per community via grid search
to minimize output conductance --- an unsupervised criterion matching
the protocol used for real-world datasets. All reported values are medians and confidence bands over communities across $10$ trials.

\paragraph{Robustness to cluster strength (Mixing).}
We vary $\xi$ uniformly over $11$ values in $[0, 1]$ while holding
$\rho = 1$. This sweep tests whether each method can recover a planted
community as the surrounding noise level increases: at low $\xi$ the
target is well-separated from the background, while at high $\xi$
community structure approaches uniform random. Figure~\ref{fig:habcd-mix-pur-unit} shows F1 and output conductance across the sweep. All methods recover the planted community in the low-noise regime; the methods diverge in the intermediate regime, where TL-HFD's structural-commitment scoring filters boundary vertices whose incidences are dominated by background edges. In the high-noise regime, no local method can recover the community, and all methods degrade together.

\paragraph{Robustness to hyperedge heterogeneity (Purity).}
We fix $\xi = 0.2$ and vary $\rho$ uniformly in $[0.5, 1.0]$. This sweep
isolates within-edge heterogeneity from cross-community noise: the total
background-edge mass is held fixed, but community edges become
progressively more mixed as $\rho$ decreases. Figure~\ref{fig:habcd-mix-pur-unit} shows F1 and output conductance across the sweep. The decrease in $\rho$ exposes genuine bridge vertices --- nodes that participate
heavily in mixed-membership hyperedges --- which are precisely the
boundary vertices that TL-HFD's $(d^{(t)}_{\text{in}}/d)^{\gamma}$
weighting (Algorithm~\ref{alg:pq-psgd-theory}, line~10) is designed to
de-prioritize. The widening gap between TL-HFD and the baselines as
$\rho \to 0.5$ is consistent with this mechanism: thresholded selection
avoids activating bridge vertices that other methods would include,
leading to lower output conductance and higher F1 on the recovered
cluster.

\subsection{Matched-Convergence and Support-Locality on Amazon-reviews}
\label{app:runtime-locality}

In this section we compare the computational behavior of TL-HFD and HFD on the
Amazon-reviews dataset using two diagnostics. First, we ask how many TL-HFD
iterations are needed before its returned cluster matches the final F1 score of
HFD. Second, we compare the size of the nonzero state maintained by each method
during the run. The latter should be interpreted as a support-locality measure,
not as a complete per-iteration runtime bound: TL-HFD also scores the current
boundary \(\partial A^{(t)}\), so the full scanned region is
\(L_t = A^{(t)} \cup \partial A^{(t)}\).

Throughout this section, the active set at iteration \(t\) denotes the set of
vertices on which the method stores nonzero state:
\begin{itemize}
    \item For HFD,
    \[
    A_t^{\mathrm{HFD}} = \{v \in \V : e_t(v) > 0\},
    \]
    where \(e_t(v)\) is the excess flow at vertex \(v\) after the \(t\)-th
    alternating-minimization step.

    \item For TL-HFD,
    \[
    A_t^{\mathrm{TL}} = \{v \in \V : x_t(v) > 0\},
    \]
    where \(x_t\) is the projected-subgradient iterate.
\end{itemize}

We also track the returned cluster at iteration \(t\), defined for both methods
as the sweep-cut set with minimum conductance observed up to iteration \(t\).
This is the cluster the method would output if stopped at iteration \(t\).

All results in this section use the nine Amazon categories from Section~\ref{sec:exp-amazon}. Each category is evaluated using \emph{$10$} random single-seed trials drawn from the same $\min(|\mathcal{T}|, 500)$-seed pool used in Table~\ref{tab:amazon-f1}. The smaller sample is sufficient for illustrating per-iteration convergence behavior; precise F1 estimates are reported in Table~\ref{tab:amazon-f1}. All other settings match the headline protocol: TL-HFD uses the conductance-selected fraction $f$ for each category from Table~\ref{tab:amazon-frac}, $\sigma = 10^{-4}$, and runs for $1000$ iterations; HFD is run for $200$ iterations (HFD typically converged in fewer than 200 iterations). We report
medians across seeds.

\subsubsection{Matched-Convergence Analysis}\label{app:analysis}

A direct comparison of total wall-clock time is not the most informative
comparison, because HFD uses a primal-dual alternating-minimization routine that
typically converges in few iterations, whereas TL-HFD uses projected
subgradient steps and is run for many more iterations. We therefore compare the
time required for TL-HFD to reach the final F1 score of HFD.

For each seed and category, let \(\mathrm{F1}^{\mathrm{HFD}}\) be the F1 score
of HFD's final returned cluster after \(200\) iterations. Let
\(\mathrm{F1}^{\mathrm{TL}}_t\) be the F1 score of TL-HFD's returned cluster
when stopped at iteration \(t\). We define
\[
t^\star
=
\min\{t : \mathrm{F1}^{\mathrm{TL}}_t \ge \mathrm{F1}^{\mathrm{HFD}}\},
\]
when such an iteration exists. The projected wall-clock time at \(t^\star\) is
estimated by linearly scaling the measured \(1000\)-iteration TL-HFD runtime:
\[
\mathrm{time}_{\mathrm{proj}}
=
\frac{t^\star}{1000}\cdot \mathrm{time}_{\mathrm{TL},1000}.
\]
This projection is a coarse estimate. The activated set grows monotonically, but the per-iteration cost is governed by the scanned region $L_t = A^{(t)} \cup \partial A^{(t)}$, which can fluctuate; we therefore treat the projection as an approximate  diagnostic.

Table~\ref{tab:matched-convergence} reports HFD's final F1 and wall-clock
time, TL-HFD's final F1 after \(1000\) iterations, the median \(t^\star\) over that matched 
 seeds, and the projected time at \(t^\star\).

\begin{table}[h]
\centering
\caption{Matched-convergence analysis on Amazon-reviews, computed over
$10$ single-seed trials per cluster (subsample of the $\min(|\mathcal{T}|, 500)$-seed
protocol used in Table~\ref{tab:amazon-f1}). $t^\star$ is the median first
iteration at which TL-HFD's returned cluster reaches HFD's final F1, taken
over seeds where this occurs within $1000$ iterations; $t^\star = \text{---}$
indicates TL-HFD never reached HFD's F1 on any seed. Projected time scales
TL-HFD's 1000-iteration wall-clock by $t^\star/1000$.}

\label{tab:matched-convergence}
\small
\begin{tabular}{l c c c c c c}
\toprule
Cluster & \(\delta\) & HFD F1 & HFD time (s) & TL F1
& \(t^\star\) & Projected (s)  \\
\midrule
Amazon Fashion          & 200 & 0.145 &  5.81 & 0.278 &  2  & 0.01  \\
All Beauty              & 200 & 0.061 &  7.16 & 0.087 &  7  & 0.12  \\
Appliances              & 200 & 0.476 &  1.09 & 0.648 & 10  & 0.01  \\
Gift Cards              & 200 & 0.811 &  5.57 & 0.887 &  7  & 0.11  \\
Magazine Subscriptions  & 200 & 0.710 &  6.09 & 0.777 & 11  & 0.10 \\
Industrial \& Scientific & 50 & 0.010 & 23.15 & 0.008 & --- & ---  \\
Luxury Beauty           &  50 & 0.046 & 12.79 & 0.132 & 10  & 0.91  \\
Prime Pantry            &  50 & 0.799 & 31.00 & 0.755 &  7  & 1.32 \\
Software                &  50 & 0.077 &  8.11 & 0.269 & 59  & 1.67 \\
\bottomrule
\end{tabular}
\end{table}

\paragraph{Findings.}
On six of the nine categories---Amazon Fashion, Appliances, Gift Cards,
Magazine Subscriptions, Luxury Beauty, and Software---TL-HFD reaches HFD's
final F1 on a majority of seeds, with median \(t^\star \le 59\) among matched
seeds. The projected wall-clock time is often one to two orders of magnitude
smaller than HFD's runtime. For example, on Gift Cards, TL-HFD
matches HFD's final F1 by iteration \(7\), corresponding to approximately
\(0.11\) seconds of projected wall-clock time, compared with \(5.57\) seconds
for HFD. Continuing TL-HFD beyond this point further improves the median F1 on
this category from \(0.811\) to \(0.887\). The remaining categories illustrate limitations of the top-\(k\) restriction.
Industrial \& Scientific is difficult for both methods (HFD F1 \(=0.010\)),
and TL-HFD performance is not competitive on this category (F1 \(=0.008\)), suggesting
the cluster lacks local structure that either method can recover. 

\subsubsection{Active-Set Growth}\label{app:active_set_growth}

We next compare the size of the nonzero state maintained by each method during
the run. For HFD, \citet{fountoulakis2021local} provide a sparsity guarantee
for the final active set, but this does not directly control the size of
intermediate active sets. In contrast, since all Amazon trials are single-seed
runs, TL-HFD activates at most \(k\) new boundary vertices per iteration, so
\[
|A_t^{\mathrm{TL}}| \le 1+k t,
\]
with equality only if every activated coordinate remains positive.

Table~\ref{tab:active-set} compares HFD's peak and final active-set sizes with
TL-HFD's final support size, reporting both absolute counts and ratios to the
target-cluster size \(|\mathcal T|\).

\begin{table}[h]
\centering
\caption{Active-set sizes across  HFD and  TL-HFD iterations
on Amazon-reviews, reported as medians across the $10$ seeds used in this
sub-experiment (cf.~Table~\ref{tab:matched-convergence}). Parentheses give the
ratio to the target cluster size \(|\mathcal T|\). \(|A_{\max}^{\mathrm{HFD}}|\) is the
largest HFD active set during the run, \(|A_T^{\mathrm{HFD}}|\) is HFD's final
active set, and \(|A_T^{\mathrm{TL}}|\) is TL-HFD's final support size. The
column \(k\) reports the per-iteration activation cap used by TL-HFD on each
category.}
\label{tab:active-set}
\small
\begin{tabular}{l c c c c c}
\toprule
Cluster & \(|\mathcal T|\) & \(|A_{\max}^{\mathrm{HFD}}|\)
& \(|A_T^{\mathrm{HFD}}|\) & \(|A_T^{\mathrm{TL}}|\) & \(k\) \\
\midrule
Amazon Fashion          &   31 &  3596 (\(116\times\)) &  3408 (\(110\times\)) &    24 (\(0.77\times\)) &   15 \\
All Beauty              &   85 &  4519 (\(53\times\))  &  3444 (\(41\times\))  &     5 (\(0.06\times\)) &    4 \\
Appliances              &   48 &   202 (\(4.2\times\)) &   195 (\(4.1\times\)) &    26 (\(0.54\times\)) &    5 \\
Gift Cards              &  148 &  2870 (\(19\times\))  &  1992 (\(13\times\))  &   106 (\(0.72\times\)) &   59 \\
Magazine Subscriptions  &  157 &  2274 (\(14\times\))  &  1456 (\(9.3\times\)) &   121 (\(0.77\times\)) &   69 \\
Industrial \& Scientific & 5334 & 14354 (\(2.7\times\)) & 12504 (\(2.3\times\)) &    73 (\(0.01\times\)) &   72 \\
Luxury Beauty           & 1581 &  7313 (\(4.6\times\)) &  7083 (\(4.5\times\)) &  2024 (\(1.28\times\)) & 1404 \\
Prime Pantry            & 4970 & 33759 (\(6.8\times\)) & 29405 (\(5.9\times\)) &  4980 (\(1.00\times\)) & 2622 \\
Software                &  802 &  4350 (\(5.4\times\)) &  4211 (\(5.3\times\)) &   371 (\(0.46\times\)) &  356 \\
\bottomrule
\end{tabular}
\end{table}

\paragraph{Findings.}
HFD's active set often expands far beyond the target cluster size during its
intermediate iterations. This effect is most pronounced on small target
categories: Amazon Fashion peaks at \(116\times |\mathcal T|\), All Beauty at
\(53\times |\mathcal T|\), and Gift Cards at \(19\times |\mathcal T|\). HFD's active set can
then shrink between its peak and final iteration as excess is redistributed,
but the final HFD active set remains larger than the target cluster on all nine
categories.

TL-HFD maintains substantially smaller nonzero supports. On seven of nine
categories, the final TL-HFD support is strictly below the target size. The two
exceptions are Luxury Beauty, where the final support is \(1.28\times |\mathcal T|\),
and Prime Pantry, where it is essentially equal to the target size
(\(4980\) versus \(4970\) vertices). These numbers should be interpreted as
support-locality measurements rather than full scanned-work measurements,
because TL-HFD still scores the current boundary at each iteration.

\paragraph{Discussion.}

The two diagnostics tell a consistent story. The matched-convergence
analysis shows that TL-HFD often reaches HFD-level F1 well before the end
of its $1000$-iteration run. The active-set analysis shows that it does so
while maintaining a much smaller nonzero state than HFD. On most
categories, the final TL-HFD support is below the target-cluster size; the
exceptions are Luxury Beauty (support $1.28\times |\mathcal{T}|$) and Prime
Pantry (support $\approx |\mathcal{T}|$).

These results highlight the tradeoff between the two solvers. HFD converges in
fewer iterations but can maintain large transient active sets. TL-HFD uses a
slower first-order method, but its top-\(k\) boundary activation directly
controls the growth of the maintained support. This support-locality property
does not by itself imply a full worst-case runtime bound, since each TL-HFD
iteration still scores the current boundary. Together, however, these
diagnostics indicate that on the Amazon categories where TL-HFD succeeds, its
top-\(k\) schedule achieves fast F1 progress while keeping the nonzero support
small.

\subsection{Sensitivity to the structural-commitment exponent $\gamma$}
\label{app:gamma-ablation}

The boundary score in Algorithm~\ref{alg:pq-psgd-theory} reweights the gradient push $\kappa(u)$ by a structural-commitment factor:
\begin{equation*}
s^{(t)}(u) \;=\; \kappa^{(t)}(u)\left(\frac{d_{\text{in}}^{(t)}(u)}{d_u}\right)^{\!\gamma},
\end{equation*}
where the exponent $\gamma$ controls how strongly the score favors boundary vertices that are well attached to the current active region. The Amazon-reviews results in Section~\ref{sec:exp-amazon} use $\gamma = 0.5$ as it provide the lowest conductance. This appendix studies how the cluster-recovery quality of TL-HFD varies with $\gamma$.

\paragraph{Setup.} We sweep $\gamma \in \{0.25, 0.50, 0.75, 1.00\}$ on the nine Amazon-reviews clusters of Section~\ref{sec:experiments}. Each ablation run uses $n=50$ random target seeds per cluster and the same fraction grid as Table~\ref{tab:amazon-frac}. All other hyperparameters ($\sigma$, $\delta_{\exp}$, iteration budget, cut-cost) are unchanged. For each cluster and each $\gamma$, we report the F1 and output conductance at the fraction $f$ selected by minimum output conductance over the sweep, with ties broken by the smallest $f$. We analyze sensitivity in relative terms with respect to $\gamma = 0.5$ (for the ablation run), defining
\begin{equation*}
\Delta_{F1}(\gamma, c) \;=\; \frac{F1(\gamma, c) - F1(0.5, c)}{F1(0.5, c)}, \qquad
\Delta_{\Phi}(\gamma, c) \;=\; \frac{\Phi(\gamma, c) - \Phi(0.5, c)}{\Phi(0.5, c)}.
\end{equation*}

\paragraph{Relative F1 change.} Table~\ref{tab:gamma-f1} reports $\Delta_{F1}$ relative to $\gamma = 0.5$. Industrial \& Scientific has $F1(\gamma) < 0.04$ at every $\gamma$; we mark its row degenerate, since relative changes amplify near-zero F1 values without conveying meaningful sensitivity.

\begin{table}[h]
\centering
\caption{Amazon-reviews: relative F1 change vs. $\gamma = 0.5$, computed at the conductance-selected operating point of each $\gamma$. ``Range'' is the spread of $\Delta_{F1}$ across the four $\gamma$ values.}
\label{tab:gamma-f1}
\begin{tabular}{lrrrr}
\toprule
Cluster & $\Delta_{F1}(0.25)$ & $\Delta_{F1}(0.75)$ & $\Delta_{F1}(1.00)$ & Range \\
\midrule
Fashion & $-7.6\%$ & $0.0\%$ & $0.0\%$ & $7.6\%$ \\
All Beauty & $+183.5\%$ & $+7.2\%$ & $+237.1\%$ & $244.7\%$ \\
Appliances & $-4.5\%$ & $-4.5\%$ & $-4.5\%$ & $4.5\%$ \\
Gift Cards & $-2.6\%$ & $+1.4\%$ & $+3.2\%$ & $5.8\%$ \\
Magazine & $-0.4\%$ & $+7.9\%$ & $+12.0\%$ & $12.4\%$ \\
Indust./Sci. & \multicolumn{3}{c}{degenerate ($F1 < 0.04$ at all $\gamma$)} & --- \\
Lux. Beauty & $-70.6\%$ & $-57.3\%$ & $+57.6\%$ & $128.2\%$ \\
Pantry & $-1.5\%$ & $+34.9\%$ & $+103.8\%$ & $ 138.7\%$ \\
Software & $3.1\%$ & $-1.8\%$ & $-5.6\%$ & $8.7\%$ \\
\bottomrule
\end{tabular}
\\[2pt]
\end{table}

\paragraph{Sensitivity classes.} The two tables together induce a natural grouping of clusters by their $\gamma$-response:

\begin{itemize}
    \item \textbf{Class A ($\gamma$-robust).} Fashion, Appliances, Gift Cards, and Software exhibit relative F1 ranges below $10\%$ across the entire $\gamma$ grid. The conductance-selected operating point and the resulting clusters are largely insensitive to $\gamma$ on these instances.
    
    \item \textbf{Class B (monotone in $\gamma$).} Magazine, and Pantry, with both F1 and $\Phi$ improving toward larger $\gamma$. The structural-commitment factor de-prioritizes boundary vertices weakly attached to the active region, and these clusters benefit from that filtering at every step of the grid.
    
    \item \textbf{Class C (non-monotone).} All Beauty and Lux. Beauty exhibit the largest F1 ranges in Table~\ref{tab:gamma-f1} ($244.7\%$ and $128.2\%$, respectively) with non-monotone shapes. Both clusters have a local F1 minimum at or near $\gamma = 0.5$. We also observed that the corresponding conductance values are non-monotone, indicating that the F1 swings reflect genuine changes in the conductance landscape rather than seed-sampling noise alone.

    \item \textbf{Class D (degenerate).} Indust./Sci.\ has $F1(\gamma) < 0.04$ at every $\gamma$. Its conductance trend is approximately monotone, but its F1 is too small for relative changes to be meaningful.
\end{itemize}

\paragraph{Interpretation.} The ablation supports the following  observations.

First, on the four Class A clusters, $\gamma$ is effectively a free parameter: the conductance-selected operating point and the resulting F1 are stable within the noise floor across a $4\times$ range of $\gamma$. The value $\gamma = 0.5$ is a defensible default for these clusters.

Second, the structural-commitment factor and the conductance-based selection rule do interact. On Class B and Class C clusters, varying $\gamma$ changes both the score ranking of boundary vertices and the operating point that the selection rule lands on. The two effects compound: the same $\gamma$ that produces a different score ordering also induces a different conductance landscape over the f-grid, and the rule then selects a different f. This is visible in the non-monotone Class C entries, where small changes in $\gamma$ produce large changes in both $F1$ and conductance.

\paragraph{Compute resources.} All experiments were run on a single core of a 12-core Apple M4 Pro processor with 24 GB of RAM. No GPU acceleration was used; both TL-HFD and all baselines are implemented in Julia and executed sequentially.

\newpage

\end{document}